\documentclass[11pt]{article}
\usepackage{amsmath}
\usepackage{graphicx}
\usepackage{natbib}
\setcitestyle{numbers,square,comma}
\usepackage{url} 
\usepackage[table,x11names]{xcolor}
\usepackage{graphicx}
\usepackage{longtable}
\usepackage{float}
\usepackage{forest}
\usepackage{setspace}
\usepackage{wrapfig}
\usepackage{soul}
\usepackage{amsmath}
\usepackage{mathtools}
\usepackage{textcomp}
\usepackage{marvosym}
\usepackage{wasysym}
\usepackage{latexsym}
\usepackage{amssymb}
\usepackage{hyperref}
\usepackage{graphicx}
\usepackage{subcaption}
\usepackage{longtable}
\usepackage{float}
\usepackage{array}
\usepackage{algpseudocode}
\usepackage{algorithm}
\usepackage{algpseudocode}
\usepackage{comment}
\usepackage{dsfont}
\usepackage{float}
\usepackage{multirow}
\usepackage{comment}
\newcommand{\blind}{0}

\input{./Definitions}

\begin{document}


\if0\blind
{
  \title{\bf Sparse Separable Factor Analysis in the Complex Domain with an Application to Local Field Potential Data}
  \author{Ian Hultman \hspace{.2cm}\\
    Department of Statistics and Actuarial Science, University of Iowa\\
    and \\
    Kirtikanth Kalapatapu  \hspace{.2cm}\\
    Department of Statistics and Actuarial Science, University of Iowa\\
    and \\
    Yassine Filali  \hspace{.2cm}\\
    Department of Molecular Physiology and Biophysics, University of Iowa\\
    and \\
    Rainbo Hultman  \hspace{.2cm}\\
    Department of Molecular Physiology and Biophysics, University of Iowa\\
    and \\
    Sanvesh Srivastava\thanks{Email: sanvesh-srivastava@uiowa.edu}  \hspace{.2cm} \\
    Department of Statistics and Actuarial Science, University of Iowa
  }
  \maketitle
} \fi

\if1\blind
{
  \bigskip
  \bigskip
  \bigskip
  \begin{center}
    {\LARGE\bf Title}
\end{center}
  \medskip
} \fi

\bigskip
\begin{abstract}
  Complex-valued arrays arise in signal processing, where scientific interpretation depends on retaining amplitude and phase information. Existing covariance estimation methods either ignore the multiway organization of such data or rely on real-domain embeddings that do not directly exploit their complex structure. We develop sparse separable factor analysis (SSFA), a latent factor model for complex-valued arrays with a separable covariance structure across modes. Each mode-specific covariance matrix is modeled through a low-rank Hermitian factor structure and a diagonal residual covariance matrix. To obtain interpretable estimates, we impose elementwise lasso penalties on the complex loading matrices and estimate the SSFA parameters using a mode-wise parameter-expanded  expectation-maximization procedure. The resulting loading updates admit closed-form complex soft-thresholding solutions, which shrink the modulus of each loading while preserving its phase. A separate balancing step resolves the scale nonidentifiability of the separable covariance structure. Simulation studies show that SSFA improves covariance estimation relative to vectorization-based methods, including complex principal component analysis. We apply SSFA to local field potential recordings from mice, where we compare separability structures induced by different groupings of brain region, frequency, and time and perform model-based imputation of recordings missing because of electrode misplacement.
\end{abstract}

\noindent%
{\it Keywords:} complex-valued arrays; separable covariance; factor analysis; sparse loadings; parameter-expanded expectation-maximization; local field potentials.
\vfill

\newpage

\section{Introduction}
\label{sec:intro}

Complex-valued arrays arise in many scientific applications. In neuroscience, for example, Fourier transforms of multichannel electrical recordings produce complex-valued arrays indexed by channel, frequency, and time. A key challenge is to estimate their covariance while preserving the multiway organization and amplitude-phase dependence. Factor models provide a natural framework for low-rank covariance estimation, while sparse extensions improve interpretability by shrinking some loading entries exactly to zero \citep{carvalho2008high,fruhwirth2010parsimonious}. Separable factor analysis (SFA) imposes a Kronecker covariance structure across modes, but existing formulations are limited to real-valued data and dense loading matrices \citep{fosdick2014separable}. Existing sparse factor models are also restricted to real-valued vectors \citep{rovckova2016fast,Srietal17,fruhwirth2025sparse}. We address this gap by developing sparse separable factor analysis (SSFA) for complex-valued arrays, with sparse mode-specific loading matrices estimated using a lasso-regularized parameter-expanded expectation-maximization (PX-EM) procedure.

The complex representation is important in electrophysiology and spectroscopy. Local field potential (LFP) and electroencephalogram (EEG) data are commonly analyzed in the frequency domain, where each frequency component is characterized by its amplitude and phase \citep{brillinger2001time,urban2023oscillating}. Latent dependence may involve only subsets of brain regions and frequency bands, motivating sparse mode-specific loading matrices \citep{Galetal17,Taletal23,el2025methods}. These loadings identify the brain regions and frequency bands that contribute to dependence and yield parsimonious low-rank covariance estimates. Retaining the complex-valued representation preserves amplitude and phase information, whereas real-domain embeddings do not directly exploit this structure and can reduce interpretability.

Estimating the covariance structure of complex-valued arrays presents several challenges. First, applying principal component analysis (PCA) to vectorized arrays produces low-rank covariance estimates that do not exploit the multiway organization of the data and can be difficult to interpret \citep{horel1984complex,brillinger2001time}. Vectorization also increases the dimension of the loading vectors, which can lead to unstable estimation when the sample size is small relative to the array dimension. Second, imposing elementwise sparsity on complex-valued loading matrices requires care because each loading has a modulus and a phase. Penalizing its real and imaginary parts separately does not directly preserve this structure and can reduce interpretability. Finally, a separable covariance matrix is expressed as a Kronecker product of mode-specific covariance matrices, which are identifiable only up to multiplicative rescaling \citep{2011_Hoff}. Regularizing the mode-specific loading matrices therefore requires an identifiability constraint that fixes their relative scales.

Extensive literature exists on methods for modeling dependence among spectral features in electrophysiological and other multivariate time series. Classical approaches characterize linear spectral dependence under nonstationarity \citep{ombao2005slex,ombao2008evolutionary}, whereas more recent methods extend frequency-domain dependence measures to capture nonlinear relationships between oscillatory components \citep{pinto2023statistical,talento2024kencoh,redondo2025nonlinear}. These methods are principally designed to characterize frequency-domain dependence in multivariate time series. Our objective is related, but we instead develop a latent factor model for estimating separable covariance structures in complex-valued arrays using sparse mode-specific complex loading matrices. SSFA is not restricted to time-series applications and provides a general framework for structured covariance estimation in complex-valued arrays.

Embedding complex-valued data in the real domain provides one approach to covariance estimation. For example, the canonical block-skew-circulant (BSC) embedding represents a complex matrix $\Xb\in\CC^{n\times p}$ by $\Phib_{\Xb}\in\RR^{2n\times 2p}$, whose two row blocks are $[\operatorname{Re}(\Xb)\;\;-\operatorname{Im}(\Xb)]$ and $[\operatorname{Im}(\Xb)\;\;\operatorname{Re}(\Xb)]$, respectively \citep{hellings2015block}. Real-valued covariance estimation methods can then be applied to $\Phib_{\Xb}$. Methods such as PCA preserve the BSC structure, so the low-rank covariance estimate can be mapped back to a valid complex-valued covariance matrix \citep{hellings2015two}. This property need not be retained under sparse loading estimation, however, because penalizing the real and imaginary components separately can distort the modulus and phase of a complex loading. SSFA instead estimates and regularizes the loading matrices directly in the complex domain.

Complex-valued extensions of factor analysis and independent component analysis (ICA) have been developed for vector-valued data \citep{sardarabadi2017complex,urban2023oscillating}, but extending complex factor analysis to sparse array-valued settings is not straightforward. Complex ICA instead represents $\xb\in\CC^p$ as $\xb=\Ab\zb$, where $\zb\in\CC^p$ contains independent non-Gaussian latent sources and $\Ab\in\CC^{p\times p}$ is an unknown mixing matrix. Real-valued ICA methods use measures of non-Gaussianity, including kurtosis and mutual information \citep{hyvarinen2000independent}, with analogous methods available for complex-valued data \citep{bingham2000fast,eriksson2006complex}. Complex factor analysis and ICA are designed for vector-valued observations. Moreover, complex PCA and factor analysis do not impose sparsity on the loading matrices, and complex ICA does not impose sparsity on the mixing matrix. SSFA combines complex-valued factor modeling, array structure, and sparse mode-specific loading estimation within a single framework.

SSFA is built on an array-valued extension of the proper complex multivariate normal distribution \citep{And95,picinbono1996second,2011_Hoff,urban2023oscillating}. In the zero-mean Gaussian setting, properness implies circular symmetry and allows the dependence structure to be characterized by a Hermitian covariance matrix; see Section~\ref{sec:comp-array}. The complex array normal model specifies a separable covariance structure across modes. SSFA sets the mean array to zero and represents each mode-specific covariance matrix using a low-rank Hermitian factor-analytic structure with sparse complex loading matrices. Its likelihood preserves the multiway structure of the data and permits candidate separability structures to be compared through different vectorizations or matricizations. By estimating the parameters directly in the complex domain, SSFA avoids real-domain embeddings and preserves amplitude and phase information.

Separable factor analysis (SFA) has been developed for real-valued arrays, but existing methods estimate dense mode-specific loading matrices using a standard EM algorithm \citep{fosdick2014separable}. We develop a lasso-regularized PX-EM procedure for estimating sparse mode-specific loading matrices directly in the complex domain. Because the mode-specific covariance matrices are identifiable only up to multiplicative rescaling, we introduce a balancing condition that fixes their relative scales and permits comparable regularization across modes. The PX-EM working model assigns a nonidentity covariance matrix to the latent factors, enlarging the parameter space relative to the standard SFA formulation. This PX-EM formulation can facilitate faster convergence \citep{liu1998parameter} and includes real-valued SFA as a special case.

The PX-EM procedure has a tractable mode-wise structure. Each mode-specific update whitens all other modes, reducing the array problem to a complex-valued matrix factor-analysis problem. This avoids direct estimation of the full vectorized covariance matrix. The conditional update for the penalized loading matrix admits a closed-form complex soft-thresholding solution. We also construct a common regularization grid across modes. The loading update exactly minimizes its conditional penalized surrogate, whereas the residual covariance matrix is updated by a plug-in rule. We use the plug-in update because it produced more stable sparse solutions and rank selection in our numerical experiments.

SSFA has several practical advantages for analyzing LFP data. First, recordings from some brain regions may be unavailable because of electrode misplacement. SSFA uses animals with complete recordings to impute these missing observations while accounting for latent dependence among brain regions. This provides a model-based alternative to imputing each missing recording using the mean or median of the observed recordings from that region. Second, SSFA enables comparison of candidate separability structures involving brain region, frequency, time, and combinations of these modes. In our application, the matricization that combines brain region and frequency into one mode and retains time as the other yields the most interpretable dependence structure. Finally, we use SSFA to address scientific questions about frequency-specific dependence among brain regions in mice. Across a range of simulation settings, SSFA also outperforms vectorization-based methods, including complex PCA.

In summary, our contributions are threefold. First, we develop a complex-valued factor analysis model and extend it to sparse separable factor analysis for complex-valued arrays (Section~\ref{sec:vec-fct-mdl}). Second, we develop a lasso-regularized PX-EM procedure with a closed-form sparse loading update, a plug-in residual covariance update, and an identifiability-preserving balancing condition (Section~\ref{sec:ssfa}). Third, we demonstrate the practical utility of SSFA through simulation studies and an analysis of LFP data, including covariance-structure comparison and model-based imputation of missing recordings (Sections~\ref{sec:sim_analysis} and~\ref{sec:real_analysis}).

\section{Motivating application}
\label{sec:motivating-data}

\subsection{Data and methods}

We motivate the proposed methodology using brain-wide LFP recordings from mice. Electrodes implanted in multiple brain regions record electrical activity under different experimental conditions. Applying the fast Fourier transform yields complex coefficients indexed by brain region, frequency, and time window. A key scientific goal is to characterize frequency-specific dependence between brain regions and its evolution over time. Because this dependence may be driven by a smaller number of latent mechanisms, factor models are well suited for this analysis.

A common approach converts the Fourier coefficients into real-valued summaries before fitting a factor model. These summaries typically include frequency-specific power within each brain region and coherence between pairs of brain regions. Power measures the energy of a signal at a given frequency, whereas coherence measures the strength of linear association between two signals at that frequency. These feature vectors can be high-dimensional. For example, recordings from $13$ brain regions at $100$ frequencies produce $\bigl(13+\binom{13}{2}\bigr)\times100=9100$ real-valued power and coherence features for each time window.

We instead model the Fourier coefficients directly. This preserves their complex-valued representation and avoids separately constructing amplitude and dependence summaries. It also provides a more parsimonious representation. In the preceding example, each time window contains $13\times100=1300$ complex-valued Fourier coefficients. More importantly, a complex-valued factor model directly estimates the low-rank covariance structure across brain regions and frequencies. The evolution of this dependence over time can then be modeled under different separability assumptions; Section~\ref{sec:comp-array} introduces the corresponding models and estimation procedures.

A covariance estimate from a complex-valued factor model yields estimates of the coherence and phase-offset matrices. Let $\widehat{\Sigmab}$ denote the estimated covariance matrix and define $\widehat{\Rb} = \bigl(\widehat{\Sigmab}\circ\Ib \bigr)^{-1/2} \widehat{\Sigmab} \bigl(\widehat{\Sigmab}\circ\Ib \bigr)^{-1/2}$, where $\circ$ is the Hadamard product. The magnitude coherence and phase-offset matrices are $|\widehat{\Rb}|$ and $\arg(\widehat{\Rb})$, respectively, with both operations applied elementwise. We refer to magnitude coherence simply as coherence. The off-diagonal coherence values lie in $[0,1]$, with larger values indicating stronger linear associations. Phase offsets lie in $(-\pi,\pi]$ and describe relative phase, with lead--lag interpretations depending on the cross-spectrum convention. Phase offsets should also be interpreted jointly with coherence because they are less informative when the corresponding coherence is small.

\begin{figure}[htbp]
  \centering
  \includegraphics[
    width=\textwidth,
    height=0.9\textheight,
    keepaspectratio
  ]{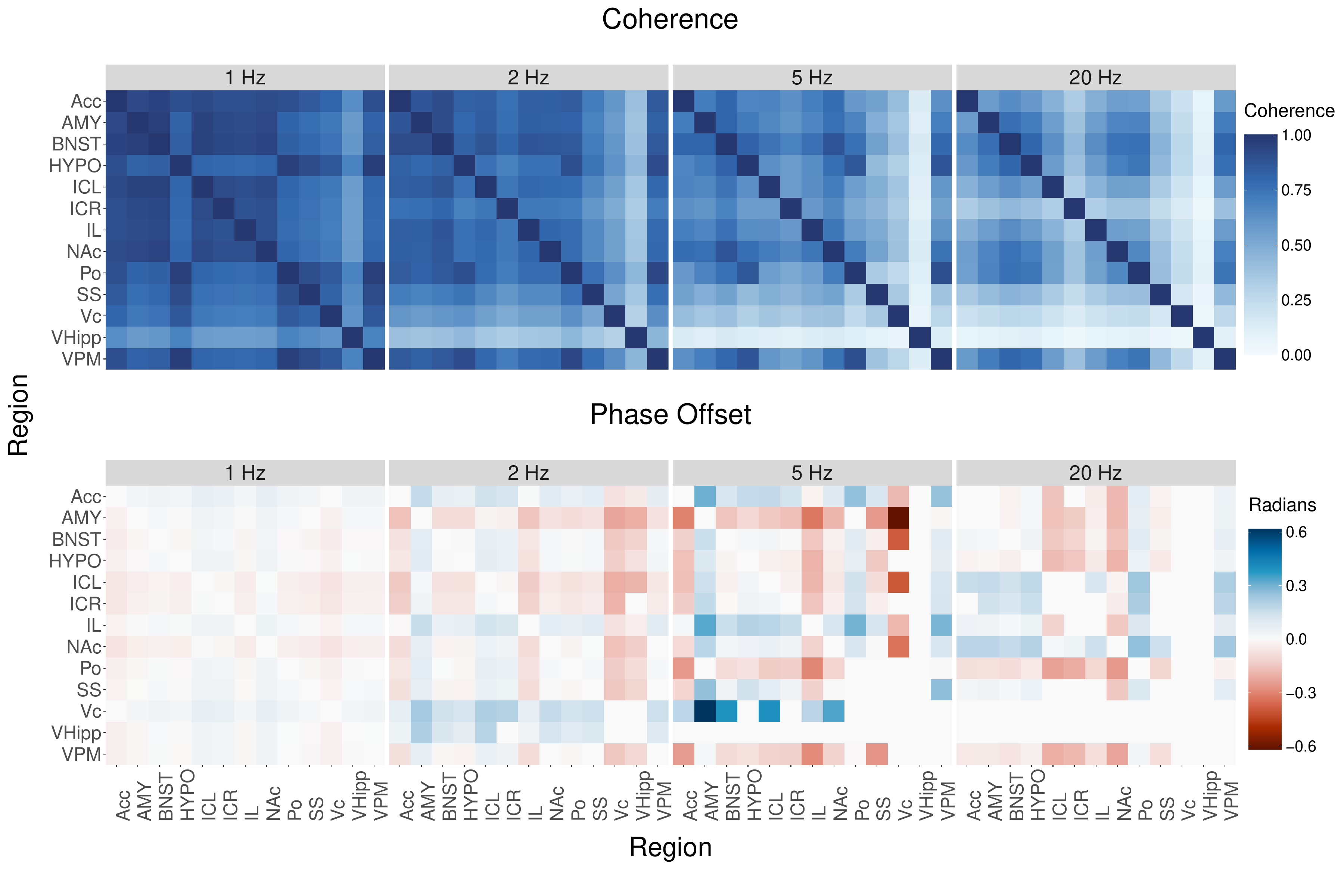}
  \caption{Estimated coherence matrices (top panels) and phase-offset matrices (bottom panels) at 1, 2, 5, and 20 Hz from the SSFA model fitted to the full 600-second time series in Section~\ref{sec:real_data_covariance_analysis}.}
  \label{fig:example_coherence_phase}
\end{figure}

\begin{figure}[htbp]
  \centering
  \includegraphics[
    width=\textwidth,
    height=0.9\textheight,
    keepaspectratio
  ]{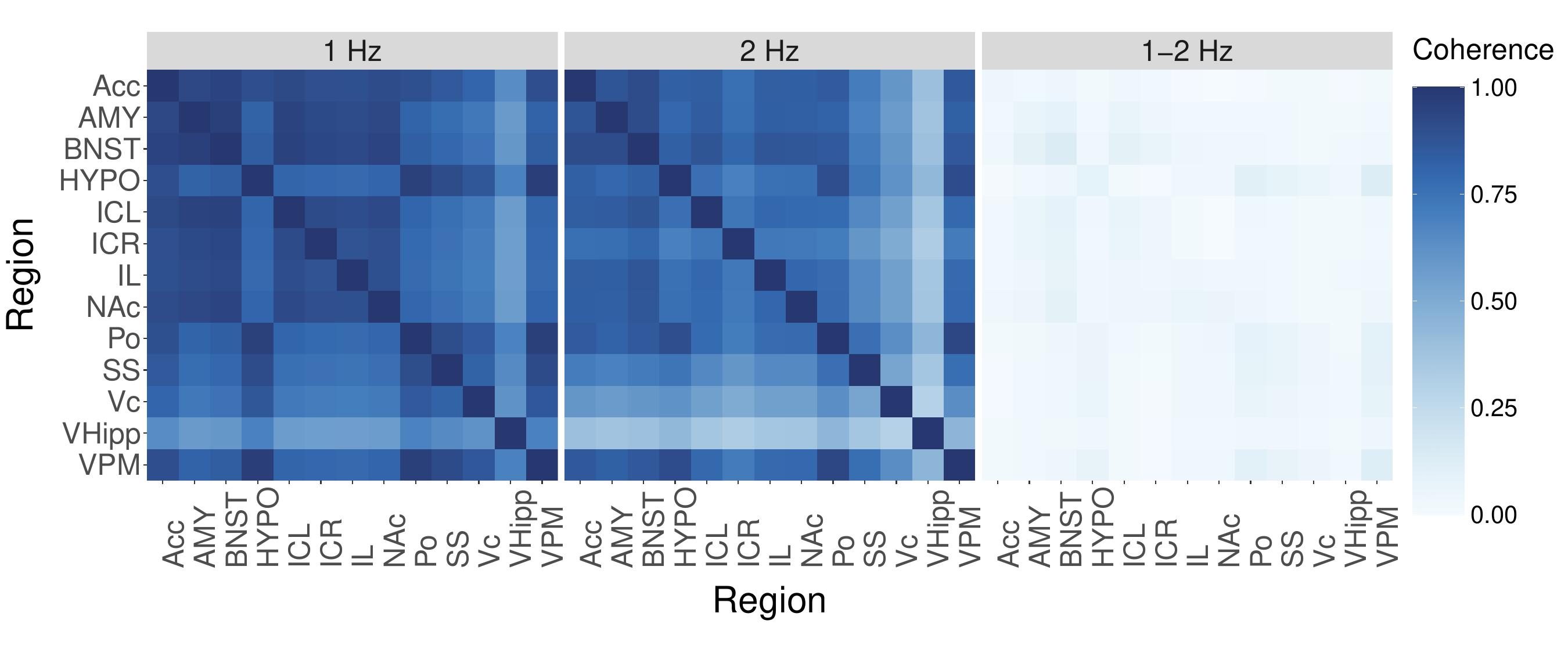}
  \caption{Estimated coherence matrices at 1 and 2 Hz (first two panels) and the estimated canonical cross-coherence matrix between 1 and 2 Hz (third panel) from the SSFA model fitted to the full 600-second time series in Section~\ref{sec:real_data_covariance_analysis}.}
  \label{fig:example_cross_coherence}
\end{figure}

\subsection{Application}

We use the fitted SSFA model to preview three aspects of the LFP application: within-frequency dependence, cross-frequency dependence, and imputation. Figure~\ref{fig:example_coherence_phase} shows the estimated coherence $|\widehat{\Rb}|$ and phase-offset $\arg(\widehat{\Rb})$ matrices at $1$, $2$, $5$, and $20$ Hz for the anterior cingulate cortex (Acc), amygdala (AMY), bed nucleus of the stria terminalis (BNST), hypothalamus (HYPO), left and right insular cortices (ICL and ICR), infralimbic cortex (IL), nucleus accumbens (NAc), posterior thalamic nucleus (Po), somatosensory cortex (SS), visual cortex (Vc), ventral hippocampus (VHipp), and ventral posteromedial thalamic nucleus (VPM). At $2$ Hz, for example, the VHipp--AMY pair has low coherence but a relatively large phase offset. This offset is less informative than a phase offset of similar magnitude for the Acc--AMY pair, whose coherence is substantially larger. We examine the estimated factors, coherence, and phase structure in detail in Section~\ref{sec:real_data_covariance_analysis}.

SSFA also distinguishes within-frequency from cross-frequency dependence. For a second-order stationary multivariate time series, Fourier coefficients at distinct frequencies are asymptotically uncorrelated \citep{jentsch2015test}. We use this property as a qualitative benchmark. Figure~\ref{fig:example_cross_coherence} compares the estimated coherence at $1$ and $2$ Hz with the canonical cross-coherence between these frequencies. The substantially weaker cross-frequency coherence is consistent with the expected near-orthogonality of distinct frequencies. Because brain region and frequency are modeled jointly, this structure is estimated rather than imposed through a separate frequency-mode covariance.

The estimated covariance structure also supports model-based imputation of missing recordings. In LFP experiments, measurements from some brain regions may be missing because electrodes are occasionally misplaced and record from unintended locations. Restricting the analysis to a common set of observed brain regions discards either the affected region or the affected animal, resulting in information loss. Under the assumption that the recordings are missing completely at random, SSFA imputes the missing Fourier coefficients conditionally on the observed signals from the same animal. The imputed Fourier coefficients are transformed to reconstruct the corresponding time-domain signals. We evaluate the imputation performance of SSFA in Section~\ref{sec:realdata_imputation_analysis}.

\section{Methodology for complex-valued vector data}
\label{sec:vec-fct-mdl}

\subsection{Complex-valued factor analysis}
\label{sec:vec_model_derivation}

We begin with a complex-valued latent variable model that serves as the building block for the separable factor model developed later. Let $\xb\in\CC^p$, $\zb\in\CC^k$, and $\eb\in\CC^p$ denote the observed vector, latent factor, and residual error, respectively, where $k\ll p$. The complex factor model is
\begin{align}
  \label{eq:fact-mdl}
  \xb
  =
  \Lambdab\zb+\eb,
  \qquad
  \Lambdab\in\CC^{p\times k},
  \qquad
  \zb\sim\CC\Ncal_k(\zero,\Ib_k),
  \qquad
  \eb\sim\CC\Ncal_p(\zero,\Psib),
\end{align}
where $\zb$ and $\eb$ are independent, $\Lambdab$ is the loading matrix, and $\Psib$ is a diagonal residual covariance matrix with positive diagonal entries. The notation $\CC\Ncal_c(\mb,\Vb)$ denotes the proper complex normal distribution with mean $\mb$ and covariance matrix $\Vb$; its analytic form is given in Appendix~\ref{sec:c-vec-mat-norm}.

A general complex Gaussian distribution is characterized by its mean, covariance matrix, and pseudo-covariance matrix \citep{urban2023oscillating}. For a complex random vector $\xb$ with mean $\mb$, the covariance and pseudo-covariance matrices are
\begin{align*}
  \Vb
  =
  \EE
  \left[
    (\xb-\mb)(\xb-\mb)^\star
  \right],
\qquad
  \Cb
  =
  \EE
  \left[
    (\xb-\mb)(\xb-\mb)^\top
  \right],
\end{align*}
where $^\star$ denotes the complex conjugate-transpose operator, $\Vb$ is Hermitian and $\Cb$ is symmetric. We focus on zero-mean proper complex Gaussian distributions, for which $\Cb=\zero$. In this setting, properness implies circular symmetry, so $\xb$ and $e^{\mathrm{i}\theta}\xb$ have the same distribution for every $\theta\in\RR$ \citep{picinbono1996second}. This restriction is natural because global phase rotations are a special case of the unitary nonidentifiability of the factor representation. Proper complex Gaussian models have also been used in previous works \citep{And95}.

We express the complex factor model for $n$ independent observations in matrix-variate complex normal form. For a matrix $\Yb\in\CC^{c_1\times c_2}$, the notation $\Yb\sim\CC\Ncal_{c_1\times c_2}(\Mb,\Vb_1,\Vb_2)$ means that $  \operatorname{vec}(\Yb) \sim \CC\Ncal_{c_1c_2} \left(\operatorname{vec}(\Mb), \Vb_2\otimes\Vb_1 \right)$; its analytic form is given in Appendix~\ref{sec:c-vec-mat-norm}. Let $\xb_1,\ldots,\xb_n\in\CC^p$ denote the $n$ independent observations. After augmenting each observation $\xb_i$ with its latent factor $\zb_i\in\CC^k$, the complete data are $\{(\xb_i,\zb_i):i=1,\ldots,n\}$. Define
\begin{align}
  \label{eq:data-mat}
  \Xb
  &=
  [\xb_1,\ldots,\xb_n]
  \in\CC^{p\times n},
  &
  \Zb
  &=
  [\zb_1,\ldots,\zb_n]
  \in\CC^{k\times n},
  &
  \Eb
  &=
  [\eb_1,\ldots,\eb_n]
  \in\CC^{p\times n}.
\end{align}
Because the pairs $(\xb_i,\zb_i)$ are independent copies of $(\xb,\zb)$ in \eqref{eq:fact-mdl}, the complete-data model is
\begin{align}
  \label{eq:X_vec_dataset_distn}
  \Xb
  =
  \Lambdab\Zb+\Eb,
  \qquad
  \Zb
  \sim
  \CC\Ncal_{k\times n}
  \left(
    \zero,
    \Ib_k,
    \Ib_n
  \right),
  \qquad
  \Eb
  \sim
  \CC\Ncal_{p\times n}
  \left(
    \zero,
    \Psib,
    \Ib_n
  \right).
\end{align}
Marginalizing over $\Zb$ in \eqref{eq:X_vec_dataset_distn} gives
\begin{align}\label{eq:X_vec_marg_model}
  \Xb
  \sim
  \CC\Ncal_{p\times n}
  \left(
    \zero,
    \Sigmab,
    \Ib_n
  \right),
  \qquad
  \Sigmab
  =
  \Lambdab\Lambdab^\star+\Psib,
\end{align}
with parameter vector $\thetab=(\Lambdab,\Psib)$.

We develop a PX-EM algorithm for maximum likelihood estimation of $\thetab$. The working model replaces $\Lambdab$ and $\zb$ in \eqref{eq:fact-mdl} with $\widetilde{\Lambdab}$ and $\widetilde{\zb}$ and introduces a $k\times k$ Hermitian positive-definite working covariance matrix $\Gammab$ for the latent factors:
\begin{align}
  \label{eq:px-fct-mdl}
  \xb
  =
  \tilde{\Lambdab}\tilde{\zb}+\eb,
  \qquad
  \tilde{\Lambdab}\in\CC^{p\times k},
  \qquad
  \tilde{\zb}
  \sim
  \CC\Ncal_k
  \left(
    \zero,
    \Gammab
  \right),
  \qquad
  \eb
  \sim
  \CC\Ncal_p
  \left(
    \zero,
    \Psib
  \right).
\end{align}
Marginalizing over $\tilde{\zb}$ in \eqref{eq:px-fct-mdl} implies that the covariance matrix of $\xb$  is $\tilde{\Lambdab}\Gammab\tilde{\Lambdab}^\star+\Psib$. Let $\operatorname{chol}(\Ab)$ denotes the lower-triangular Cholesky factor satisfying $\Ab=\operatorname{chol}(\Ab)\operatorname{chol}(\Ab)^\star$. If $\Rb=\operatorname{chol}(\Gammab)$, then the parameter-reduction map is $  \Lambdab = \tilde{\Lambdab}\Rb$. This map preserves the observed covariance matrix and returns the working model to the original parameterization in \eqref{eq:fact-mdl}. For the $n$ observations, the PX-EM complete-data working model is
\begin{align}
  \label{eq:X_vec_PXEM_dataset_distn}
  \Xb
  =
  \tilde{\Lambdab}\tilde{\Zb}+\Eb,
  \qquad
  \tilde{\Zb}
  \sim
  \CC\Ncal_{k\times n}
  \left(
    \zero,
    \Gammab,
    \Ib_n
  \right),
  \qquad
  \Eb
  \sim
  \CC\Ncal_{p\times n}
  \left(
    \zero,
    \Psib,
    \Ib_n
  \right).
\end{align}
The working parameter vector is $\tilde{\thetab} =(\tilde{\Lambdab},\Gammab,\Psib)$.

Each PX-EM iteration begins by computing the conditional moments of the latent factors under the current working parameter estimates. Let $\widetilde{\thetab}^{(t)} = (\widetilde{\Lambdab}^{(t)},\Gammab^{(t)},\Psib^{(t)})$ denote the working parameter estimate at the end of iteration $t$. The E-step computes
\begin{align}
  \label{eq:PX-EM_expected_z_given_x}
  \tilde{\Zb}^{(t)}
  =
  \EE
  \left(
    \tilde{\Zb}
    \mid
    \Xb,\tilde{\thetab}^{(t)}
  \right),
  \qquad
  \widetilde{\Sbb}_{zz}^{(t)}
  =
  \EE
  \left(
    \tilde{\Zb}\tilde{\Zb}^\star
    \mid
    \Xb,\tilde{\thetab}^{(t)}
  \right).
\end{align}
Detailed derivations and the analytic expressions of these conditional moments are provided in Proposition \ref{prop:pxem_posterior_moments}.

The M-step is implemented through two conditional maximization (CM) steps. The first CM-step holds $\Psib=\Psib^{(t)}$ fixed and updates the working latent covariance matrix and loading matrix as
\begin{align}
  \label{eq:main-2}
  \Gammab^{(t+1)}
  =
  \frac{1}{n}
  \widetilde{\Sbb}_{zz}^{(t)},
  \qquad
  \widetilde{\Lambdab}^{(t+1)}
  =
  \Xb
  \widetilde{\Zb}^{(t)\star}
  \left(
    \widetilde{\Sbb}_{zz}^{(t)}
  \right)^{-1}.
\end{align}
Let $\widetilde{\Hb}_{zz}^{(t)} = \operatorname{chol} \left(\widetilde{\Sbb}_{zz}^{(t)} \right)$. Applying the parameter-reduction map yields the unpenalized reduced loading estimate
\begin{align}
  \Ab^{(t)}
  =
  \widetilde{\Lambdab}^{(t+1)}
  \operatorname{chol}
  \left(
    \Gammab^{(t+1)}
  \right)
  =
  \frac{1}{\sqrt{n}}\,
  \Xb
  \widetilde{\Zb}^{(t)\star}
  \widetilde{\Hb}_{zz}^{(t)-\star}.
  \label{eq:main-4}
\end{align}
For the unpenalized model, we set $\Lambdab^{(t+1)}=\Ab^{(t)}$. The second CM-step holds $\Lambdab=\Lambdab^{(t+1)}$ fixed and updates the diagonal residual covariance matrix as
\begin{align}
  \label{eq:main-3}
  \Psib^{(t+1)}
  =
  \left[
    \frac{1}{n}\Xb\Xb^\star
    -
    \Lambdab^{(t+1)}
    \Lambdab^{(t+1)\star}
  \right]
  \circ
  \Ib_p.
\end{align}
Detailed derivations of the two CM updates are provided in Appendices~\ref{sec:app_fa_desc_m_step} and~\ref{sec:app_Psi_estimate}.

These updates are analogous to those for real-valued factor analysis, with transposes replaced by conjugate-transposes. The main distinction arises after imposing sparsity, where the loading update shrinks the modulus of each complex loading while preserving its phase. We develop this sparse extension next.

\subsection{Complex-valued sparse factor analysis}
\label{sec:sparse_vec_model_derivation}

Sparse loading estimates are often easier to interpret in biomedical applications, where each latent factor may involve only a subset of the observed variables. We obtain such estimates by imposing an elementwise complex lasso penalty on the reduced loading matrix $\Lambdab$. The sparse PX-EM procedure retains the E-step and unpenalized working-parameter updates derived in the previous subsection.

We now describe iteration $t+1$ of the sparse PX-EM procedure. We first perform the PX-EM E-step and the unpenalized working-parameter updates in \eqref{eq:main-2}. The parameter-reduction map in \eqref{eq:main-4} then yields the unpenalized reduced loading estimate $\Ab^{(t)}$. The expansion and reduction steps adapt the orientation and scale of the loading matrix before sparsity is imposed. We apply complex soft-thresholding to $\Ab^{(t)}$ to obtain $\Lambdab^{(t+1)}$, which is used to update the residual covariance matrix.

First, we hold $\Psib=\Psib^{(t)}$ fixed and update $\Lambdab$ by minimizing the penalized reduced objective
\begin{align}
  \mathcal{L}_{\rho}
  \left(
    \Lambdab
    \mid
    \thetab^{(t)}
  \right)
  &=
  \ell_{\Lambdab}
  \left(
    \Lambdab
    \mid
    \thetab^{(t)}
  \right)
  +
  \rho\lVert\Lambdab\rVert_{1,\CC},
  \label{eq:penalized-reduced-loading-objective}
\end{align}
where the reduced loading loss $\ell_{\Lambdab}$ and elementwise complex lasso penalty $\rho\lVert\Lambdab\rVert_{1,\CC}$ are, respectively,  defined as
\begin{align*}
  \ell_{\Lambdab}
  \left(
    \Lambdab
    \mid
    \thetab^{(t)}
  \right)
  =
  \operatorname{tr}
  \left[
    \left(
      \Lambdab-\Ab^{(t)}
    \right)^\star
    \left(
      \Psib^{(t)}
    \right)^{-1}
    \left(
      \Lambdab-\Ab^{(t)}
    \right)
    \right], \qquad
\rho\lVert\Lambdab\rVert_{1,\CC}
  =
  \rho
  \sum_{r=1}^{p}
  \sum_{c=1}^{k}
  |\lambda_{rc}|.
\end{align*}
The tuning parameter satisfies $\rho \geq 0$, $\lambda_{rc}$ denotes the $(r,c)$th entry of $\Lambdab$, and $|\lambda_{rc}|$ denotes its modulus. The following corollary to Lemma~\ref{lem:complex_soft_thresholding} gives its unique minimizer conditional on $\Psib=\Psib^{(t)}$.

\begin{corollary}[Closed-form sparse loading update]
\label{cor:sparse_pxem_loading_update}

Let $\Ab^{(t)}=(a_{rc}^{(t)})$ be the unpenalized reduced loading estimate in \eqref{eq:main-4}, and let $\psi_{rr}^{(t)}$ be the $r$th diagonal entry of $\Psib^{(t)}$. The unique minimizer of $\mathcal{L}_{\rho} \left(\Lambdab\mid\thetab^{(t)} \right)$ in \eqref{eq:penalized-reduced-loading-objective} over $\Lambdab\in\CC^{p\times k}$ is $\Lambdab^{(t+1)}$, with entries
\begin{align}
  \lambda_{rc}^{(t+1)}
  &=
  \mathcal{T}_{\rho\psi_{rr}^{(t)}/2}
  \left(
    a_{rc}^{(t)}
  \right),
  \qquad
  r=1,\ldots,p,
  \quad
  c=1,\ldots,k,
  \label{eq:sparse_lam_mod_update}
\end{align}
where the complex soft-thresholding operator is given by
\begin{align}
  \mathcal{T}_{\rho\psi_{rr}^{(t)}/2}
  \left(
    a_{rc}^{(t)}
  \right)
  &=
  \begin{cases}
    \displaystyle
    \left(
      1-
      \frac{\rho\psi_{rr}^{(t)}}
           {2|a_{rc}^{(t)}|}
    \right)
    a_{rc}^{(t)},
    &
    |a_{rc}^{(t)}|
    >
    \dfrac{\rho\psi_{rr}^{(t)}}{2},
    \\[3mm]
    0,
    &
    |a_{rc}^{(t)}|
    \leq
    \dfrac{\rho\psi_{rr}^{(t)}}{2}.
  \end{cases}
  \label{eq:sparse_lam_piecewise_update}
\end{align}
All entries of $\Lambdab^{(t+1)}$ can be updated simultaneously.
\end{corollary}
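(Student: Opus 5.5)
The plan is to exploit the diagonal structure of $\Psib^{(t)}$ to split the objective into $pk$ independent scalar problems. Each scalar problem is then solved by Lemma~\ref{lem:complex_soft_thresholding}, which I take to state that for $a\in\CC$ and $\tau\ge 0$, the unique minimizer of $|\lambda-a|^2+2\tau|\lambda|$ over $\lambda\in\CC$ is $\mathcal{T}_{\tau}(a)$. If the lemma is normalized differently, say as $\tfrac12|\lambda-a|^2+\tau|\lambda|$, the constants below adjust trivially.

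The first step is to expand the trace. Because $\Psib^{(t)}$ is diagonal with positive entries $\psi_{rr}^{(t)}$, we have
\begin{align*}
  \operatorname{tr}\left[\left(\Lambdab-\Ab^{(t)}\right)^\star\left(\Psib^{(t)}\right)^{-1}\left(\Lambdab-\Ab^{(t)}\right)\right]
  =
  \sum_{r=1}^{p}\sum_{c=1}^{k}\frac{|\lambda_{rc}-a_{rc}^{(t)}|^2}{\psi_{rr}^{(t)}}.
\end{align*}
This follows from the identity $\operatorname{tr}(\Bb^\star\Db\Bb)=\sum_{r,c} d_{rr}|b_{rc}|^2$ for diagonal $\Db$. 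Since the penalty is also a sum over entries, we get
\[
  \mathcal{L}_{\rho}\left(\Lambdab\mid\thetab^{(t)}\right)
  =
  \sum_{r,c}\frac{1}{\psi_{rr}^{(t)}}\left(|\lambda_{rc}-a_{rc}^{(t)}|^2+\rho\psi_{rr}^{(t)}|\lambda_{rc}|\right).
\]
Each summand depends only on $\lambda_{rc}$. Hence $\Lambdab$ minimizes $\mathcal{L}_\rho$ if and only if each $\lambda_{rc}$ minimizes its own summand, and $\mathcal{L}_\rho$ has a unique minimizer if and only if every summand does. This also shows that the entries can be updated simultaneously.

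The second step applies Lemma~\ref{lem:complex_soft_thresholding} with $a=a_{rc}^{(t)}$ and $\tau=\rho\psi_{rr}^{(t)}/2$. Multiplying the summand by the positive constant $\psi_{rr}^{(t)}$ does not change its minimizer. The lemma then gives the unique minimizer $\mathcal{T}_{\rho\psi_{rr}^{(t)}/2}(a_{rc}^{(t)})$, which is exactly \eqref{eq:sparse_lam_piecewise_update}.

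If the lemma needs to be re-derived, I would argue as follows. The function $f(\lambda)=|\lambda-a|^2+2\tau|\lambda|$ is strictly convex on $\CC\cong\RR^2$, being a strictly convex quadratic plus a convex norm, so its minimizer is unique. For fixed modulus $|\lambda|=m$, the term $|\lambda-a|^2=m^2-2\operatorname{Re}(\bar a\lambda)+|a|^2$ is minimized by aligning $\lambda$ with the phase of $a$. This reduces the problem to the one-dimensional task of minimizing $(m-|a|)^2+2\tau m$ over $m\ge0$, whose solution is $m=(|a|-\tau)_+$. That yields the two cases in \eqref{eq:sparse_lam_piecewise_update}.

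The main obstacle is the phase-alignment reduction, specifically the case $a=0$, where the phase is undefined. That case is covered directly by the second branch, since $|a|=0\le\tau$ gives $\lambda=0$. The rest is bookkeeping of the factor $1/\psi_{rr}^{(t)}$, which is what produces the threshold $\rho\psi_{rr}^{(t)}/2$.
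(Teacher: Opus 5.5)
Your proposal is correct and follows essentially the paper's route: the paper proves the corollary by invoking Lemma~\ref{lem:complex_soft_thresholding} with $\Ab=\Ab^{(t)}$ and $\Psib=\Psib^{(t)}$. That lemma's proof does what you describe, namely entrywise separation from the diagonal $\Psib$, phase alignment via the polar form, and the one-dimensional modulus problem giving $(|a_{rc}|-\rho\psi_{rr}/2)_+$. Your one variation is to deduce uniqueness from strict convexity of the scalar objective on $\CC\cong\RR^2$ rather than from the modulus-only convexity the paper uses; this is slightly cleaner because it automatically handles the phase ambiguity when $a_{rc}=0$.
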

The update in \eqref{eq:sparse_lam_piecewise_update} shrinks the modulus of each nonzero complex loading while preserving its phase.

We update the diagonal residual covariance matrix holding $\Lambdab=\Lambdab^{(t+1)}$ fixed. Specifically, we replace the unpenalized reduced estimate $\Ab^{(t)}$ in \eqref{eq:main-3} with the sparse loading estimate $\Lambdab^{(t+1)}$ to obtain
\begin{align}
  \Psib^{(t+1)}
  &=
  \left[
    \frac{1}{n}\Xb\Xb^\star
    -
    \Lambdab^{(t+1)}
    \Lambdab^{(t+1)\star}
  \right]
  \circ\Ib_p.
  \label{eq:sparse-psi-update}
\end{align}
Therefore, the sparsity structure of $\Lambdab^{(t+1)}$ is retained, and each diagonal entry of $\Psib^{(t+1)}$ represents the empirical marginal variance not explained by the retained sparse factor structure. We use this plug-in update because it produced more stable sparse solutions and rank selection in our numerical experiments. We refer to the resulting algorithm as the sparse PX-EM procedure. Appendix~\ref{sec:lasso_derivation_app} derives the reduced loading objective from the conditional expected complete-data log-likelihood, the complex soft-thresholding update, and the plug-in residual covariance update. Section~\ref{sec:rho_grid} describes the regularization grid and selection of $\rho$ for the array-valued model.

\section{Methodology for complex-valued array data}
\label{sec:ssfa}

\subsection{Complex array-normal distribution}
\label{sec:comp-array}

We return to the application introduced in Section~\ref{sec:motivating-data} and extend the complex factor model in Section~\ref{sec:vec-fct-mdl} to array-valued data. The training data consist of Fourier-transformed LFP recordings from $n$ mice over multiple time windows. For $i=1,\ldots,n$, let $\Xcal_i\in\CC^{r\times f\times t}$ denote the recording from mouse $i$, where $r$, $f$, and $t$ are the numbers of brain regions, frequencies, and time windows, respectively. Our goal is to estimate dependence across brain regions and frequencies and characterize its evolution over time.

SFA captures these dependencies through low-rank factor-analytic representations of the mode-specific covariance matrices under separability assumptions \citep{fosdick2014separable}. The simplest approach applies the complex factor model in \eqref{eq:fact-mdl} to the vectorized arrays. At the other extreme, we can impose a separable covariance structure across the brain-region, frequency, and time modes. An intermediate approach matricizes each $\Xcal_i$ as an $rf\times t$ matrix and imposes separability across the combined brain-region--frequency mode and the time mode. These covariance representations correspond to different scientific interpretations. For example, if the Fourier coefficients are approximately uncorrelated across time windows, then the $rf\times t$ representation provides a more parsimonious and interpretable description than directly modeling the covariance of $\operatorname{vec}(\Xcal_i)$.

The complex array normal distribution provides a common framework for these covariance representations. We define it by combining the real-valued array normal construction \citep{2011_Hoff} with the matrix-variate complex normal distribution \citep{And95}. Let $\Ycal\in\CC^{c_1\times\cdots\times c_d}$ be a $d$-mode complex-valued array, where $c_j$ is the dimension of mode $j$. The vectorization operator maps $\Ycal$ to $\yb=\operatorname{vec}(\Ycal)\in\CC^{\prod_{\ell=1}^d c_\ell}$ by stacking its entries with the first-mode index varying fastest, followed by the second-mode index, and so on \citep{2009_Kolda}. We use the following definition throughout the paper; Appendix~\ref{sec:c-array-norm} provides additional details.

\begin{definition}[Complex array normal distribution]
  Let $\Mcal\in\CC^{c_1\times\cdots\times c_d}$, and let $\Vb_j\in\CC^{c_j\times c_j}$ be Hermitian positive definite for $j=1,\ldots,d$. A $d$-mode array $\Ycal$ follows a proper complex array normal distribution with mean $\Mcal$ and mode-specific covariance matrices $\Vb_1,\ldots,\Vb_d$ if
\begin{align}
  \label{eq:complex-array-normal}
  \operatorname{vec}(\Ycal)
  \sim
  \CC\Ncal_{\prod_{j=1}^d c_j}
  \left(
    \operatorname{vec}(\Mcal),
    \Vb_d\otimes\cdots\otimes\Vb_1
  \right),
\end{align}
where $\otimes$ denotes the Kronecker product. We write $\Ycal\sim \CC\Ncal_{c_1\times\cdots\times c_d} (\Mcal,\Vb_1,\ldots,\Vb_d)$.
\end{definition}

\subsection{Complex-valued separable factor analysis}
\label{sec:complex-sfa}

The complex array normal distribution in \eqref{eq:complex-array-normal} enables an extension of SFA to the complex domain \citep{fosdick2014separable}.
For $j=1,\ldots,d$, let $\Lambdab_j\in\CC^{p_j\times k_j}$ be the mode-$j$ loading matrix, where $k_j\ll p_j$, and let $\Psib_j$ be a $p_j\times p_j$ diagonal matrix with positive diagonal entries.  The complex-domain SFA model specifies the distribution of $\dot{\Xcal}\in\CC^{p_1\times\cdots\times p_d}$ as
\begin{align}
  \dot{\Xcal}
  \sim
  \CC\Ncal_{p_1\times\cdots\times p_d}
  \left(
    \zero,
    \Sigmab_1,\ldots,\Sigmab_d
  \right),
  \qquad
  \Sigmab_j
  =
  \Lambdab_j\Lambdab_j^\star+\Psib_j,
  \qquad
  j=1,\ldots,d.
  \label{eq:SSFA_model_intro}
\end{align}
Each mode-specific covariance matrix $\Sigmab_j$ is the sum of the low-rank factor component $\Lambdab_j\Lambdab_j^\star$ and the diagonal residual covariance matrix $\Psib_j$.

Maximum likelihood estimation under \eqref{eq:SSFA_model_intro} does not reduce directly to the PX-EM procedure in Section~\ref{sec:vec-fct-mdl}. Unlike the single latent-variable block in \eqref{eq:fact-mdl} and \eqref{eq:px-fct-mdl}, expansion of the Kronecker covariance in \eqref{eq:SSFA_model_intro} introduces multiple latent components corresponding to combinations of factor and residual terms across modes. To avoid these additional latent components, we update one mode at a time. For each mode, we whiten the observed arrays using the covariance matrices of the remaining modes and matricize the whitened arrays along the selected mode.

For each mode $j$, whitening all other modes reduces the array-valued model to a matrix-variate factor model for mode $j$. Suppose
$\Xcal_1,\ldots,\Xcal_n\in\CC^{p_1\times\cdots\times p_d}$ are independent copies of $\dot{\Xcal}$ in \eqref{eq:SSFA_model_intro}. Stack them into the $(d+1)$-mode array $\Xcal\in\CC^{p_1\times\cdots\times p_d\times n}$, whose $i$th slice along mode $d+1$ is $\Xcal_i$. Then, $  \Xcal \sim \CC\Ncal_{p_1\times\cdots\times p_d\times n} \left(\zero, \Sigmab_1,\ldots,\Sigmab_d,\Ib_n \right)$. For a fixed $j\in\{1,\ldots,d\}$, let $\Lb_\ell=\operatorname{chol}(\Sigmab_\ell)$ and define
\begin{align}
  \Mb_{\ell\mid j}
  &=
  \begin{cases}
    \Ib_{p_j},
    & \ell=j,\\
    \Lb_\ell^{-1},
    & \ell\neq j,
  \end{cases}
  \qquad
  \ell=1,\ldots,d.
  \label{eq:mode-wise-whitening-matrices}
\end{align}
The mode-$j$ whitened array is
\begin{align}
  \breve{\Xcal}_{(j)}
  =
  \Xcal
  \times_1\Mb_{1\mid j}
  \cdots
  \times_d\Mb_{d\mid j}
  \times_{d+1}\Ib_n.
  \label{eq:whitened-mode-j-array}
\end{align}
It follows that $\breve{\Xcal}_{(j)} \sim \CC\Ncal_{p_1\times\cdots\times p_d\times n} \left(\zero, \Ib_{p_1},\ldots,\Ib_{p_{j-1}}, \Sigmab_j, \Ib_{p_{j+1}},\ldots,\Ib_{p_d}, \Ib_n \right)$ \citep{2011_Hoff}.  Let $\breve{\Xb}_{(j)}\in\CC^{p_j\times n_j}$ denote the mode-$j$ matricization of $\breve{\Xcal}_{(j)}$, where $n_j=n\prod_{\ell\neq j}p_\ell$. Because all covariance matrices other than $\Sigmab_j$ are identities,
\begin{align}
  \breve{\Xb}_{(j)}
  \sim
  \CC\Ncal_{p_j\times n_j}
  \left(
    \zero,
    \Sigmab_j,
    \Ib_{n_j}
  \right),
  \qquad
  \Sigmab_j
  =
  \Lambdab_j\Lambdab_j^\star+\Psib_j.
  \label{eq:mode-j-mat}
\end{align}
Therefore, $\breve{\Xb}_{(j)}$ has the matrix-variate structure induced by the vector-valued factor model in \eqref{eq:X_vec_dataset_distn}, with $n_j$ mode-$j$ fibers replacing the original sample size $n$.

\subsection{Complex-valued sparse separable factor analysis (SSFA)}
\label{sec:reg-est-ssfa}

SSFA extends the SFA model in \eqref{eq:SSFA_model_intro} by imposing an elementwise lasso penalty on each mode-specific loading matrix. For a fixed mode $j\in\{1,\ldots,d\}$, the matrix-variate model in \eqref{eq:mode-j-mat} admits the latent-variable representation
\begin{align}
  \label{eq:sfa-j}
  \breve{\Xb}_{(j)}
  =
  \Lambdab_j\Zb_{(j)}+\Eb_{(j)}, \qquad
  \Zb_{(j)}
  \sim
  \CC\Ncal_{k_j\times n_j}
  \left(
    \mathbf{0},
    \Ib_{k_j},
    \Ib_{n_j}
  \right), \qquad
  \Eb_{(j)}
  \sim
  \CC\Ncal_{p_j\times n_j}
  \left(
    \mathbf{0},
    \Psib_j,
    \Ib_{n_j}
  \right),
\end{align}
where the matrix $\Zb_{(j)}$ contains the latent factors for the $n_j$ mode-$j$ fibers, and $\Eb_{(j)}$ is the corresponding residual error matrix. We apply the PX-EM procedure from Section~\ref{sec:vec-fct-mdl} to this mode-specific factor model. The PX-EM working model introduces a covariance parameter for the latent factors, which is removed through parameter reduction, yielding an objective in terms of mode-$j$ parameter block $\thetab_j = (\Lambdab_j,\Psib_j)$.

The PX-EM procedure updates the mode-specific parameter blocks sequentially. Denote the SSFA parameter estimate at the end of iteration $t$ by
\begin{align}
  \label{eq:th-t}
  \thetab^{(t)}  = (\thetab_1^{(t)}, \ldots, \thetab_d^{(t)})
  =
  \left\{
    (\Lambdab_1^{(t)},\Psib_1^{(t)}),
    \ldots,
    (\Lambdab_d^{(t)},\Psib_d^{(t)})
  \right\}.
\end{align}
Within iteration $t+1$, define the intermediate iterates
\begin{align}
  \thetab^{[t,j]}
  &=
  \thetab^{(t+j/d)},
  \qquad
  j=0,\ldots,d.
  \label{eq:upd-par}
\end{align}
Immediately before the mode-$j$ update in iteration $t+1$, the current parameter estimate is $\thetab^{[t,j-1]}$. The parameter blocks for modes $1,\ldots,j-1$ have been updated, whereas those for modes $j,\ldots,d$ retain their values from iteration $t$. In particular, $\thetab^{[t,0]}=\thetab^{(t)}$ and $\thetab^{[t,d]}=\thetab^{(t+1)}$.

The mode-$j$ parameter block update uses the quantities introduced in Section~\ref{sec:complex-sfa}. Define the current mode-specific covariance estimates and their Cholesky factors by
\begin{align*}
  \Sigmab_\ell^{[t,j-1]}
  =
  \Lambdab_\ell^{[t,j-1]}
  \Lambdab_\ell^{[t,j-1]\star}
  +
  \Psib_\ell^{[t,j-1]},
  \qquad
    \Lb_\ell^{[t,j-1]}
  =
  \operatorname{chol}
  \left(
    \Sigmab_\ell^{[t,j-1]}
  \right),
  \qquad
  \ell=1,\ldots,d.
\end{align*}
Following \eqref{eq:mode-wise-whitening-matrices}, set $\Mb_{\ell \mid j}^{[t,j-1]} = \left(\Lb_\ell^{[t,j-1]}\right)^{-1}$ for $\ell\neq j$ and $\Mb_{j \mid j}^{[t,j-1]}=\Ib_{p_j}$. Substituting these matrices into \eqref{eq:whitened-mode-j-array} gives
\begin{align}
  \label{eq:ssfa-whitened-mode-j-array}
  \breve{\Xcal}_{(j)}^{[t]}
  =
  \Xcal
  \times_1\Mb_{1\mid j}^{[t,j-1]}
  \cdots
  \times_d\Mb_{d\mid j}^{[t,j-1]}
  \times_{d+1}\Ib_n,
\end{align}
and let $\breve{\Xb}_{(j)}^{[t]}\in\CC^{p_j\times n_j}$ denote its mode-$j$ matricization. The mode-$j$ E-step computes the conditional moments
\begin{align}
  \label{eq:ssfa-e-step-stats}
  \Zb_{(j)}^{[t]}
  =
  \EE\left(
    \Zb_{(j)}
    \mid
    \breve{\Xb}_{(j)}^{[t]},
    \thetab^{[t,j-1]}
  \right), \qquad
  \Sbb_{j,zz}^{[t]}
  =
  \EE\left(
    \Zb_{(j)}\Zb_{(j)}^\star
    \mid
    \breve{\Xb}_{(j)}^{[t]},
    \thetab^{[t,j-1]}
  \right).
\end{align}
Their analytic forms are obtained from the corresponding vector-valued results by replacing $(\Xb,\Zb,\Lambdab,\Psib,n)$ with $(\breve{\Xb}_{(j)}^{[t]},\Zb_{(j)},\Lambdab_j,\Psib_j,n_j)$. Appendix~\ref{sec:reduced_form_pxem_obj} provides additional details.

We next derive the sparse loading update for mode $j$. Applying the reduced-form PX-EM representation in \eqref{eq:penalized-reduced-loading-objective} to the latent-variable model in \eqref{eq:sfa-j} yields the unpenalized mode-$j$ reduced loading estimate
\begin{align}
  \label{eq:A-j-definition}
  \Ab_j^{[t]}
  =
  \frac{1}{\sqrt{n_j}} \,
  \breve{\Xb}_{(j)}^{[t]}
  \Zb_{(j)}^{[t]\star}
\left\{  \operatorname{chol} \left(\Sbb_{j,zz}^{[t]} \right) \right\}^{-\star}.
\end{align}
The corresponding mode-$j$ reduced loading loss is
\begin{align}
  \label{eq:5-loss}
  \ell_j
  \left(
    \Lambdab_j
    \mid
    \thetab^{[t,j-1]}
  \right)
  =
  \operatorname{tr}
  \left[
    \left(
      \Lambdab_j-\Ab_j^{[t]}
    \right)^\star
    \left(
      \Psib_j^{[t,j-1]}
    \right)^{-1}
    \left(
      \Lambdab_j-\Ab_j^{[t]}
    \right)
  \right].
\end{align}
We update the mode-$j$ loading matrix by solving
\begin{align}
  \label{eq:5}
  \Lambdab_j^{[t,j]}
  =
  \argmin_{\Lambdab_j}
  \left\{
    \ell_j
    \left(
      \Lambdab_j
      \mid
      \thetab^{[t,j-1]}
    \right)
    +
    \rho
    \|\Lambdab_j\|_{1,\CC}
  \right\}, \qquad \|\Lambdab_j\|_{1,\CC} =\sum_{c=1}^{k_j}\sum_{r=1}^{p_j}|\lambda_{j,rc}|,
\end{align}
where $\rho\geq0$ controls the degree of sparsity. Applying the complex soft-thresholding operator in \eqref{eq:sparse_lam_piecewise_update} gives
\begin{align}
  \label{eq:lam-soft-thresh-j}
  \lambda_{j,rc}^{[t,j]}
  =
  \mathcal{T}_{
    \rho\psi_{j,rr}^{[t,j-1]}/2
  }
  \left(
    a_{j,rc}^{[t]}
  \right),
  \qquad
  r=1,\ldots,p_j,
  \quad
  c=1,\ldots,k_j,
\end{align}
where $a_{j,rc}^{[t]}$ is the $(r,c)$th entry of $\Ab_j^{[t]}$ and $\psi_{j,rr}^{[t,j-1]}$ is the $r$th diagonal entry of $\Psib_j^{[t,j-1]}$. Each nonzero loading preserves the phase of $a_{j,rc}^{[t]}$, and all entries of $\Lambdab_j$ may be updated simultaneously.

We update the diagonal residual covariance matrix using the sparse loading estimate following \eqref{eq:main-3} as
\begin{align}
  \label{eq:ssfa-psi-update}
  \Psib_j^{[t,j]}
  =
  \left[
    \frac{1}{n_j}
    \breve{\Xb}_{(j)}^{[t]}
    \breve{\Xb}_{(j)}^{[t]\star}
    -
        \Lambdab_j^{[t,j]}     \Lambdab_j^{[t,j] \star}
  \right]
  \circ
  \Ib_{p_j}.
\end{align}
This update retains the sparsity structure of $\Lambdab_j^{[t,j]}$. It estimates $\Psib_j^{[t,j]}$ as the diagonal component of the empirical covariance of the whitened mode-$j$ matricization $\breve{\Xb}_{(j)}^{[t]}$ after removing the covariance explained by the retained sparse mode-$j$ factor structure. When $\rho=0$, \eqref{eq:lam-soft-thresh-j} gives $\Lambdab_j^{[t,j]}=\Ab_j^{[t]}$, and \eqref{eq:ssfa-psi-update} reduces to the unpenalized residual covariance update. During iteration $t+1$, the algorithm cycles sequentially through the $d$ modes, moving from $\thetab^{[t,0]}=\thetab^{(t)}$ to $\thetab^{[t,d]}=\thetab^{(t+1)}$. The mode-wise cycle is repeated until the convergence criterion is satisfied. Algorithm~\ref{algo} summarizes the complete PX-EM procedure.

\begin{algorithm}[!th]
\caption{Sparse Separable Factor Analysis Estimation Procedure}
\label{algo}
  \begin{algorithmic}
    \For{$j = 1, \ldots, d$}
      \State Initialize estimates $\Lambdab_j^{[0]}$ and $\Psib_j^{[0]}$ via pseudo random number generator.
    \EndFor

    \Repeat
      \For{$j = 1, \ldots, d$}
        \State Let $\Lambdab_l^{[t,j-1]}$ and $\Psib_l^{[t,j-1]}$ be the most up-to-date estimates of $\Lambdab_l$ and $\Psib_l$ for
        \State $l = 1, \ldots, d$.
        \State \textbf{1.)} Compute the Tucker product in \eqref{eq:ssfa-whitened-mode-j-array}:
        \State \hspace{7.2mm} $\breve{\Xcal}_{(j)}^{[t]}
          =
          \Xcal
          \times_1\Mb_{1\mid j}^{[t,j-1]}
          \cdots
          \times_d\Mb_{d\mid j}^{[t,j-1]}
          \times_{d+1}\Ib_n$.
        \State \textbf{2.)} Reshape $\breve{\Xcal}_{(j)}^{[t]}$ to the mode-$j$ matricization $\breve{\Xb}_{(j)}^{[t]}$.
        \State \textbf{3.) E step:}
        \State \hspace{3.6mm} Compute
        \State \hspace{7.2mm} $\Fb_{(j)}^{[t]} = \Ib_{k_j} + \Lambdab_j^{[t,j-1]\star} \left(\Psib_j^{[t,j-1]}\right)^{-1} \Lambdab_j^{[t,j-1]}$,
        \vspace{1mm}
        \State \hspace{7.2mm} $\Zb_{(j)}^{[t]} = \left(\Fb_{(j)}^{[t]}\right)^{-1} \Lambdab_j^{[t,j-1]\star} \left(\Psib_j^{[t,j-1]}\right)^{-1} \breve{\Xb}_{(j)}^{[t]}$,
        \vspace{1mm}
        \State \hspace{7.2mm} $\Sbb_{jzz}^{[t]} = n_j \left(\Fb_{(j)}^{[t]}\right)^{-1} + \Zb_{(j)}^{[t]} \Zb_{(j)}^{[t]\star}$.
        \State \textbf{4.) Penalized loading step:}
        \State \hspace{3.6mm} Compute $\Ab_j^{[t]}
          =
          \frac{1}{\sqrt{n_j}} \,
          \breve{\Xb}_{(j)}^{[t]}
          \Zb_{(j)}^{[t]\star}
          \left\{  \operatorname{chol} \left(\Sbb_{j,zz}^{[t]} \right) \right\}^{-\star}$.
        \State \hspace{3.6mm} Set $\lambda_{j,rc}^{[t,j]}
          =
          \mathcal{T}_{
            \rho\psi_{j,rr}^{[t,j-1]}/2
          }
          \left(a_{j,rc}^{[t]}\right),
          \quad r = 1, \ldots, p_j, \quad c = 1, \ldots, k_j$.
        \State \textbf{5.) Plug-in residual covariance step:}
        \State \hspace{3.6mm} Set $\Psib_j^{[t,j]} = \left[ \left( \frac{1}{n_j} \breve{\Xb}_{(j)}^{[t]} \breve{\Xb}_{(j)}^{[t]\star} \right) - \Lambdab_j^{[t,j]} \Lambdab_j^{[t,j]\star} \right] \circ \Ib_{p_j}$.
      \EndFor
    \Until{convergence}
  \end{algorithmic}
\end{algorithm}

\subsection{Scale identifiability of the mode-specific covariance matrices}
\label{sec:identifiability_balancing}

Under the SSFA model in \eqref{eq:SSFA_model_intro}, if $\Xcal \sim \CC\Ncal_{p_1\times\cdots\times p_d} \left(\zero, \Sigmab_1,\ldots,\Sigmab_d \right)$, then
then $  \operatorname{Cov} \left\{\operatorname{vec}(\Xcal) \right\} = \Sigmab_d\otimes\cdots\otimes\Sigmab_1$. The Kronecker product is identifiable, but its mode-specific factors are identifiable only up to multiplicative rescaling. In particular, let $c_1,\ldots,c_d>0$ satisfy $\prod_{j=1}^d c_j=1$. Then,
\begin{align*}
  (c_d\Sigmab_d)\otimes\cdots\otimes(c_1\Sigmab_1)
  =
    \prod_{j=1}^d c_j
  \left(
    \Sigmab_d\otimes\cdots\otimes\Sigmab_1
  \right)
  =
  \Sigmab_d\otimes\cdots\otimes\Sigmab_1.
\end{align*}
The factor-analytic form is also preserved because $  c_j\Sigmab_j = \left(c_j^{1/2}\Lambdab_j \right) \left(c_j^{1/2}\Lambdab_j \right)^\star + c_j\Psib_j$.

We remove this scale indeterminacy by requiring the smallest diagonal entries of the rescaled residual covariance matrices to be equal. The following proposition shows that this condition selects a unique rescaling.

\begin{proposition}
\label{prop:balancing_factor_identifiability_main}

For $j=1,\ldots,d$, suppose $\Sigmab_j=\Lambdab_j\Lambdab_j^\star+\Psib_j$, where $\Psib_j$ is diagonal with positive diagonal entries. There exists a unique vector $(c_1,\ldots,c_d)$ satisfying $c_j>0$ for $j=1,\ldots,d$ and $\prod_{j=1}^d c_j=1$ such that the rescaled parameters $  \widetilde{\Lambdab}_j = c_j^{1/2}\Lambdab_j$ and $  \widetilde{\Psib}_j = c_j\Psib_j$ satisfy
\begin{align}
  \min\left\{
    \operatorname{diag}(\widetilde{\Psib}_1)
  \right\}
  =
  \cdots
  =
  \min\left\{
    \operatorname{diag}(\widetilde{\Psib}_d)
  \right\}.
  \label{eq:min-diag-eq}
\end{align}
The rescaling constants are
\begin{align*}
  c_j
  =
  \frac{\overline{\psi}_{\min}}{\psi_{j,\min}},
  \qquad
  \psi_{j,\min}
  =
  \min\left\{
    \operatorname{diag}(\Psib_j)
  \right\},
  \qquad
  \overline{\psi}_{\min}
  =
  \left(
    \prod_{\ell=1}^d\psi_{\ell,\min}
  \right)^{1/d}.
\end{align*}
Moreover, if
$\widetilde{\Sigmab}_j = \widetilde{\Lambdab}_j\widetilde{\Lambdab}_j^\star + \widetilde{\Psib}_j$, then $\widetilde{\Sigmab}_d\otimes\cdots\otimes \widetilde{\Sigmab}_1 = \Sigmab_d\otimes\cdots\otimes\Sigmab_1$.
\end{proposition}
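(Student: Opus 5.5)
The plan is to reduce everything to the scalars $\psi_{j,\min}$ and then handle existence, uniqueness, and covariance preservation in turn. The reduction rests on one observation. Rescaling a diagonal matrix with positive entries by $c_j>0$ multiplies every diagonal entry by $c_j$, so it preserves the ordering of the entries. Hence
\[
\min\{\operatorname{diag}(\widetilde{\Psib}_j)\}=c_j\,\psi_{j,\min},
\]
and $\psi_{j,\min}>0$ because $\Psib_j$ has positive diagonal. Condition \eqref{eq:min-diag-eq} therefore becomes a system of scalar equations: find $c_j>0$ with $\prod_j c_j=1$ such that $c_j\psi_{j,\min}=\kappa$ for all $j$, where $\kappa$ is a common value.

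For existence, I will substitute the proposed constants $c_j=\overline{\psi}_{\min}/\psi_{j,\min}$ directly. Each $c_j$ is positive. Their product is $\overline{\psi}_{\min}^{\,d}/\prod_\ell\psi_{\ell,\min}=1$ by the definition of the geometric mean. Finally, $c_j\psi_{j,\min}=\overline{\psi}_{\min}$ does not depend on $j$, so \eqref{eq:min-diag-eq} holds.

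For uniqueness, I will take any admissible vector $(c_1,\ldots,c_d)$ and let $\kappa>0$ be the common value, so that $c_j=\kappa/\psi_{j,\min}$. Multiplying over $j$ and using $\prod_j c_j=1$ gives $\kappa^d=\prod_\ell\psi_{\ell,\min}$. Since $\kappa>0$, this forces $\kappa=\overline{\psi}_{\min}$, and therefore each $c_j$ equals the stated constant. Taking the positive real $d$th root is the only point needing care, and positivity of $\kappa$ settles it.

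For the final claim, I will first note that $\widetilde{\Sigmab}_j=c_j\Lambdab_j\Lambdab_j^\star+c_j\Psib_j=c_j\Sigmab_j$, using $(c_j^{1/2}\Lambdab_j)(c_j^{1/2}\Lambdab_j)^\star=c_j\Lambdab_j\Lambdab_j^\star$ for real $c_j$. Multilinearity of the Kronecker product then gives
\[
\widetilde{\Sigmab}_d\otimes\cdots\otimes\widetilde{\Sigmab}_1=\Bigl(\prod_j c_j\Bigr)\,\Sigmab_d\otimes\cdots\otimes\Sigmab_1=\Sigmab_d\otimes\cdots\otimes\Sigmab_1,
\]
as in the display preceding the proposition. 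I do not expect a real obstacle anywhere. The only substantive step is the reduction of the minimum-diagonal condition to scalar equations, which relies on $c_j>0$ preserving the ordering of the diagonal entries.
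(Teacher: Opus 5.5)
Your proposal is correct and follows essentially the same route as the paper. The appendix first proves the scalar existence and uniqueness claim for the diagonal matrices: it verifies the explicit $c_j$, then shows that any admissible vector has $h_j = a/\psi_{j,\min}$ with $a^d=\prod_\ell \psi_{\ell,\min}$. A corollary then gives $\widetilde{\Sigmab}_j = c_j\Sigmab_j$ and Kronecker invariance from $\prod_j c_j = 1$.
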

The proof of Proposition~\ref{prop:balancing_factor_identifiability_main} is provided in Appendix \ref{app:identifiability_balancing}.

Following Proposition~\ref{prop:balancing_factor_identifiability_main}, we rescale the loading and residual covariance matrices at the end of each iteration of Algorithm~\ref{algo}. At the end of iteration $t$, define
\begin{align}
  \psi_{j,\min}^{(t)}
  =
  \min\left\{
    \operatorname{diag}\left(
      \Psib_j^{(t)}
    \right)
  \right\},
  \qquad
  c_j^{(t)}
  =
  \frac{
    \overline{\psi}_{\min}^{(t)}
  }{
    \psi_{j,\min}^{(t)}
  },
  \qquad
  j=1,\ldots,d,
  \label{eq:iden-cs-1}
\end{align}
where $  \overline{\psi}_{\min}^{(t)} = \left(\prod_{j=1}^d \psi_{j,\min}^{(t)} \right)^{1/d}$.  The rescaled parameters are
\begin{align}
  \widetilde{\Lambdab}_j^{(t)}
  =
  \left(
    c_j^{(t)}
  \right)^{1/2}
  \Lambdab_j^{(t)},
  \qquad
  \widetilde{\Psib}_j^{(t)}
  =
  c_j^{(t)}
  \Psib_j^{(t)},
  \qquad
  j=1,\ldots,d.
  \label{eq:iden-cs-2}
\end{align}
Their minimum diagonal entries satisfy
\begin{align*}
  \min\left\{
    \operatorname{diag}\left(
      \widetilde{\Psib}_j^{(t)}
    \right)
  \right\}
  &=
  c_j^{(t)}
  \psi_{j,\min}^{(t)}
  =
  \overline{\psi}_{\min}^{(t)},
  \qquad
  j=1,\ldots,d.
\end{align*}
The corresponding covariance matrices are
\begin{align*}
  \widetilde{\Sigmab}_j^{(t)}
  &=
  \widetilde{\Lambdab}_j^{(t)}
  \widetilde{\Lambdab}_j^{(t)\star}
  +
  \widetilde{\Psib}_j^{(t)}
  =
  c_j^{(t)}
  \Sigmab_j^{(t)},
  \qquad
  j=1,\ldots,d.
\end{align*}
Because $\prod_{j=1}^d c_j^{(t)}=1$, this transformation leaves the overall Kronecker covariance unchanged. We then replace the current parameter values with their rescaled counterparts.

Proposition~\ref{prop:balancing_factor_identifiability_main} shows that the constraint in \eqref{eq:min-diag-eq} determines a unique rescaling vector $(c_1,\ldots,c_d)$, yielding a unique tuple $(c_1\Sigmab_1,\ldots,c_d\Sigmab_d)$ within each scale-equivalence class. This balancing step also facilitates the use of a common penalty parameter across modes. By \eqref{eq:lam-soft-thresh-j}, the threshold applied to row $r$ of the mode-$j$ loading matrix is $\rho\psi_{j,rr}^{(t)}/2$. The transformation in \eqref{eq:iden-cs-2} places the residual covariance matrices on a common reference scale, making regularization more comparable across modes.

\subsection{Heuristic construction of the tuning-parameter grid}
\label{sec:rho_grid}

We use a common tuning parameter $\rho$ across modes. The balancing transformation in \eqref{eq:iden-cs-2} places the mode-specific parameters on a common scale and permits a unified regularization path. The thresholding rule in \eqref{eq:lam-soft-thresh-j} sets $\lambda_{j,rc}^{[t,j]}=0$ whenever
\begin{align}
  \label{eq:zero_out_eq_rho_grid}
  |a_{j,rc}^{[t]}|
  \le
  \psi_{j,rr}^{[t,j-1]} \, \rho / 2,
  \qquad r = 1, \ldots, p_j, \qquad c = 1, \ldots, k_j, \qquad j = 1, \ldots, d.
\end{align}
For fixed $t$, $j$, $r$, and $c$, the boundary between zero and nonzero updates is $\rho=2|a_{j,rc}^{[t]}|/\psi_{j,rr}^{[t,j-1]}$. Because this boundary varies across loading entries and iterations, it does not directly define a fixed tuning-parameter grid. We therefore use reference estimates to construct the grid heuristically.

First, we replace $\psi_{j,rr}^{[t,j-1]}$ in \eqref{eq:zero_out_eq_rho_grid} with the corresponding estimate under the fully sparse model with  $\Lambdab_1=\cdots=\Lambdab_d=\zero$. Let $\widehat{\Psib}_j^0$ denote the resulting estimate of $\Psib_j$ in this model. With all loading matrices fixed at zero, the residual covariance update in \eqref{eq:ssfa-psi-update} becomes
\begin{align*}
  \widehat{\Psib}_j^0
  =
  \frac{1}{n_j}
  \breve{\Xb}_{(j)}
  \breve{\Xb}_{(j)}^\star
  \circ
  \Ib_{p_j},
  \qquad
  j=1,\ldots,d.
\end{align*}
We obtain $\widehat{\Psib}_1^0,\ldots,\widehat{\Psib}_d^0$ by running Algorithm~\ref{algo} with all loading matrices fixed at zero until convergence. In our numerical experiments, this run typically converges in fewer than five iterations.

Next, we fit the unpenalized model by setting $\rho=0$. Let $\widehat{\Lambdab}_j^0$ and $\widehat{\Ab}_j^0$ denote the converged values of $\Lambdab_j^{[t,j]}$ \eqref{eq:lam-soft-thresh-j} and $\Ab_j^{[t]}$ \eqref{eq:A-j-definition}, respectively. If fitting the model at $\rho=0$ is computationally infeasible, we approximate these quantities by running Algorithm~\ref{algo} to convergence at a prespecified small positive value of $\rho$. These estimates provide unshrunk or approximately unshrunk reference loading values. When $\rho=0$, \eqref{eq:lam-soft-thresh-j} gives $\widehat{\lambda}_{j,rc}^0=\widehat a_{j,rc}^0$. We use the converged unpenalized loading estimate as a reference for the iteration-specific pre-thresholding loading and the fully sparse residual estimate as a reference for the iteration-specific residual variance:
\begin{align*}
  |a_{j,rc}^{[t]}|
  \approx
  |\widehat a_{j,rc}^0|
  =
  |\widehat{\lambda}_{j,rc}^0|,
  \qquad
  \psi_{j,rr}^{[t,j-1]}
  \approx
  \widehat{\psi}_{j,rr}^0.
\end{align*}
Substitution into \eqref{eq:zero_out_eq_rho_grid} gives the approximate boundary $|\widehat{\lambda}_{j,rc}^0| \approx \widehat{\psi}_{j,rr}^0\rho/2$.

These approximations motivate the upper endpoint $\rho_{\max}$ of the common tuning-parameter grid. For each mode $j=1,\ldots,d$, let $\rho_{j(\max)}$ denote the approximate smallest value of $\rho$ that shrinks every entry of $\Lambdab_j$ to zero. The preceding boundary implies that
\begin{align}
  \label{eq:rho-grid-max}
  \rho_{j(\max)}
  =
  \max_{1\leq r\leq p_j, \, 1\leq c\leq k_j}
  \frac{
    2|\widehat{\lambda}_{j,rc}^{0}|
  }{
    \widehat{\psi}_{j,rr}^0
  },
  \qquad
  j=1,\ldots,d.
\end{align}
We define the endpoints of the common grid by
\begin{align}
  \label{eq:rho-max-min}
  \rho_{\max}
  =
  \min_{1\leq j\leq d}
  \rho_{j(\max)},
  \qquad
  \rho_{\min}
  =
  \epsilon\rho_{\max},
  \qquad
  \epsilon\in(0,1).
\end{align}
Therefore, $\rho_{\max}$ is the smallest mode-specific threshold and is approximately sufficient to shrink all loading entries to zero in at least one mode. The value of $\epsilon$ is user-specified; we set $\epsilon=0.1$ in our implementation. We construct a decreasing sequence of values evenly spaced on the logarithmic scale from $\rho_{\max}$ to $\rho_{\min}$.

\section{Simulated data analysis}
\label{sec:sim_analysis}

\subsection{Simulation setup}
\label{sec:sim_setup}

The simulation study considers six settings that vary the order and dimension of the complex-valued observations. For a $d$-mode observation array with dimension $\mathbf p=(p_1,\ldots,p_d)$, the mode-specific loading matrices satisfy $\Lambdab_j\in\CC^{p_j\times k_j}$, where $\mathbf k=(k_1,\ldots,k_d)$ denotes the vector of loading ranks. The residual error covariance matrices $\Psib_j$ are $p_j\times p_j$ diagonal matrices with positive entries for $j=1,\ldots,d$. Table~\ref{tab:simulation-scenarios} gives the detailed dimensions for six settings: two vector-valued, two matrix-valued, and two three-dimensional array-valued cases. 

\begin{table}[ht]
\caption{Simulation settings. For each setting, $\Lambdab_j\in\CC^{p_j\times k_j}$ and $\Psib_j\in\RR_{+}^{p_j\times p_j}$ is diagonal for $j=1,\ldots,d$.}
\label{tab:simulation-scenarios}
\centering
\renewcommand{\arraystretch}{0.90}
\begin{tabular}{ccccc}
\hline
Setting & Observation type & Observation size & Dimension vector $\mathbf p$ & Rank vector $\mathbf k$ \\
\hline
1 & Vector & $\xb_i\in\CC^{25}$ & $(25)$ & $(3)$ \\
2 & Vector & $\xb_i\in\CC^{50}$ & $(50)$ & $(4)$ \\
3 & Matrix & $\Xb_i\in\CC^{25\times 25}$ & $(25,25)$ & $(4,3)$ \\
4 & Matrix & $\Xb_i\in\CC^{50\times 50}$ & $(50,50)$ & $(4,3)$ \\
5 & Array & $\Xcal_i\in\CC^{25\times 25\times 25}$ & $(25,25,25)$ & $(4,3,2)$ \\
6 & Array & $\Xcal_i\in\CC^{50\times 50\times 50}$ & $(50,50,50)$ & $(4,3,2)$ \\
\hline
\end{tabular}
\end{table}

The parameters were simulated as follows. For each setting, the diagonal entries of the $\Psib_j$ matrices were sampled independently from a $\mathrm{Unif}(0.1,0.5)$ distribution. For the nonzero entries of the $\Lambdab_j$ matrices, the moduli were sampled independently from $\mathrm{Unif}(0.5,2)$, and the phases were sampled independently from $\mathrm{Unif}(-3,3)$. Each $\Lambdab_j$ was constructed so that each column contained a contiguous block of nonzero entries, with block sizes approximately equal across columns. Consecutive columns were assigned two overlapping nonzero entries. After generating $\Lambdab_j$ and $\Psib_j$ for $j=1,\ldots,d$, we computed the mode-specific covariance matrices as $\Sigmab_j = \Lambdab_j\Lambdab_j^\star + \Psib_j$ for $ j=1,\ldots,d$, and used them to generate the simulated observations as described below. Figure \ref{fig:Lambda_Sigma_example} shows an example of the moduli of a sparse loading matrix and the corresponding mode-specific covariance matrix.

\begin{figure}[htbp]
  \centering

  \begin{subfigure}[t]{0.38\textwidth}
    \centering
    \includegraphics[
      width=\linewidth,
      height=0.32\textheight,
      keepaspectratio
    ]{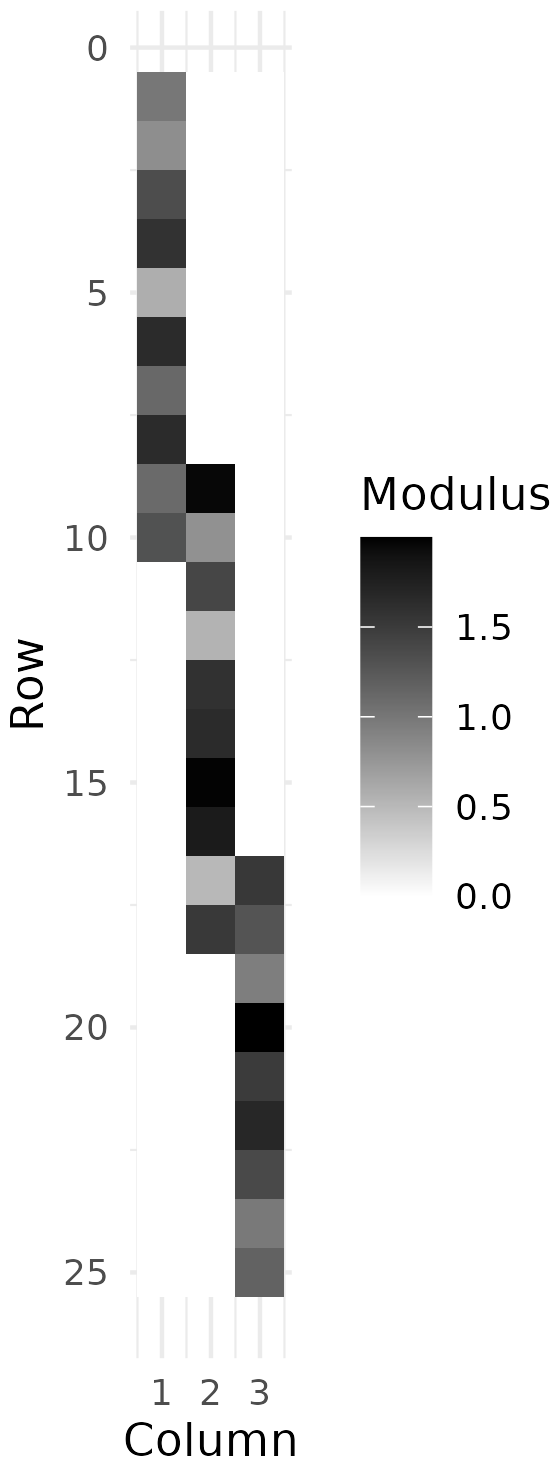}
    \caption{Rank-three sparse loading matrix.}
    \label{fig:Lambda_example}
  \end{subfigure}
  \hfill
  \begin{subfigure}[t]{0.60\textwidth}
    \centering
    \includegraphics[
      width=\linewidth,
      height=0.32\textheight,
      keepaspectratio
    ]{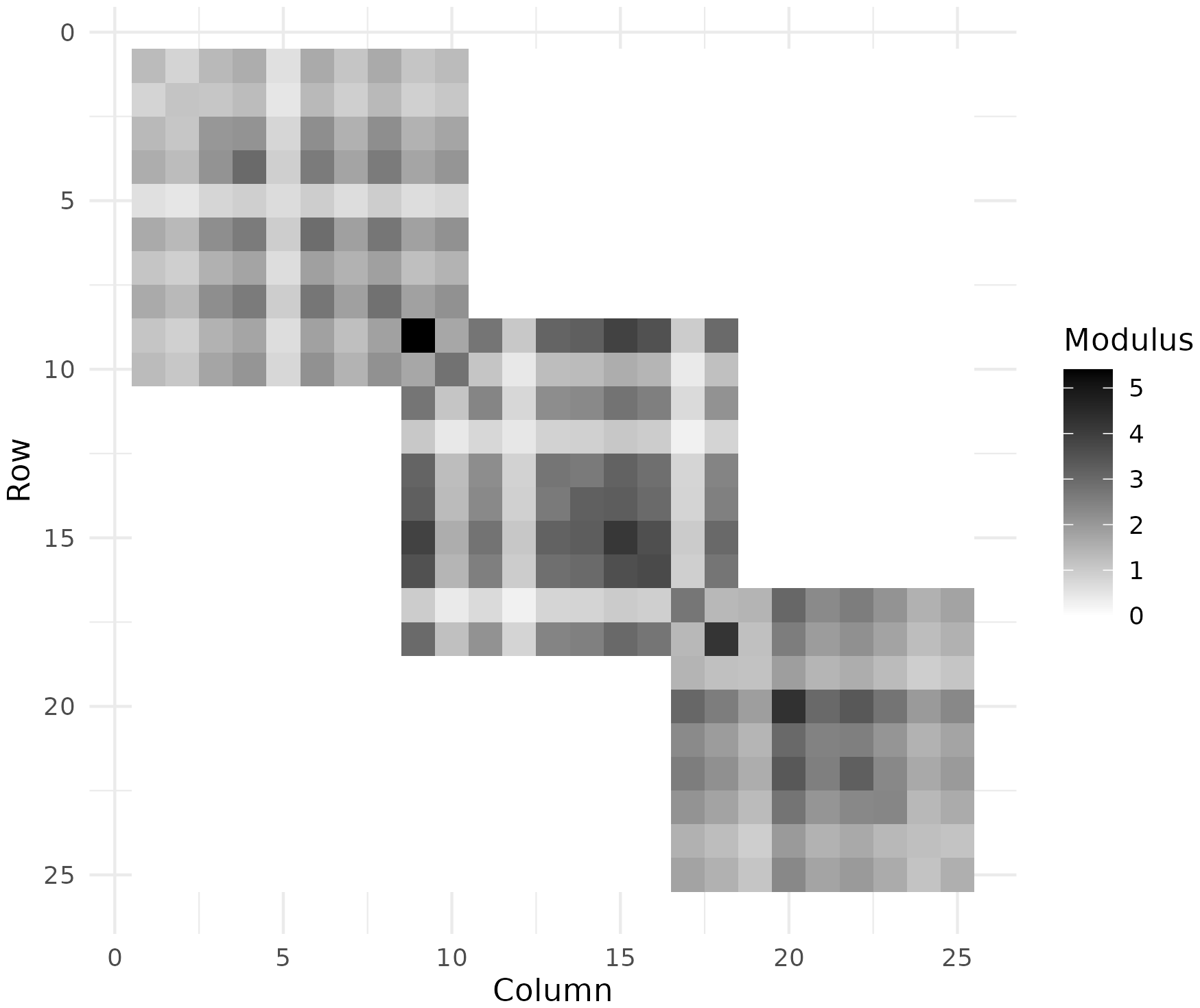}
    \caption{Corresponding mode-specific covariance matrix.}
    \label{fig:Sigma_example}
  \end{subfigure}

  \caption{Example of the moduli of a rank-three sparse loading matrix and the corresponding mode-specific covariance matrix used in the simulation study.}
  \label{fig:Lambda_Sigma_example}
\end{figure}

To assess robustness, we simulated the observations from a misspecified model relative to the circular complex normal distribution assumed by SSFA. The latent  factor vectors or arrays were generated from a real-valued standard normal distribution and then multiplied by the mode-specific complex Cholesky factors using the Tucker product. This transformation preserves the target Hermitian covariance structure but generally induces a nonzero pseudo-covariance matrix, so the resulting data need not follow a circular complex normal distribution. Let $p=\prod_{j=1}^d p_j$ and  $\Lb_j\in\CC^{p_j\times p_j}$ denote the Cholesky factor of the mode-specific covariance matrix $\Sigmab_j=\Lb_j\Lb_j^\star$. For each observation $i=1,\ldots,n$, we generated $\zb_i\in\RR^p$ from a real-valued standard normal distribution. For matrix-valued
and array-valued datasets, we reshaped $\zb_i$ into $\Zb_i\in\RR^{p_1\times p_2}$ and $\Zcal_i\in\RR^{p_1\times p_2\times p_3}$, respectively.
For $i=1,\ldots,n$, $i$th  vector-, matrix-, and array-valued observations are generated as
\begin{align}
   \xb_i = \Lb_1\zb_i, \qquad
   \Xb_i = \Lb_1\Zb_i\Lb_2^\top, \qquad
   \Xcal_i
=
\Zcal_i \times_1 \Lb_1 \times_2 \Lb_2 \times_3 \Lb_3.
\end{align}

For each simulation setting, we varied the sample size over $n\in\{5,10,20,40\}$. We also considered $n=100$ for the vector-valued settings, giving 26 combinations of setting and sample size. Each combination was replicated 100 times. The larger sample size was included only for the vector-valued settings because, for higher-order observations, the mode-$j$ matricization has effective sample size $np_{-j}$, where $p_{-j}=\prod_{\ell\neq j}p_\ell$. Therefore, moderate values of $n$ can still provide substantial information for estimating mode-specific parameters. In contrast, the vector-valued settings remain high-dimensional relative to the sample size, so a larger value of $n$ is needed to assess covariance estimation performance.

\subsection{Competing methods}
\label{sec:sim_competing_methods}

We compare SSFA with three covariance estimators for vector-valued data. The competitors are the empirical covariance estimator, complex PCA, and sparse PCA applied after embedding the complex-valued data in the real domain. Because we are unaware of directly comparable covariance estimation methods for complex-valued arrays, we do not include an array-variate competitor. Each competing method estimates the covariance matrix of the vectorized observations. We therefore vectorize the observations in every simulation setting before fitting these methods. We first describe the implementation of SSFA and then those of the three competitors.

We fit SSFA to each simulated dataset over a grid of 51 tuning parameter values, consisting of $\rho=0$ and 50 positive values constructed using the heuristic in Section~\ref{sec:rho_grid}. The algorithm was initialized with five columns in each loading matrix $\Lambdab_j$ for $j=1,\ldots,d$. For each value of $\rho$, we computed the EBIC as described in Appendix~\ref{sec:ebic_description}. We selected the largest value of $\rho$ whose EBIC was within one median absolute deviation of the minimum EBIC across the 51 fitted models. The selected model provided $\widehat{\Lambdab}_j$, $\widehat{\Psib}_j$, and $\widehat{\Sigmab}_j$ for $j=1,\ldots,d$. We estimated the effective rank of $\Lambdab_j$ by the number of columns of $\widehat{\Lambdab}_j$ containing at least one nonzero entry.

We next describe the empirical covariance and complex PCA competitors. For each simulated dataset, we form the matrix $\Xb\in\CC^{p\times n}$, where $p=\prod_{j=1}^d p_j$ and the $i$th column is the vectorized $i$th observation. The simulated data are centered, so the empirical covariance estimator is $\widehat{\Sigmab}_{\mathrm{emp}} = n^{-1}\Xb\Xb^\star$. Complex PCA imposes an additional low-rank assumption on the covariance structure. For a fixed rank $k$, we compute the eigendecomposition of $\widehat{\Sigmab}_{\mathrm{emp}}$ and estimate $\Lambdab$ using the leading $k$ eigenvectors scaled by the square roots of their corresponding eigenvalues. The estimated residual error covariance satisfies $\widehat \Psib = \widehat \sigma^2 \Ib_p$, where $\widehat \sigma^2$ is estimated as the average of the remaining $(p-k)$ eigenvalues \citep{bishop2006pattern}. We fit complex PCA over  $k\in\{1,2,\ldots,10\}$ and select $k$ using the EBIC criterion described in Appendix~\ref{sec:ebic_description}. The selected model gives the estimates $\widehat{\Lambdab}$, $\widehat{\sigma}^2$, and $\widehat{\Sigmab}_{\mathrm{PCA}}$.

We use sparse PCA as an adapted vectorized sparse competitor. Sparse PCA is designed for real-valued vector data and does not directly provide a covariance estimator for complex-valued observations. Therefore, we use a construction that embeds the data in the real domain and maps the estimated sparse loading directions back to the complex domain. For $\Xb\in\CC^{p\times n}$, define $\breve{\Xb}\in\RR^{2p\times n}$ by taking $\operatorname{Re}(\Xb)$ and $\operatorname{Im}(\Xb)$ as its first and second row blocks, respectively. For each rank $k\in\{1,2,3,4,5\}$ and sparsity parameter
$\alpha\in\{10^{-2},10^{-1},1,2\}$, we apply the \texttt{SparsePCA} function in \texttt{scikit-learn} to $\breve{\Xb}$ \citep{zou2006sparse,mairal2009online,JMLR:v12:pedregosa11a}. We select the fitted model with the smallest EBIC. The resulting real-valued loading matrix is mapped to a complex-valued sparse loading matrix $\widehat{\Ub}\in\CC^{p\times k}$ by treating its first $p$ rows as the real part and its last $p$ rows as the imaginary part.

We estimate the corresponding complex-valued scores by $  \widehat{\Zb} = \left(\widehat{\Ub}^{\star}\widehat{\Ub} + \lambda \Ib_k \right)^{-1} \widehat{\Ub}^{\star}\Xb$, where $\lambda$ is a small ridge-stabilization parameter. Let $\widehat{\Gammab}=n^{-1}\widehat{\Zb}\widehat{\Zb}^{\star}$. To retain the sparsity pattern of each loading column, we scale the columns individually and define $  \widehat{\Lambdab} = \widehat{\Ub} \operatorname{diag}(\widehat{\Gammab})^{1/2}$. We then construct a covariance estimate of the form $  \widehat{\Sigmab}_{\mathrm{SPCA}} = \widehat{\Lambdab}\widehat{\Lambdab}^{\star} + \widehat{\sigma}^{2}\Ib_p$, where $\widehat{\sigma}^{2}$ is set to the mean of the nonzero eigenvalues of $\widehat{\Sigmab}_{\mathrm{emp}} -\widehat{\Lambdab}\widehat{\Lambdab}^{\star}$. This construction is used only to obtain a sparse vectorized benchmark. It is not intended as a native complex-valued sparse PCA model.

\subsection{Loading matrix estimation accuracy}
\label{subspace-err-metrics}

We evaluate estimation accuracy of Algorithm~\ref{algo} using two complementary metrics. The first is a standard subspace error based on the Davis--Kahan theorem \citep{yu2015useful}. For each mode $j=1, \ldots, d$, let $\Pb_j = \Lambdab_j(\Lambdab_j^\star \Lambdab_j)^{+}\Lambdab_j^\star$ be the orthogonal projector onto the column space of $\Lambdab_j$, where $+$ denotes the Moore--Penrose pseudoinverse. We similarly define $\widehat{\Pb}_j$ as the orthogonal projection matrix onto the column space of $\widehat \Lambdab_j$. The null-space projection error is
\begin{align}
  \label{eq:null_proj_err}
  \mathrm{err}_{\Pb_j}^2
  =
  \|(\Ib_{p_j}-\widehat{\Pb}_j)\Pb_j\|_F^2,
  \qquad j=1,\ldots,d,
\end{align}
where $\|\cdot\|_F$ denotes the Frobenius norm. This metric quantifies the amount of the true column space of $\Lambdab_j$ that lies outside the estimated column space of $\widehat{\Lambdab}_j$. However, it does not account for the relative importance of the singular directions of $\Lambdab_j$, as measured by their squared singular values.

The second metric measures the fraction of the important principal directions of the true loading matrix captured by the estimated column space. This metric is less standard but useful because the singular directions of $\Lambdab_j$ differ in their contributions to the covariance structure. We define the weighted null-space projection error by
\begin{align}
  \label{eq:null_err}
  \mathrm{err}_{\Lambdab_j}^2
  =
  {\|(\Ib_{p_j}-\widehat{\Pb}_j)\Lambdab_j\|_F^2} /
       {\|\Lambdab_j\|_F^2},
  \qquad j=1,\ldots,d,
\end{align}
where we assume that $\Lambdab_j \neq \zero$. This metric measures the fraction of the squared Frobenius norm of $\Lambdab_j$ that lies outside the estimated column space of $\widehat{\Lambdab}_j$. Because$\|\Lambdab_j\|_F^2$ equals the sum of its squared singular values, the $i$th summand in $\mathrm{err}_{\Lambdab_j}^2$ has a weight proportional to the squared $i$th singular value of $\Lambdab_j$. Appendices~\ref{sec:supp-null-proj-true-proj} and \ref{sec:supp-null-proj-lambda} give the exact representations of $\mathrm{err}_{\Pb_j}^2$ in \eqref{eq:null_proj_err} and $\mathrm{err}_{\Lambdab_j}^2$ in \eqref{eq:null_err}, respectively.

\subsection{Simulation results}

We summarize simulation results using three performance metrics. The first two are the subspace error metrics: $\mathrm{err}_{\Pb_j}^2$ in \eqref{eq:null_proj_err} and  $\mathrm{err}_{\Lambdab_j}^2$ in \eqref{eq:null_err}. These metrics evaluate recovery of the column space of the mode-specific loading matrix for $j=1,\ldots,d$. The third metric is the relative Frobenius error for covariance estimation, defined by
\begin{align}
  \label{eq:err}
  \mathrm{err}_{\xib}
  =
  \frac{\|\widehat{\xib}-\xib\|_F}{\|\xib\|_F}, \qquad \xib \in \{\Sigmab_1,\ldots,\Sigmab_d,\Sigmab\}, \qquad \Sigmab=\Sigmab_d\otimes\cdots\otimes\Sigmab_1.
\end{align}
We first compare all four methods using $\mathrm{err}_{\Sigmab}$ because the competing methods are applied only to the vectorized observations. For complex PCA and sparse PCA, loading-matrix subspace errors are restricted to the vector-valued simulations. For SSFA, we also report mode-specific covariance errors and loading-matrix subspace errors. For each combination of simulation setting and sample size, we report the median and median absolute deviation of each metric across the 100 simulated datasets.

SSFA and its competitors have similar full covariance errors $\mathrm{err}_{\Sigmab}$ for vector-valued data, but the advantage of SSFA becomes substantial for matrix- and three-dimensional array-valued observations (Table~\ref{table:estimating_sigma_all_methods}). In the vector-valued settings, SSFA, empirical covariance, and complex PCA have comparable errors, whereas sparse PCA performs worse. In the higher-order settings, SSFA has much smaller $\mathrm{err}_{\Sigmab}$ values than the vectorization-based competitors across all sample sizes. The errors generally decrease as $n$ increases. The smaller errors achieved by SSFA indicate that exploiting separability and mode-specific factor structure improves covariance estimation for complex-valued arrays.

\begin{table}[h!]
  \caption{
    Comparison across methods of the median and median absolute deviation of $\text{err}_{\Sigmab}$ defined in \eqref{eq:err}. The median absolute deviations are in parentheses. Empirical denotes the empirical covariance estimation method. A ``--'' indicates that the entry could not be computed because of prohibitive computational cost.
    \label{table:estimating_sigma_all_methods}}
  \resizebox{\textwidth}{!}{%
  \begin{tabular}{clcccc}
    \hline
    Setting & Size & Complex PCA & Sparse PCA & SSFA & Empirical \\
    \hline
    \multirow{5}{8em}{$\xb_i \in \CC^{25}$}
      &   $n=5$ & 0.828 (0.154) & 1.058 (0.219) & 0.811 (0.148) & 0.813 (0.144) \\
      &  $n=10$ & 0.597 (0.103) & 0.651 (0.149) & 0.590 (0.097) & 0.598 (0.100) \\
      &  $n=20$ & 0.427 (0.070) & 0.380 (0.065) & 0.419 (0.063) & 0.429 (0.063) \\
      &  $n=40$ & 0.303 (0.055) & 0.276 (0.039) & 0.301 (0.052) & 0.307 (0.051) \\
      & $n=100$ & 0.197 (0.027) & 0.179 (0.030) & 0.191 (0.028) & 0.199 (0.026) \\
    \hline
    \multirow{5}{8em}{$\xb_i \in \CC^{50}$}
      &   $n=5$ &0.994 (0.186) & 1.620 (0.316) & 0.939 (0.177) & 0.942 (0.181) \\
      &  $n=10$ & 0.743 (0.105) & 0.918 (0.162) & 0.688 (0.085) & 0.698 (0.087) \\
      &  $n=20$ & 0.501 (0.061) & 0.478 (0.084) & 0.488 (0.058) & 0.501 (0.056) \\
      &  $n=40$ & 0.361 (0.046) & 0.314 (0.044) & 0.351 (0.042) & 0.362 (0.044) \\
      & $n=100$ & 0.226 (0.022) & 0.210 (0.028) & 0.220 (0.024) & 0.227 (0.021) \\
    \hline
    \multirow{4}{8em}{$\Xb_i \in \CC^{25 \times 25}$}
      &  $n=5$ & 15.472 (1.344) & 14.273 (1.780) & 0.271 (0.031) & 1.782 (0.184) \\
      & $n=10$ &  8.139 (0.513) &  8.539 (0.530) & 0.203 (0.023) & 1.281 (0.079) \\
      & $n=20$ &  4.250 (0.182) &  4.074 (0.567) & 0.168 (0.020) & 0.913 (0.048) \\
      & $n=40$ &  0.917 (0.048) &  1.398 (0.064) & 0.124 (0.020) & 0.649 (0.037) \\
    \hline
    \multirow{4}{8em}{$\Xb_i \in \CC^{50 \times 50}$}
      &  $n=5$ & 35.103 (3.132) & 7.352 (0.524) & 0.208 (0.028) & 1.990 (0.169) \\
      & $n=10$ & 18.017 (0.999) & 7.793 (0.588) & 0.194 (0.020) & 1.381 (0.079) \\
      & $n=20$ &  9.263 (0.441) & 7.113 (0.342) & 0.139 (0.016) & 0.985 (0.048) \\
      & $n=40$ &  4.723 (0.168) & 3.883 (0.473) & 0.113 (0.015) & 0.698 (0.036) \\
    \hline
    \multirow{4}{8em}{$\Xcal_i \in \CC^{25 \times 25 \times 25}$}
      & $n=5$ & 141.700 (7.482) & - & 0.168 (0.018) & 2.933 (0.144) \\
      & $n=10$ & 72.256 (3.215) & - & 0.145 (0.027) & 2.067 (0.110) \\
      & $n=20$ & 36.810 (1.003) & - & 0.152 (0.035) & 1.443 (0.053) \\
      & $n=40$ & 18.879 (0.601) & - & 0.153 (0.033) & 1.039 (0.033) \\
    \hline
    \multirow{4}{8em}{$\Xcal_i \in \CC^{50 \times 50 \times 50}$}
      &  $n=5$ & - & - & - & - \\
      & $n=10$ & - & - & - & - \\
      & $n=20$ & - & - & - & - \\
      & $n=40$ & - & - & - & - \\
    \hline
  \end{tabular}
  }
\end{table}

Complex PCA and SSFA have similar loading-matrix subspace errors, $\mathrm{err}_{\Pb_1}^2$ and $\mathrm{err}_{\Lambdab_1}^2$, across all sample sizes for the vector-valued simulations (Table~\ref{table:estimating_loading_errors_all_methods}). The empirical covariance estimator is not included because it does not produce loading matrix estimates. For the remaining three methods, the errors generally decrease as $n$ increases. Sparse PCA has substantially larger $\mathrm{err}_{\Pb_1}^2$ and $\mathrm{err}_{\Lambdab_1}^2$ values than complex PCA and SSFA, indicating that imposing sparsity after embedding the data in the real domain does not recover the complex loading structure as accurately as direct estimation in the complex domain. These results indicate that complex PCA and SSFA have comparable loading-space estimation accuracy for vector-valued data.

\begin{table}[h!]
  \centering
  \renewcommand{\arraystretch}{0.90}
  \caption{
    Median and median absolute deviation of the loading-matrix errors $\mathrm{err}_{\Pb_1}$ and $\mathrm{err}_{\Lambdab_1}$ defined in \eqref{eq:null_proj_err} and \eqref{eq:null_err} for the vector-valued simulations. Median absolute deviations are shown in parentheses. \label{table:estimating_loading_errors_all_methods}
  }
  \begin{tabular}{clccc}
    \hline
    \multicolumn{5}{c}{$\mathrm{err}_{\Pb_1}$} \\
    \hline
    Setting & Size & Complex PCA & Sparse PCA & SSFA \\
    \hline
    \multirow{5}{8em}{$\xb_i \in \CC^{25}$}
      &   $n=5$ & 0.664 (0.089) & 1.358 (0.108) & 0.664 (0.086) \\
      &  $n=10$ & 0.389 (0.049) & 1.021 (0.405) & 0.388 (0.038) \\
      &  $n=20$ & 0.275 (0.029) & 0.299 (0.031) & 0.258 (0.029) \\
      &  $n=40$ & 0.191 (0.017) & 0.211 (0.022) & 0.174 (0.016) \\
      & $n=100$ & 0.117 (0.010) & 0.136 (0.012) & 0.103 (0.009) \\
    \hline
    \multirow{5}{8em}{$\xb_i \in \CC^{50}$}
      &   $n=5$ & 0.986 (0.102) & 1.651 (0.097) & 0.984 (0.101) \\
      &  $n=10$ & 0.610 (0.092) & 1.140 (0.304) & 0.580 (0.071) \\
      &  $n=20$ & 0.385 (0.017) & 0.444 (0.053) & 0.366 (0.020) \\
      &  $n=40$ & 0.255 (0.015) & 0.277 (0.022) & 0.241 (0.013) \\
      & $n=100$ & 0.158 (0.010) & 0.177 (0.013) & 0.146 (0.011) \\
    \hline
  \end{tabular}

  \begin{tabular}{clccc}
    \hline
    \multicolumn{5}{c}{$\mathrm{err}_{\Lambdab_1}$} \\
    \hline
    Setting & Size & Complex PCA & Sparse PCA & SSFA \\
    \hline
    \multirow{5}{8em}{$\xb_i \in \CC^{25}$}
      &   $n=5$ & 0.385 (0.053) & 0.768 (0.079) & 0.383 (0.050) \\
      &  $n=10$ & 0.222 (0.028) & 0.527 (0.234) & 0.218 (0.024) \\
      &  $n=20$ & 0.155 (0.019) & 0.168 (0.018) & 0.142 (0.017) \\
      &  $n=40$ & 0.106 (0.009) & 0.117 (0.012) & 0.098 (0.009) \\
      & $n=100$ & 0.065 (0.004) & 0.076 (0.006) & 0.057 (0.004) \\
    \hline
    \multirow{5}{8em}{$\xb_i \in \CC^{50}$}
      &   $n=5$ & 0.486 (0.049) & 0.813 (0.062) & 0.486 (0.047) \\
      &  $n=10$ & 0.299 (0.047) & 0.571 (0.142) & 0.286 (0.036) \\
      &  $n=20$ & 0.189 (0.010) & 0.214 (0.026) & 0.180 (0.010) \\
      &  $n=40$ & 0.125 (0.007) & 0.136 (0.009) & 0.119 (0.006) \\
      & $n=100$ & 0.077 (0.005) & 0.087 (0.005) & 0.072 (0.005) \\
    \hline
  \end{tabular}
\end{table}

A key feature of SSFA, differentiating it from the vectorized competitors, is its ability to estimate mode-specific covariance and loading matrices. Tables~\ref{table:ssfa_mode_one_sim}--\ref{table:ssfa_mode_three_sim} summarize these mode-specific results. Table~\ref{table:ssfa_mode_one_sim} reports mode-one results for all six settings, Table~\ref{table:ssfa_mode_two_sim} reports mode-two results for the matrix-valued and three-dimensional array-valued settings, and Table~\ref{table:ssfa_mode_three_sim} reports mode-three results for the three-dimensional array-valued settings. Across modes, the covariance errors $\mathrm{err}_{\Sigmab_j}$ and loading-matrix errors $\mathrm{err}_{\Lambdab_j}$ and $\mathrm{err}_{\Pb_j}$ generally decrease as $n$ increases.

The lasso penalty on the mode-specific loading matrices also allows SSFA to estimate the effective rank of each $\Lambdab_j$ for $j=1, \ldots, d$. We estimate the effective rank of $\Lambdab_j$ as the number of columns of $\widehat{\Lambdab}_j$ with at least one nonzero entry. The rank estimates are less accurate for vector-valued data, but improve substantially for matrix-valued and three-dimensional array-valued observations. In these higher-order settings, the estimated ranks are close to the true ranks, especially as the dimension and sample size increase. Overall, these results show that SSFA accurately estimates mode-specific covariance matrices, recovers the column spaces associated with the mode-specific loading matrices, and provides useful effective rank estimates in higher-order array settings.

\begin{table}[h!]
  \caption{
    SSFA mode-one estimation results. Entries report medians, with median absolute deviations shown in parentheses. The metrics $\mathrm{err}_{\Sigmab_1}$, $\mathrm{err}_{\Lambdab_1}$, and $\mathrm{err}_{\Pb_1}$ are defined in \eqref{eq:err}, \eqref{eq:null_err}, and \eqref{eq:null_proj_err}, respectively. \label{table:ssfa_mode_one_sim}}
  \resizebox{\textwidth}{!}{%
  \begin{tabular}{clcrrccc}
    \hline
    scenario & size & $\text{err}_{\Sigmab_1}$ & $\text{err}_{\Lambdab_1}$ & $\text{err}_{\Pb_1}$ & proportion zero $\widehat{\Lambdab}_1$ & rank $\widehat{\Lambdab}_1$ & true rank $\Lambdab_1$ \\
    \hline
    \multirow{5}{8em}{$\xb_i \in \CC^{25}$}
      &   $n=5$ & 0.811 (0.148) & 0.383 (0.050) & 0.664 (0.086) & 0.000 (0.000) & 4.000 (0.000) & \multirow{4}{2em}{3} \\
      &  $n=10$ & 0.590 (0.097) & 0.218 (0.024) & 0.388 (0.038) & 0.040 (0.016) & 5.000 (0.000) & \\
      &  $n=20$ & 0.419 (0.063) & 0.142 (0.017) & 0.258 (0.029) & 0.056 (0.008) & 5.000 (0.000) & \\
      &  $n=40$ & 0.301 (0.052) & 0.098 (0.009) & 0.174 (0.016) & 0.064 (0.016) & 5.000 (0.000) & \\
      & $n=100$ & 0.191 (0.028) & 0.057 (0.004) & 0.103 (0.009) & 0.096 (0.024) & 5.000 (0.000) & \\
    \hline
    \multirow{5}{8em}{$\xb_i \in \CC^{50}$}
      &   $n=5$ & 0.939 (0.177) & 0.486 (0.047) & 0.984 (0.101) & 0.000 (0.000) & 4.000 (0.000) & \multirow{4}{2em}{4} \\
      &  $n=10$ & 0.688 (0.085) & 0.286 (0.036) & 0.580 (0.071) & 0.012 (0.004) & 5.000 (0.000) & \\
      &  $n=20$ & 0.488 (0.058) & 0.180 (0.010) & 0.366 (0.020) & 0.020 (0.004) & 5.000 (0.000) & \\
      &  $n=40$ & 0.351 (0.042) & 0.119 (0.006) & 0.241 (0.013) & 0.024 (0.008) & 5.000 (0.000) & \\
      & $n=100$ & 0.220 (0.024) & 0.072 (0.005) & 0.146 (0.011) & 0.044 (0.016) & 5.000 (0.000) & \\
    \hline
    \multirow{4}{8em}{$\Xb_i \in \CC^{25 \times 25}$}
      &  $n=5$ & 0.212 (0.039) & 0.061 (0.005) & 0.128 (0.011) & 0.068 (0.020) & 5.000 (0.000) & \multirow{4}{2em}{4} \\
      & $n=10$ & 0.154 (0.022) & 0.039 (0.002) & 0.082 (0.006) & 0.120 (0.032) & 5.000 (0.000) & \\
      & $n=20$ & 0.122 (0.027) & 0.027 (0.002) & 0.056 (0.004) & 0.140 (0.030) & 4.000 (0.000) & \\
      & $n=40$ & 0.089 (0.020) & 0.019 (0.001) & 0.040 (0.003) & 0.150 (0.030) & 4.000 (0.000) & \\
    \hline
    \multirow{4}{8em}{$\Xb_i \in \CC^{50 \times 50}$}
      &  $n=5$ & 0.160 (0.024) & 0.044 (0.002) & 0.089 (0.005) & 0.082 (0.046) & 5.000 (0.000) & \multirow{4}{2em}{4} \\
      & $n=10$ & 0.121 (0.021) & 0.029 (0.001) & 0.058 (0.003) & 0.177 (0.033) & 4.000 (0.000) & \\
      & $n=20$ & 0.091 (0.023) & 0.020 (0.001) & 0.040 (0.002) & 0.180 (0.030) & 4.000 (0.000) & \\
      & $n=40$ & 0.070 (0.018) & 0.014 (0.001) & 0.027 (0.001) & 0.235 (0.045) & 4.000 (0.000) & \\
    \hline
    \multirow{4}{8em}{$\Xcal_i \in \CC^{25 \times 25 \times 25}$}
      &  $n=5$ & 0.070 (0.024) & 0.010 (0.001) & 0.021 (0.002) & 0.240 (0.030) & 4.000 (0.000) & \multirow{4}{2em}{4} \\
      & $n=10$ & 0.058 (0.022) & 0.007 (0.001) & 0.015 (0.001) & 0.315 (0.065) & 4.000 (0.000) & \\
      & $n=20$ & 0.056 (0.027) & 0.005 (0.001) & 0.011 (0.001) & 0.410 (0.060) & 4.000 (0.000) & \\
      & $n=40$ & 0.053 (0.027) & 0.004 (0.001) & 0.008 (0.001) & 0.470 (0.070) & 4.000 (0.000) & \\
    \hline
    \multirow{4}{8em}{$\Xcal_i \in \CC^{50 \times 50 \times 50}$}
      &  $n=5$ & 0.054 (0.017) & 0.005 (0.000) & 0.011 (0.001) & 0.427 (0.055) & 4.000 (0.000) & \multirow{4}{2em}{4} \\
      & $n=10$ & 0.054 (0.022) & 0.004 (0.000) & 0.008 (0.001) & 0.518 (0.077) & 4.000 (0.000) & \\
      & $n=20$ & 0.054 (0.021) & 0.003 (0.000) & 0.007 (0.001) & 0.600 (0.052) & 4.000 (0.000) & \\
      & $n=40$ & 0.056 (0.024) & 0.003 (0.000) & 0.006 (0.001) & 0.657 (0.035) & 4.000 (0.000) & \\
    \hline
  \end{tabular}}
\end{table}

\begin{table}[h!]
  \caption{
    SSFA mode-two estimation results. Entries report medians, with median absolute deviations shown in parentheses. The metrics $\mathrm{err}_{\Sigmab_2}$, $\mathrm{err}_{\Lambdab_2}$, and $\mathrm{err}_{\Pb_2}$ are defined in \eqref{eq:err}, \eqref{eq:null_err}, and \eqref{eq:null_proj_err}, respectively.
    \label{table:ssfa_mode_two_sim}}
  \resizebox{\textwidth}{!}{%
  \begin{tabular}{clcrrccc}
    \hline
    scenario & size & $\text{err}_{\Sigmab_2}$ & $\text{err}_{\Lambdab_2}$ & $\text{err}_{\Pb_2}$ & proportion zero $\widehat{\Lambdab}_2$ & rank $\widehat{\Lambdab}_2$ & true rank $\Lambdab_2$ \\
    \hline
    \multirow{4}{8em}{$\Xb_i \in \CC^{25 \times 25}$}
      &  $n=5$ & 0.206 (0.035) & 0.054 (0.005) & 0.097 (0.008) & 0.088 (0.024) & 5.000 (0.000) & \multirow{4}{2em}{3} \\
      & $n=10$ & 0.142 (0.032) & 0.037 (0.004) & 0.065 (0.007) & 0.111 (0.031) & 4.000 (1.000) & \\
      & $n=20$ & 0.104 (0.025) & 0.025 (0.002) & 0.045 (0.004) & 0.120 (0.027) & 3.000 (0.000) & \\
      & $n=40$ & 0.078 (0.016) & 0.017 (0.001) & 0.031 (0.003) & 0.120 (0.027) & 3.000 (0.000) & \\
    \hline
    \multirow{4}{8em}{$\Xb_i \in \CC^{50 \times 50}$}
      &  $n=5$ & 0.147 (0.028) & 0.039 (0.002) & 0.068 (0.003) & 0.096 (0.044) & 5.000 (0.000) & \multirow{4}{2em}{3} \\
      & $n=10$ & 0.112 (0.025) & 0.026 (0.001) & 0.045 (0.003) & 0.153 (0.037) & 3.000 (0.000) & \\
      & $n=20$ & 0.086 (0.022) & 0.018 (0.001) & 0.031 (0.002) & 0.160 (0.027) & 3.000 (0.000) & \\
      & $n=40$ & 0.063 (0.018) & 0.012 (0.001) & 0.022 (0.001) & 0.213 (0.040) & 3.000 (0.000) & \\
    \hline
    \multirow{4}{8em}{$\Xcal_i \in \CC^{25 \times 25 \times 25}$}
      &  $n=5$ & 0.067 (0.025) & 0.010 (0.001) & 0.017 (0.001) & 0.213 (0.040) & 3.000 (0.000) & \multirow{4}{2em}{3} \\
      & $n=10$ & 0.057 (0.025) & 0.007 (0.001) & 0.012 (0.001) & 0.267 (0.073) & 3.000 (0.000) & \\
      & $n=20$ & 0.055 (0.024) & 0.005 (0.001) & 0.009 (0.001) & 0.360 (0.067) & 3.000 (0.000) & \\
      & $n=40$ & 0.051 (0.026) & 0.004 (0.001) & 0.007 (0.001) & 0.440 (0.067) & 3.000 (0.000) & \\
    \hline
    \multirow{4}{8em}{$\Xcal_i \in \CC^{50 \times 50 \times 50}$}
      &  $n=5$ & 0.051 (0.018) & 0.005 (0.000) & 0.009 (0.001) & 0.383 (0.050) & 3.000 (0.000) & \multirow{4}{2em}{3} \\
      & $n=10$ & 0.048 (0.018) & 0.004 (0.000) & 0.007 (0.001) & 0.440 (0.073) & 3.000 (0.000) & \\
      & $n=20$ & 0.047 (0.018) & 0.003 (0.001) & 0.006 (0.001) & 0.547 (0.060) & 3.000 (0.000) & \\
      & $n=40$ & 0.047 (0.019) & 0.003 (0.001) & 0.005 (0.001) & 0.590 (0.030) & 3.000 (0.000) & \\
    \hline
  \end{tabular}}
\end{table}

\begin{table}[h!]
  \caption{
    SSFA mode-three estimation results. Entries report medians, with median absolute deviations shown in parentheses. The metrics $\mathrm{err}_{\Sigmab_3}$, $\mathrm{err}_{\Lambdab_3}$, and $\mathrm{err}_{\Pb_3}$ are defined in \eqref{eq:err}, \eqref{eq:null_err}, and \eqref{eq:null_proj_err}, respectively.
    \label{table:ssfa_mode_three_sim}}
  \resizebox{\textwidth}{!}{%
  \begin{tabular}{clcrrccc}
    \hline
    scenario & size & $\text{err}_{\Sigmab_3}$ & $\text{err}_{\Lambdab_3}$ & $\text{err}_{\Pb_3}$ & proportion zero $\widehat{\Lambdab}_3$ & rank $\widehat{\Lambdab}_3$ & true rank $\Lambdab_3$ \\
    \hline
    \multirow{4}{8em}{$\Xcal_i \in \CC^{25 \times 25 \times 25}$}
      &  $n=5$ & 0.052 (0.021) & 0.009 (0.001) & 0.013 (0.001) & 0.160 (0.040) & 2.000 (0.000) & \multirow{4}{2em}{2} \\
      & $n=10$ & 0.048 (0.020) & 0.007 (0.001) & 0.009 (0.001) & 0.200 (0.060) & 2.000 (0.000) & \\
      & $n=20$ & 0.047 (0.021) & 0.005 (0.000) & 0.007 (0.001) & 0.280 (0.060) & 2.000 (0.000) & \\
      & $n=40$ & 0.046 (0.022) & 0.004 (0.001) & 0.005 (0.001) & 0.340 (0.080) & 2.000 (0.000) & \\
    \hline
    \multirow{4}{8em}{$\Xcal_i \in \CC^{50 \times 50 \times 50}$}
      &  $n=5$ & 0.050 (0.018) & 0.005 (0.000) & 0.007 (0.001) & 0.280 (0.065) & 2.000 (0.000) & \multirow{4}{2em}{2} \\
      & $n=10$ & 0.042 (0.019) & 0.004 (0.001) & 0.005 (0.001) & 0.355 (0.065) & 2.000 (0.000) & \\
      & $n=20$ & 0.044 (0.018) & 0.003 (0.001) & 0.005 (0.001) & 0.405 (0.055) & 2.000 (0.000) & \\
      & $n=40$ & 0.047 (0.017) & 0.003 (0.001) & 0.004 (0.001) & 0.450 (0.030) & 2.000 (0.000) & \\
    \hline
  \end{tabular}}
\end{table}

\section{Real-world data analyses}
\label{sec:real_analysis}

We analyzed LFP recordings from 15 mice with electrodes implanted in 13 brain regions. Nine mice had recordings missing from one or more regions because of electrode misplacement. We first imputed these recordings using an unpenalized vectorized complex factor model. To evaluate imputation, we used the six mice with complete recordings and sequentially masked either one region or one pair of regions, predicting the omitted recordings from the remaining regions.

After imputation, we fit SSFA to all 15 mice under three covariance structures. The first combines brain region and frequency into one mode and treats time as a second mode. The second treats brain region, frequency, and time as separate modes. The third again combines brain region and frequency but treats time windows as independent observations. We describe the experimental setting and preprocessing, evaluate imputation, compare the three SSFA formulations, and benchmark SSFA against existing methods.

\subsection{Experimental setting}
\label{sec:realdata_exp_setting}

Fourteen mice were recorded on three experimental days, and one mouse was recorded only on days two and three. Each day included a 600-second baseline period followed by four post-administration phases: 150 seconds of incubation, 300 seconds in phase 1, 300 seconds in phase 2, and 1800 seconds in phase 3. Mice received the control condition on days one and three and the treatment condition on day two. Baseline activity was recorded before administration, followed by 40 minutes of post-administration recording.

We divided each LFP signal into one-second windows containing 1000 observations sampled at 1000 Hz. For each window and brain region, we applied a Fourier transform and retained the coefficients from 0 to 100 Hz at 1 Hz resolution. Each window therefore contained $p=13\times101=1313$ brain-region--frequency features. After vectorizing across brain regions, each one-second window has feature dimension $p = 13 \times 101 = 1313$. Complete recordings from all 13 regions were available for six mice. For the remaining nine mice, electrode misplacement caused one or more brain regions to be missing across all time windows.

\subsection{Imputation of missing brain-region recordings}
\label{sec:realdata_imputation_analysis}

We first evaluated the ability of the vectorized complex factor model in \eqref{eq:fact-mdl} to impute missing brain-region recordings. Because the objective was prediction rather than interpretation, we imposed neither sparsity nor separability. The model uses low-rank dependence among the Fourier-domain features to predict missing regions from the observed regions.

We created validation cases using the six mice with complete recordings. For each mouse, we first treated one brain region at a time as missing and imputed its Fourier coefficients. This procedure was repeated by treating each pair of brain regions as missing. This produced $6 \times \left\{13 + {13 \choose 2}\right\} = 546$ validation cases, corresponding to 13 single-region omissions and 78 two-region omissions for each mouse.

We compared three imputation methods under a subset-specific complex normal model. We stratified the data by experimental day and phase, giving 15 day-phase subsets. Within subset $s$, we modeled the $p$-dimensional vectorized Fourier coefficients for each time window as $\xb_{s,t,m}\sim\CC\Ncal_p(\mub_s,\Sigmab_s)$ for $s=1,\ldots,15$, where $m$ indexes the mouse, $t=1,\ldots,n_{s,m}$ indexes the time window for mouse $m$ in subset $s$, $\mub_s\in\CC^p$ is the subset-specific mean vector, and $\Sigmab_s\in\CC^{p\times p}$ is the subset-specific covariance matrix. The three imputation methods differ in how $\Sigmab_s$ is specified or estimated. For all methods, $\mub_s$ is estimated by the empirical mean of the observed Fourier-domain features across all available mice and time windows in subset $s$, denoted by $\widehat{\mub}_s$.

The first method uses subset-specific mean imputation, which ignores dependence among the Fourier-domain features and is equivalent to setting $\Sigmab_s=\Ib_p$ for $s=1,\ldots,15$. A missing value of feature $\ell$ is replaced by $\widehat{\mu}_{s,\ell}$. The second method estimates $\Sigmab_s$ using the empirical complex covariance matrix. Each entry of $\widehat{\Sigmab}_s$ is computed from all available mice and time windows in subset $s$.

The third method uses the unpenalized vectorized complex factor model in \eqref{eq:fact-mdl}. For subset $s$, we assume $\Sigmab_s=\Lambdab_s\Lambdab_s^\star+\Psib_s$, where $\Lambdab_s\in\CC^{p\times k}$ and $\Psib_s\in\RR_{+}^{p\times p}$ is diagonal. We fit the model separately within each subset for $k\in\{10,25,50,100,150,250\}$ factors. For each $s$ and $k$, we applied Algorithm~\ref{algo} with $\rho=0$ to the centered observations $\xb_{s,t,m}-\widehat{\mub}_s$ across all mice and time windows in subset $s$, yielding $\widehat{\Lambdab}_s$ and $\widehat{\Psib}_s$.

For each subset and imputation method, we imputed the missing Fourier-domain features using the conditional mean under the fitted complex normal model. Let $\xb_{s,t,m}^{(\mathrm{mis})}$ and $\xb_{s,t,m}^{(\mathrm{obs})}$ denote the missing and observed components of $\xb_{s,t,m}$, respectively. Accordingly, partition $\widehat{\mub}_s$, $\widehat{\Sigmab}_s$, and $\xb_{s,t,m}$ as
\begin{align*}
  \widehat{\mub}_s =
  \begin{bmatrix}
    \widehat{\mub}_s^{(\mathrm{mis})} \\
    \widehat{\mub}_s^{(\mathrm{obs})}
  \end{bmatrix}, \qquad
  \widehat{\Sigmab}_s =
  \begin{bmatrix}
    \widehat{\Sigmab}_s^{(\mathrm{mis},\mathrm{mis})}
    &
    \widehat{\Sigmab}_s^{(\mathrm{mis},\mathrm{obs})}
    \\
    \widehat{\Sigmab}_s^{(\mathrm{obs},\mathrm{mis})}
    &
    \widehat{\Sigmab}_s^{(\mathrm{obs},\mathrm{obs})}
  \end{bmatrix}, \qquad
  \xb_{s,t,m} =
  \begin{bmatrix}
    \xb_{s,t,m}^{(\mathrm{mis})} \\
    \xb_{s,t,m}^{(\mathrm{obs})}
  \end{bmatrix}.
\end{align*}
The imputed values of $\xb_{s,t,m}^{(\mathrm{mis})}$ are
\begin{align}
\label{eq:realdata_conditional_mean_impute}
\widehat{\xb}_{s,t,m}^{(\mathrm{mis})}
=
\widehat{\mub}_s^{(\mathrm{mis})}
+
\widehat{\Sigmab}_s^{(\mathrm{mis},\mathrm{obs})}
\left\{
\widehat{\Sigmab}_s^{(\mathrm{obs},\mathrm{obs})}
\right\}^{-1}
\left(
  \xb_{s,t,m}^{(\mathrm{obs})}
  -
  \widehat{\mub}_s^{(\mathrm{obs})}
\right).
\end{align}

To evaluate imputation accuracy, we applied the inverse Fourier transform to the imputed coefficients and compared the reconstructed LFP recordings with the observed recordings. For mouse $m$ and omission set $r$, we computed the log relative norm error
\begin{align}
  \label{eq:impute_log_err}
  \mathrm{logerr}_{m,r}
  =
  \log\left(
  \frac{
  \left\lVert \Xb_{m,r} - \widehat{\Xb}_{m,r} \right\rVert
  }{
  \left\lVert \Xb_{m,r} \right\rVert
  }
  \right),
  \qquad m=1,\ldots,6,
  \qquad r=1,\ldots,91,
\end{align}
where $\Xb_{m,r}$ and $\widehat{\Xb}_{m,r}$ denote the observed and imputed LFP recordings, respectively, for mouse $m$ and omission set $r$.

The unpenalized vectorized complex factor model with larger ranks outperformed mean and empirical-covariance imputation (Table~\ref{table:SSFA_loocv_impute_by_region}). For each brain region, the summary includes all cases in which that region was omitted, either alone or as part of a pair, yielding 78 validation cases per region. Imputation error for SSFA decreased as $k$ increased. While $k=250$ often produced the smallest median error, its improvement over $k=150$ was marginal and required substantially more computation. Therefore, we used $k=150$ to impute the missing recordings for the subsequent analyses.

\begin{table}[ht]
\caption{
  Median log relative norm error $\mathrm{logerr}_{m,r}$ from \eqref{eq:impute_log_err}, stratified by brain region. Smaller values indicate better imputation. Results are shown for mean imputation, empirical-covariance imputation, and the unpenalized vectorized complex factor model with $k\in\{10,25,50,100,150,250\}$. Median absolute deviations are reported in parentheses.
}
\label{table:SSFA_loocv_impute_by_region}
  \begin{subtable}{\textwidth}
    \resizebox{0.875\textwidth}{!}{%
    \begin{tabular}{rcccccc}
      \hline
      Method & AMY & Acc & BNST & HYPO & ICL & ICR \\
      \hline
      Empirical &  0.207 (0.229) &  0.402 (0.820) & -0.312 (0.414) & -0.331 (0.451) &  0.524 (0.814) &  0.372 (0.278) \\
      \hline
      Mean & -0.001 (0.000) &  0.000 (0.000) &  0.000 (0.000) &  0.000 (0.000) &  0.000 (0.000) &  0.000 (0.000) \\
      \hline
      \multicolumn{7}{l}{SSFA with $\rho=0$} \\
      \hline
      $k=10$ & -0.362 (0.180) & -0.363 (0.200) & -0.426 (0.228) & -0.396 (0.227) & -0.331 (0.301) & -0.323 (0.260) \\
      \hline
      $k=25$ & -0.409 (0.238) & -0.422 (0.205) & -0.493 (0.259) & -0.471 (0.241) & -0.372 (0.318) & -0.351 (0.273) \\
      \hline
      $k=50$ & -0.486 (0.261) & -0.522 (0.213) & -0.613 (0.315) & -0.571 (0.250) & -0.416 (0.322) & -0.396 (0.317) \\
      \hline
      $k=100$ & -0.560 (0.264) & -0.610 (0.264) & -0.700 (0.349) & -0.683 (0.303) & -0.447 (0.343) & -0.416 (0.338) \\
      \hline
      $k=150$ & -0.640 (0.253) & -0.680 (0.288) & -0.800 (0.345) & -0.762 (0.313) & -0.502 (0.359) & -0.455 (0.369) \\
      \hline
      $k=250$ & -0.661 (0.291) & -0.708 (0.285) & -0.828 (0.324) & -0.782 (0.309) & -0.516 (0.370) & -0.467 (0.414) \\
      \hline
    \end{tabular}}
  \end{subtable}

  \begin{subtable}{\textwidth}
    \resizebox{\textwidth}{!}{%
    \begin{tabular}{rccccccc}
      \hline
      Method & IL & NAc & Po & SS & VHipp & VPM & Vc \\
      \hline
      Empirical & -0.347 (0.398) & -0.498 (0.295) & -0.457 (0.370) & -0.243 (0.293) & -0.129 (0.139) & -0.455 (0.383) & -0.221 (0.204) \\
      \hline
      Mean &  0.000 (0.000) &  0.000 (0.000) &  0.000 (0.000) &  0.000 (0.000) &  0.000 (0.000) &  0.000 (0.000) & -0.001 (0.000) \\
      \hline
      \multicolumn{8}{l}{SSFA with $\rho=0$} \\
      \hline
      $k=10$ & -0.405 (0.133) & -0.374 (0.135) & -0.382 (0.210) & -0.213 (0.144) & -0.189 (0.132) & -0.391 (0.218) & -0.225 (0.128) \\
      \hline
      $k=25$ & -0.462 (0.171) & -0.445 (0.160) & -0.442 (0.272) & -0.241 (0.157) & -0.197 (0.134) & -0.456 (0.253) & -0.243 (0.130) \\
      \hline
      $k=50$ & -0.529 (0.257) & -0.545 (0.171) & -0.510 (0.275) & -0.279 (0.176) & -0.204 (0.135) & -0.538 (0.307) & -0.267 (0.153) \\
      \hline
      $k=100$ & -0.573 (0.364) & -0.656 (0.220) & -0.603 (0.299) & -0.320 (0.195) & -0.226 (0.152) & -0.660 (0.320) & -0.294 (0.170) \\
      \hline
      $k=150$ & -0.637 (0.371) & -0.738 (0.214) & -0.747 (0.323) & -0.410 (0.264) & -0.231 (0.118) & -0.763 (0.325) & -0.327 (0.153) \\
      \hline
      $k=250$ & -0.618 (0.389) & -0.773 (0.211) & -0.776 (0.308) & -0.431 (0.282) & -0.228 (0.108) & -0.795 (0.307) & -0.340 (0.163) \\
      \hline
    \end{tabular}}
  \end{subtable}
\end{table}

\subsection{SSFA analysis under different separability assumptions}
\label{sec:real_data_covariance_analysis}

\subsubsection{Data organization, preprocessing, and model-fitting}

We transformed the imputed recordings to the frequency domain by applying the Fourier transform to each one-second window of every mouse-day recording. This yielded Fourier-domain measurements from 13 brain regions at frequencies $0,1,\ldots,100$ Hz for all 15 mice. Nine mice included one or more imputed brain-region recordings, whereas six had complete observed recordings.

We fit SSFA to two representations of the 44 mouse-day recordings, treating the recordings as independent and pooling the two injection conditions. The first combines brain region and frequency into one mode and treats time as the second mode. This yields observations $\Xb_i\in\CC^{1313\times600}$, $i=1,\ldots,44$, and the dataset $\Xcal_{\mathrm{mat}}\in\CC^{1313\times600\times44}$, where $1313=13\times101$ and the 600 columns correspond to one-second time windows. The second treats brain region, frequency, and time as separate modes, yielding observations $\Xcal_i\in\CC^{13\times101\times600}$, $i=1,\ldots,44$, and the dataset $\Xcal_{\mathrm{arr}}\in\CC^{13\times101\times600\times44}$. Both representations retain the 600 windows as a time mode, allowing SSFA to estimate dependence across time windows.

We centered the two datasets before model fitting. For each representation, we computed the empirical mean across the 44 mouse-day observations and subtracted it from each observation. We divided each centered observation by the number of features in the corresponding representation.

We fit SSFA using Algorithm~\ref{algo}. For $\Xcal_{\mathrm{mat}}$, the maximum numbers of factors were 150 for the brain-region--frequency mode and 5 for the time mode. For $\Xcal_{\mathrm{arr}}$, the maximum number of factors were 5, 20, and 5 for the brain-region, frequency, and time modes, respectively. For each dataset, we fit SSFA over 21 values of $\rho$, including $\rho=0$ and 20 values selected automatically using the heuristic in Section~\ref{sec:rho_grid}. The selected model had the smallest EBIC.

\subsubsection{Estimated covariance and factor structures}

The SSFA fit to $\Xcal_{\mathrm{mat}}$ estimates the brain-region--frequency- and time-mode covariance matrices $\widehat{\Sigmab}_1$ and $\widehat{\Sigmab}_2$ of dimensions $1313\times1313$ and $600\times600$, respectively. Let $(\widehat{\Lambdab}_1,\widehat{\Psib}_1)$ and $(\widehat{\Lambdab}_2,\widehat{\Psib}_2)$ denote the estimated loading and residual variance matrices for the two modes. Then,  $\widehat{\Sigmab}_1=\widehat{\Lambdab}_1\widehat{\Lambdab}_1^\star+\widehat{\Psib}_1$ and $\widehat{\Sigmab}_2=\widehat{\Lambdab}_2\widehat{\Lambdab}_2^\star+\widehat{\Psib}_2$. If $\widehat{\tau}_{tt}^2$ is the $t$th diagonal entry of $\widehat{\Sigmab}_2$, then the covariance of the brain-region--frequency features at time window $t$ is $\widehat{\tau}_{tt}^2\widehat{\Sigmab}_1$. Because multiplication by $\widehat{\tau}_{tt}^2$ does not affect the corresponding complex correlation matrix, the coherence and phase-offset matrices can be computed directly from $\widehat{\Sigmab}_1$.

We order the 1313 features by frequency, with the 13 brain regions nested within each frequency. For $i\in\{0,\ldots,100\}$, let $(\widehat{\Sigmab}_1)_{i+1,i+1} \in\CC^{13\times13}$ denote the diagonal block corresponding to frequency $i$. Define
\begin{align*}
  \widehat{\Vb}_i
  =
  (\widehat{\Sigmab}_1)_{i+1,i+1}
  \circ\Ib_{13},\qquad
  \widehat{\Rb}_i
  =
  \widehat{\Vb}_i^{-1/2}
  (\widehat{\Sigmab}_1)_{i+1,i+1}
  \widehat{\Vb}_i^{-1/2}.
\end{align*}
The coherence and phase-offset matrices at frequency $i$ are $|\widehat{\Rb}_i|$ and $\arg\{\widehat{\Rb}_i\}$, respectively, where the modulus and argument are applied elementwise. Coherence values range from zero to one, with values near zero and one indicating weak and strong linear dependence, respectively, between pairs of brain regions. Phase offsets should be interpreted jointly with coherence because they are less informative when the corresponding coherence is small.

The SSFA fit to $\Xcal_{\mathrm{arr}}$ estimates the brain-region-, frequency-, and time-mode covariance matrices $\widehat{\Sigmab}_1$, $\widehat{\Sigmab}_2$, and $\widehat{\Sigmab}_3$ of dimensions $13\times13$, $101\times101$, and $600\times600$, respectively. Let $(\widehat{\Lambdab}_j,\widehat{\Psib}_j)$ denote the estimated loading and residual variance matrices for mode $j$. Then, $\widehat{\Sigmab}_j=\widehat{\Lambdab}_j\widehat{\Lambdab}_j^\star+\widehat{\Psib}_j$
for $j=1,2,3$. Let $\widehat{\sigma}_{\ell\ell}^2$ and $\widehat{\tau}_{tt}^2$ denote the $\ell$th and $t$th diagonal entries of $\widehat{\Sigmab}_2$ and $\widehat{\Sigmab}_3$, respectively. The covariance among brain regions at frequency index $\ell$ and time window $t$ is $\widehat{\sigma}_{\ell\ell}^2\widehat{\tau}_{tt}^2\widehat{\Sigmab}_1$. Therefore, frequency and time affect the scale of the brain-region covariance but not its correlation structure, implying that the brain-region coherence and phase-offset matrices are constant across frequencies and time windows. In contrast, $\Xcal_{\mathrm{mat}}$ allows brain-region dependence to vary with frequency.

The two SSFA fits produce different coherence and phase-offset estimates. The fit to $\Xcal_{\mathrm{mat}}$ captures frequency-specific brain-region dependence, with distinct coherence blocks at lower frequencies and stronger coherence and phase structure at 20 Hz  (Figure~\ref{fig:example_coherence_phase}). In contrast, the fully separable fit to $\Xcal_{\mathrm{arr}}$ produces a single coherence and phase-offset structure across frequencies and time windows (Figure~\ref{fig:sep_coh_w_phase}). Its coherence estimate is closer to the higher-frequency estimates from $\Xcal_{\mathrm{mat}}$ but does not capture the lower-frequency patterns.

\begin{figure}[htbp]
  \centering
  \includegraphics[
    width=\textwidth,
    height=0.9\textheight,
    keepaspectratio
  ]{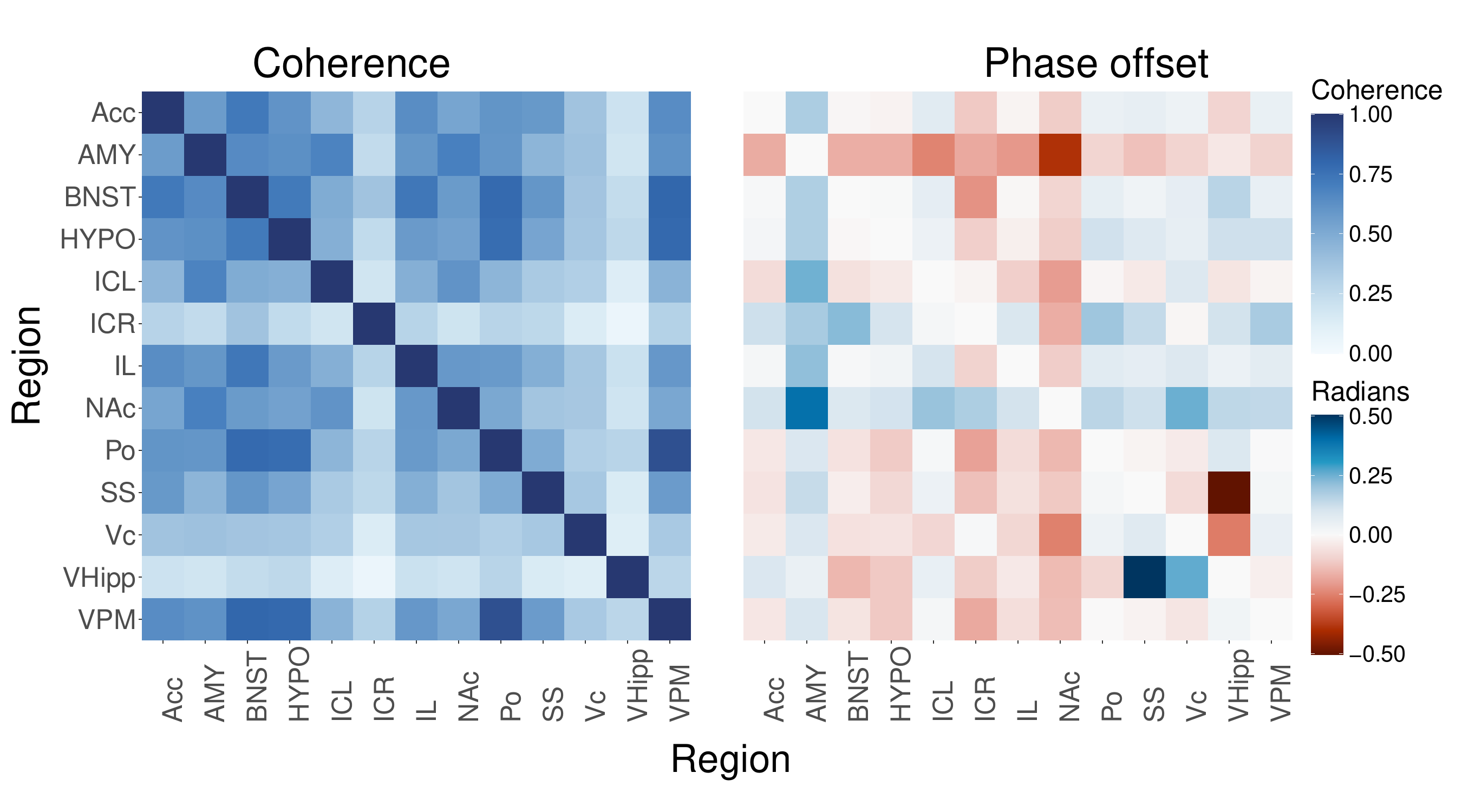}
  \caption{
  Brain-region mode-specific coherence and phase-offset matrices from the fully separable SSFA model fitted to $\Xcal_{\mathrm{arr}}$. The model was fitted to the post-administration data under both the treatment and control conditions.
  }
  \label{fig:sep_coh_w_phase}
\end{figure}

We next examine the estimated time-mode dependence from the SSFA fits to $\Xcal_{\mathrm{mat}}$ and $\Xcal_{\mathrm{arr}}$.  Figure~\ref{fig:time_factors_comp} shows the moduli of the two largest-magnitude time-mode loading vectors from the two fits. The first and final 300 seconds correspond to phases 1 and 2,
respectively. For both representations, the loading vectors indicate weaker dependence among the one-second windows during phase 1 and stronger dependence during the phase 2. This structure emerged without using phase labels during model fitting.

These results highlight the flexibility of SSFA: it directly estimates the time-mode covariance while modeling the brain-region and frequency covariances either jointly or separately (Figures~\ref{fig:example_coherence_phase} and \ref{fig:sep_coh_w_phase}). The weaker temporal dependence during phase 1 in Figure~\ref{fig:time_factors_comp} motivates the vectorized comparison in Section \ref{sec:realdata_vectorized_comparison}.

\begin{figure}[htbp]
  \centering
  \includegraphics[
    width=\textwidth,
    height=0.9\textheight,
    keepaspectratio
  ]{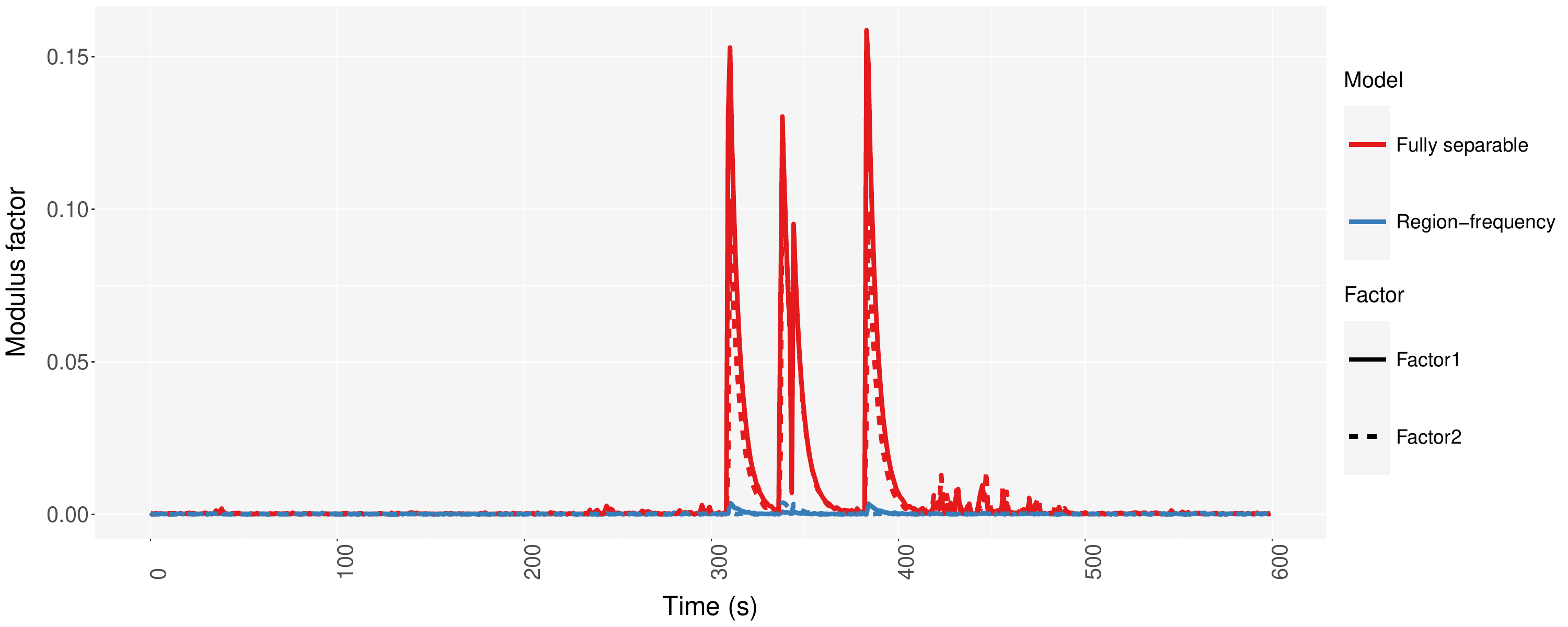}
  \caption{
    Moduli of the two largest-magnitude time-mode loading vectors from the SSFA fits to $\Xcal_{\mathrm{mat}}$ and $\Xcal_{\mathrm{arr}}$. The models were fitted to the post-administration data under both the treatment and control conditions. The first and final 300 seconds correspond to phases 1 and 2, respectively.
  }
  \label{fig:time_factors_comp}
\end{figure}

\subsubsection{Interpretation of coherence and phase estimates}
\label{sec:interpret}

As frequency increases, coherence generally decreases, while phase relationships among appreciably coherent region pairs become more localized (Figure~\ref{fig:example_coherence_phase}). At 1 Hz, coherence is broadly elevated and phase offsets are near zero, indicating widespread, approximately synchronous dependence. At 2 Hz, coherence remains strong across many regions, but Vc and VHipp are less integrated; Vc also exhibits the clearest phase offsets relative to several coherent regions. At 5 Hz, dependence is more concentrated, particularly among HYPO, Po, and VPM and among Acc, AMY, BNST, and NAc, with more differentiated phase relationships. At 20 Hz, coherence is lower but remains appreciable among selected regions, including AMY, BNST, HYPO, IL, NAc, Po, and VPM, while the largest phase offsets involve ICL, NAc, Po, and VPM. Lead-lag interpretations are restricted to appreciably coherent pairs and depend on the adopted sign convention \citep{becerra2011robust,tillman2018intrinsic,androulakis2018modulation}.

This pattern is broadly consistent with longstanding findings from scalp EEG and intracranial LFP recordings. Across different brain states, higher-frequency activity tends to reflect more localized interactions, whereas lower-frequency activity more often supports long-range coordination across brain regions. The estimated dependence structure therefore suggests a transition from broad low-frequency synchrony to weaker but more selectively organized higher-frequency dependence \citep{von2000different,buzsaki2012origin,arnulfo2020long}.

\subsection{Comparison with vectorized methods}
\label{sec:realdata_vectorized_comparison}

\subsubsection{Dataset construction and model-fitting}

The weaker temporal dependence during the 300-second phase 1 motivated a vectorized comparison of SSFA with complex PCA and sparse PCA. We restricted the comparison to this period on the two vehicle-injection days so that injection treatment, lighting condition, and experimental phase were held fixed. We combined brain region and frequency into a single feature dimension and treated the one-second windows as approximately independent observations.

Denote the resulting dataset by $\Xb_{\mathrm{vec}}\in\CC^{1313\times8700}$, whose columns contain the brain-region--frequency features for the one-second windows. It comprises 29 mouse-day recordings, with 14 from the first experimental day and 15 from the third, giving $8700=29\times300$ observations. We centered the dataset using the empirical mean across the 8700 windows and scaled each centered observation by the 1313 brain-region--frequency features.

We fit vectorized SSFA, complex PCA, and sparse PCA to $\Xb_{\mathrm{vec}}$. For SSFA, we used a maximum of 150 factors and the same fitting and selection procedure as for $\Xcal_{\mathrm{mat}}$ and $\Xcal_{\mathrm{arr}}$. For complex PCA and sparse PCA, we fixed the number of components at 150. We fit sparse PCA using \texttt{scikit-learn} over $\alpha\in\{10^{-4},10^{-3},10^{-2},0.1,0.5,1\}$ and selected the model with the smallest EBIC. For each method, we estimated the brain-region--frequency covariance matrix as
$\widehat{\Sigmab} = \widehat{\Lambdab}\widehat{\Lambdab}^{\star}+\widehat{\Psib}$, where $\widehat{\Lambdab}$ is the estimated loading matrix and $\widehat \Psib$ is the estimated residual covariance matrix. For complex PCA and sparse PCA, $\widehat{\Lambdab}$ and $\widehat{\sigma}^2$ were estimated as described in Section~\ref{sec:sim_competing_methods}, and we set $\widehat{\Psib}=\widehat{\sigma}^2\Ib_{1313}$.

\subsubsection{Within-frequency coherence and phase}

We order the 1313 features by frequency, with the 13 brain regions nested within each frequency. For $i,j\in\{0,\ldots,100\}$, let $\widehat{\Sigmab}_{i+1,j+1} \in \CC^{13\times13}$ denote the block corresponding to frequencies $i$ and $j$. The indices $i+1$ and $j+1$ account for the fact that the frequencies begin at $0$ Hz. The diagonal block $\widehat{\Sigmab}_{i+1,i+1}$ estimates the covariance among brain regions at frequency $i$.

Define the normalized within-frequency covariance matrix by
\begin{align*}
  \widehat{\Rb}_i
  &=
  \left(
    \widehat{\Sigmab}_{i+1,i+1}
    \circ
    \Ib_{13}
  \right)^{-1/2}
  \widehat{\Sigmab}_{i+1,i+1}
  \left(
    \widehat{\Sigmab}_{i+1,i+1}
    \circ
    \Ib_{13}
  \right)^{-1/2}.
\end{align*}
The coherence and phase-offset matrices at frequency $i$ are $|\widehat{\Rb}_i|$ and $\arg\{\widehat{\Rb}_i\}$, respectively. Figures~\ref{fig:coh_comp_dark_veh} and~\ref{fig:phase_comp_dark_veh} compare the estimated coherence and phase-offset matrices obtained using vectorized SSFA, complex PCA, and sparse PCA. Sparse PCA produces substantially sparser coherence matrices, suggesting that the real-domain embedding and sparsity penalty may suppress many dependencies among brain regions. In contrast, complex PCA and vectorized SSFA recover similar dependence patterns at lower frequencies. At 20 Hz, vectorized SSFA retains stronger coherent structure, whereas the corresponding estimates from complex PCA and sparse PCA are weaker.

The phase-offset estimates show a similar pattern. Sparse PCA yields little interpretable phase structure because many of its corresponding coherence estimates are small. Complex PCA and vectorized SSFA recover broadly similar phase relationships at lower frequencies, but vectorized SSFA retains more phase structure at 20 Hz, where its coherence estimates are larger. These results suggest that vectorized SSFA preserves more of the complex-valued dependence among brain regions at frequencies for which the competing methods produce weaker coherence estimates.

\begin{figure}[htbp]
  \centering
  \includegraphics[
    width=0.7\textwidth,
    height=0.7\textheight,
    keepaspectratio
  ]{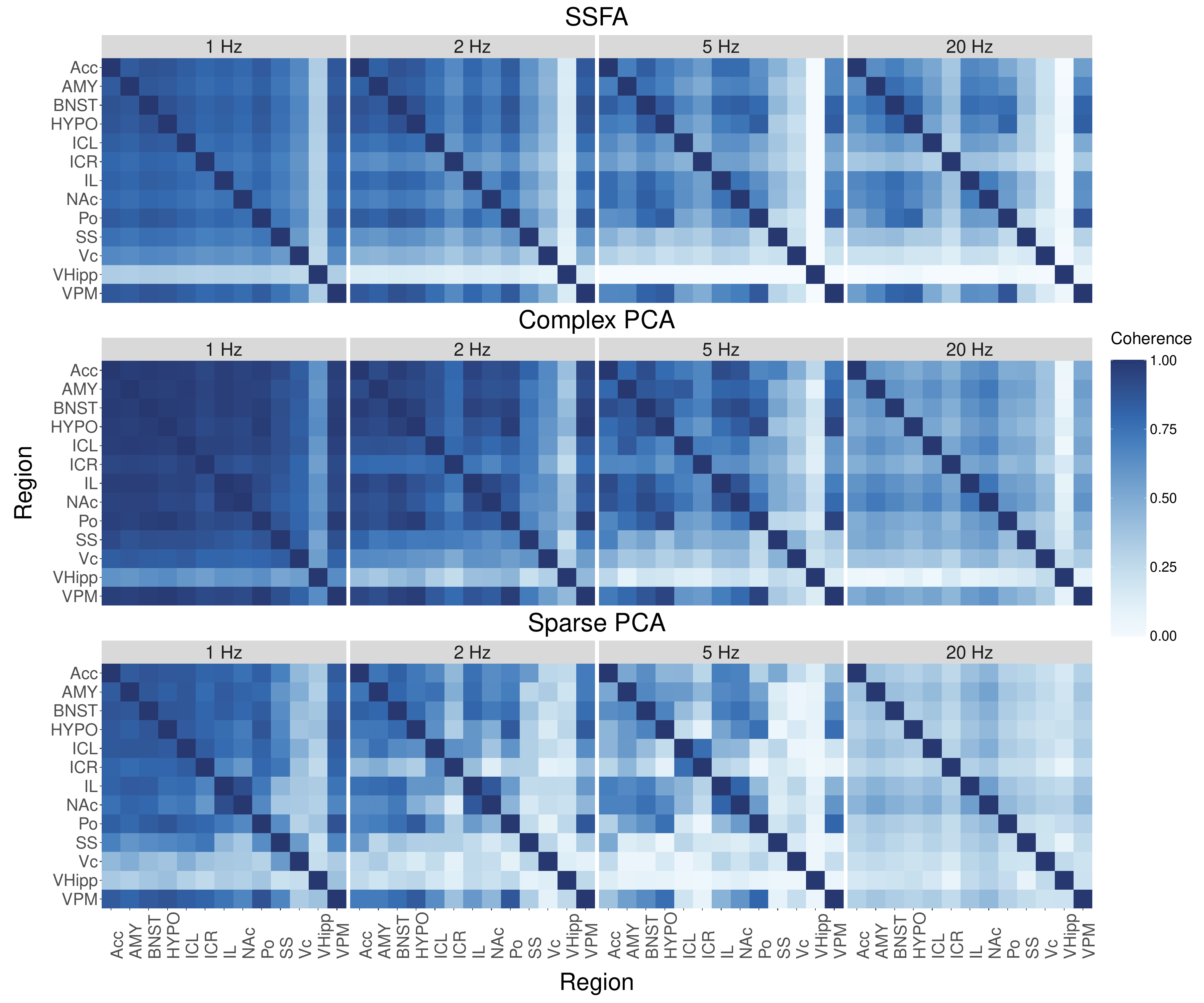}
  \caption{
  Estimated coherence matrices at selected frequencies from SSFA, complex PCA, and sparse PCA fitted to $\Xb_{\mathrm{vec}}$. The comparison uses phase 1 of the post-administration period from the two control days.
  }
  \label{fig:coh_comp_dark_veh}
\end{figure}

\begin{figure}[htbp]
  \centering
  \includegraphics[
    width=0.7\textwidth,
    height=0.7\textheight,
    keepaspectratio
  ]{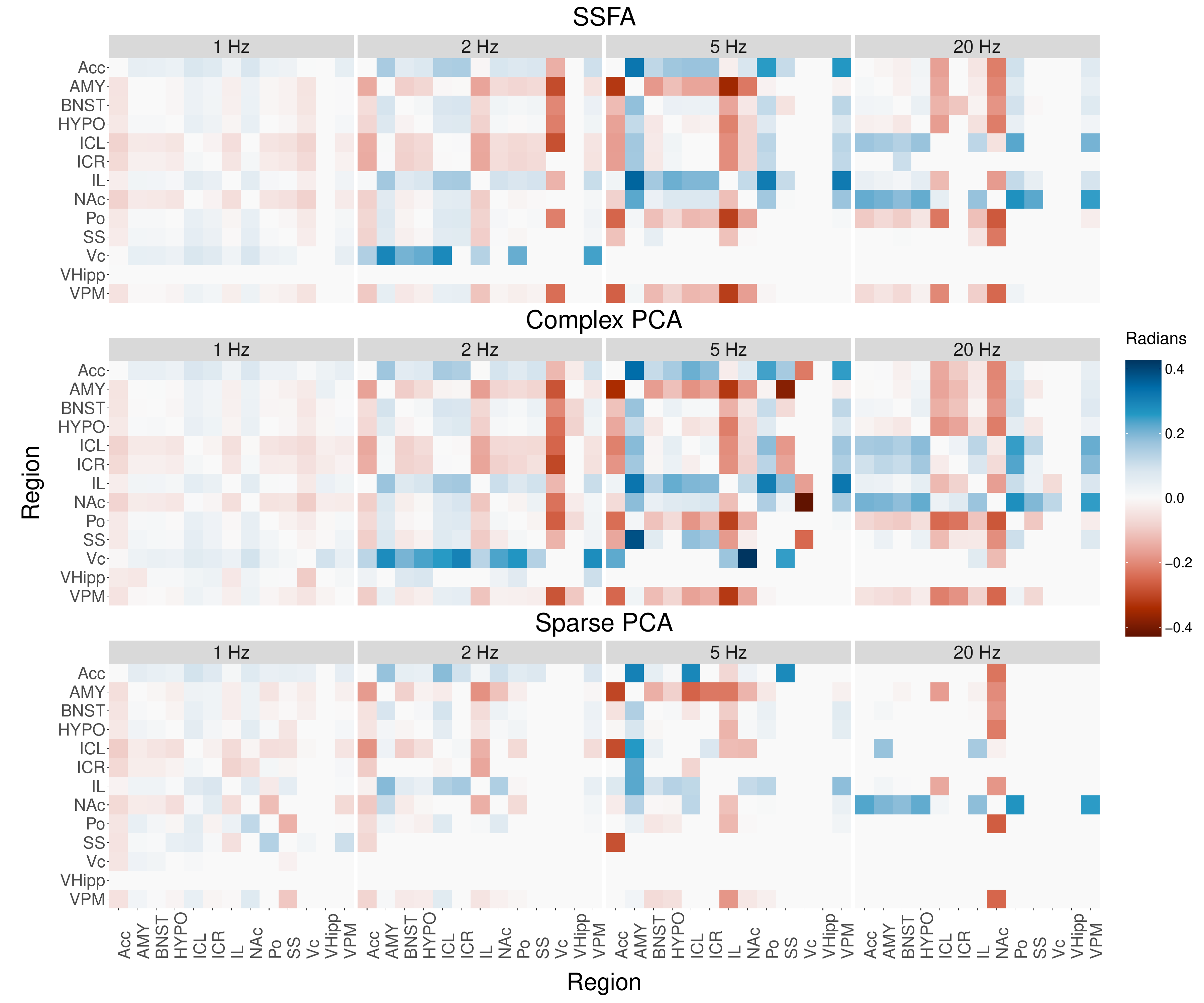}
  \caption{
  Estimated phase-offset matrices at selected frequencies from SSFA, complex PCA, and sparse PCA fitted to $\Xb_{\mathrm{vec}}$. The comparison uses phase 1 of the post-administration period from the two control days.
  }
  \label{fig:phase_comp_dark_veh}
\end{figure}

\subsubsection{Cross-frequency covariance and factor structures}

We next evaluate the estimated cross-frequency correlations. Under a local second-order stationarity approximation for the restricted dark-light interval, discrete Fourier transform coefficients at different Fourier frequencies are asymptotically uncorrelated \citep{jentsch2015test}. Therefore, in our setting, this result implies that the entries of $\widehat{\Sigmab}_{i+1,j+1}$ should be small when $i\neq j$. For frequencies $i,j\in\{0,\ldots,100\}$, define the normalized cross-frequency correlation block by
\begin{align*}
  \widehat{\Rb}_{i+1,j+1}
  =
  \left(\widehat{\Sigmab}_{i+1,i+1}\right)^{-1/2}
  \widehat{\Sigmab}_{i+1,j+1}
  \left(\widehat{\Sigmab}_{j+1,j+1}\right)^{-1/2}
  \in\CC^{13\times13},
\end{align*}
where $\left(\widehat{\Sigmab}_{i+1,i+1}\right)^{-1/2} \left(\widehat{\Sigmab}_{i+1,i+1}\right)^{-1/2} = \widehat{\Sigmab}_{i+1,i+1}^{-1}$.
For $i\neq j$, spectral theory suggests that $\widehat{\Rb}_{i+1,j+1}$
should be close to zero, up to finite-sample variability and estimation error. Figure~\ref{fig:cross_coh_dark_veh} shows the elementwise moduli
$|\widehat{\Rb}_{2,3}|$, corresponding to cross-frequency correlation between brain regions at 1 Hz and 2 Hz.
The estimates obtained using vectorized SSFA are generally closer to zero than those obtained using complex PCA and sparse PCA. This indicates that vectorized SSFA performs better in recovering the expected near-zero correlation between different Fourier frequencies.

The loading matrix provides a factor-level view of the cross-frequency covariance structure. Frequency-localized loading vectors induce covariance mainly within frequencies, whereas loading vectors with nonzero entries across several frequencies can induce cross-frequency covariance. Because individual loading vectors are not identifiable, we do not match columns across methods. For visualization, we order the columns of the loading matrix within each method by their Euclidean norms and display the six columns with the largest norms. Figure~\ref{fig:reg_freq_factors_dark_veh} shows the moduli of these loading vectors for each method. We display only the entries corresponding to the first seven frequencies for each brain region.

The loading vectors obtained using vectorized SSFA show greater agreement with the expected frequency-localized structure than those obtained using complex PCA and sparse PCA. Complex PCA produces loading patterns that are more diffuse across frequencies. Sparse PCA produces highly sparse loading vectors whose nonzero entries are less clearly concentrated within frequency-specific bands. These loading patterns help explain the smaller cross-frequency correlation estimates obtained using vectorized SSFA.

In summary, these comparisons demonstrate the importance of accounting for the complex-valued structure of the data. Sparse PCA imposes sparsity after real-domain embedding rather than directly on the complex loading entries. Complex PCA preserves the complex-valued representation, but its dense loading structure does not exploit the frequency-localized block structure suggested by spectral theory. In contrast, vectorized SSFA estimates sparse complex loading matrices directly in the complex domain and produces covariance estimates that are more consistent with the expected cross-frequency structure.

\begin{figure}[htbp]
  \centering
  \includegraphics[
    width=\textwidth,
    height=0.9\textheight,
    keepaspectratio
  ]{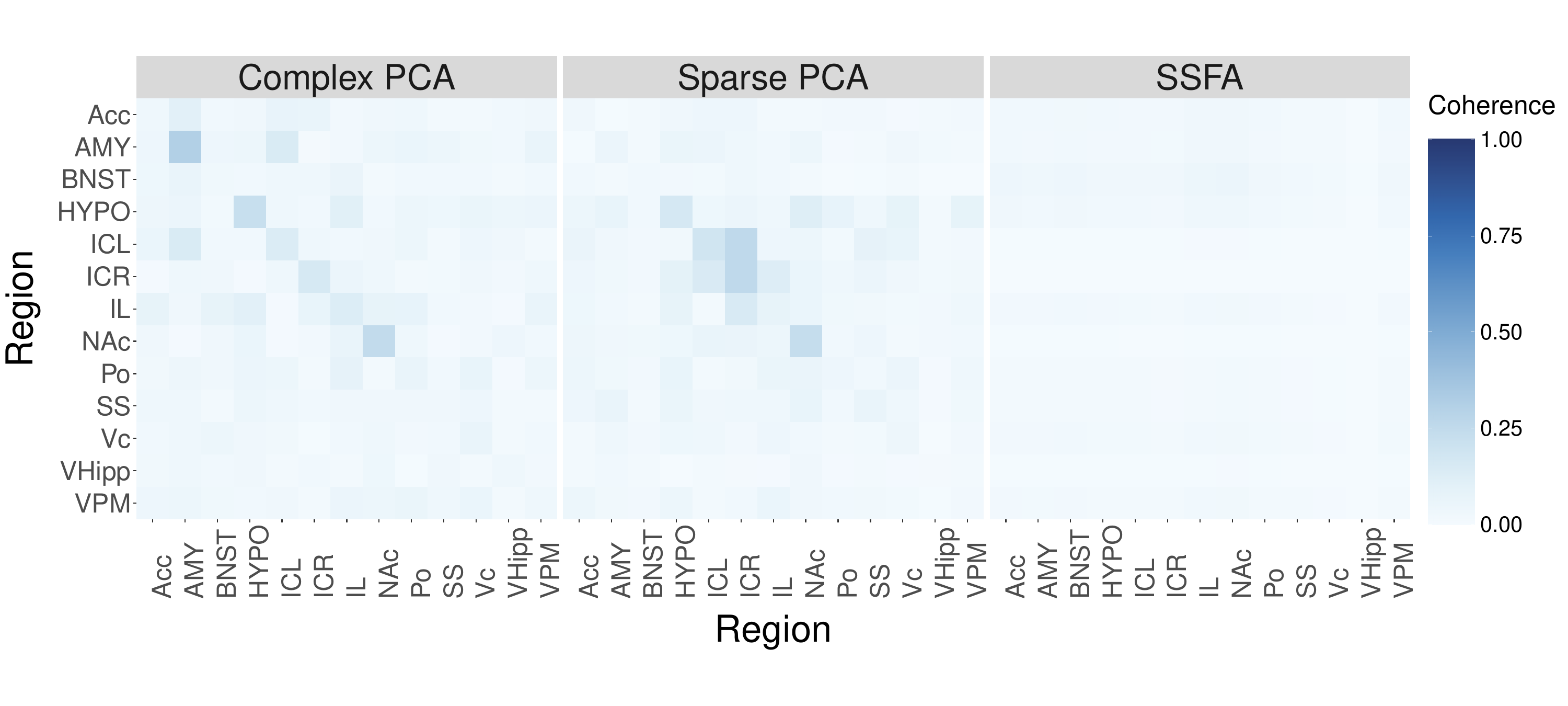}
  \caption{
  Elementwise moduli of the normalized cross-frequency correlation blocks between the brain-region features at 1 Hz and 2 Hz. Estimates are shown for SSFA, complex PCA, and sparse PCA fitted to $\Xb_{\mathrm{vec}}$.
  }
  \label{fig:cross_coh_dark_veh}
\end{figure}

\begin{figure}[htbp]
  \centering
  \includegraphics[
    width=\textwidth,
    height=0.9\textheight,
    keepaspectratio
  ]{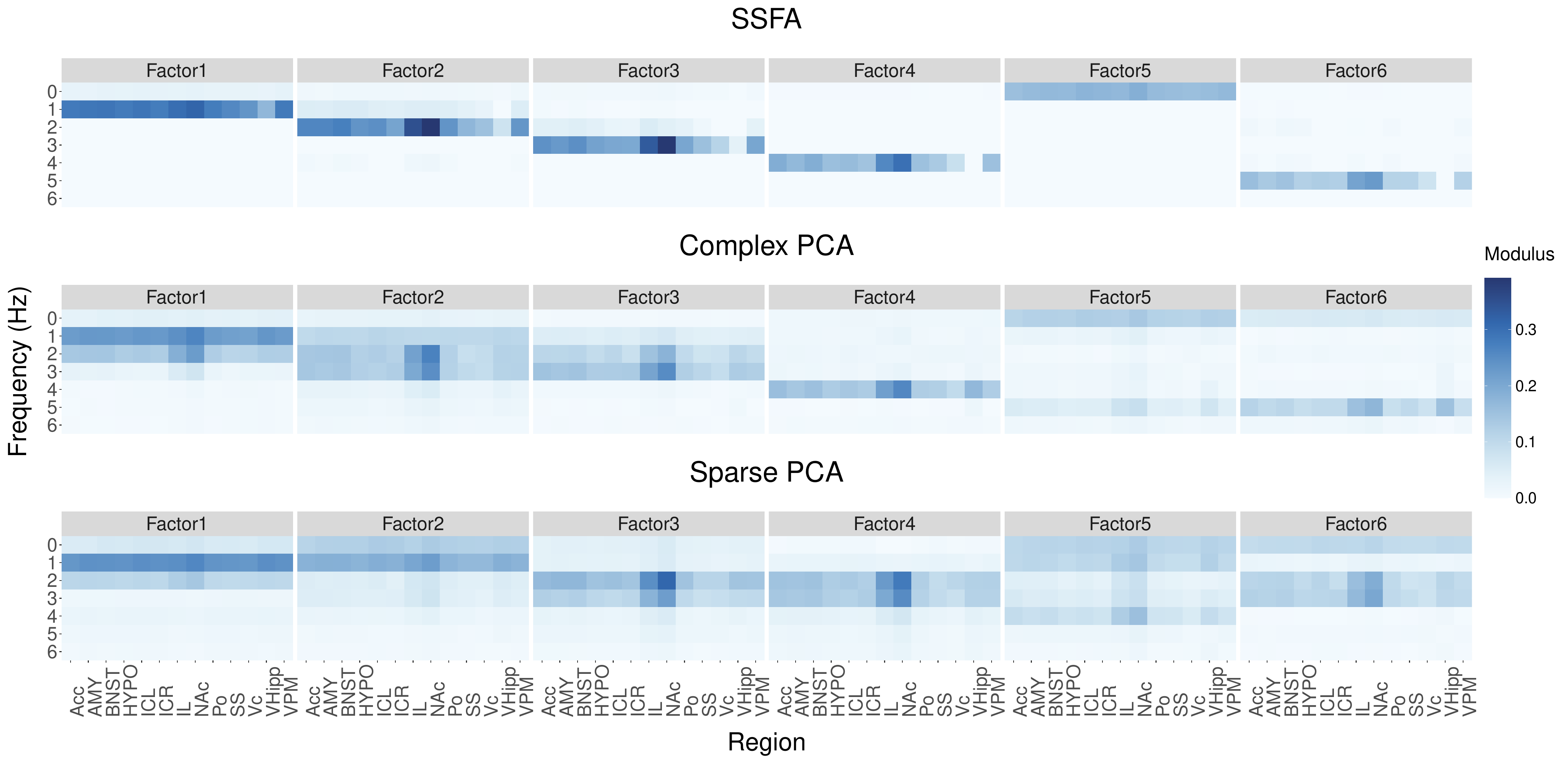}
  \caption{
  Moduli of the six largest-magnitude brain-region--frequency loading vectors from SSFA, complex PCA, and sparse PCA fitted to $\Xb_{\mathrm{vec}}$. Loading entries are shown for the first seven frequencies within each brain region.
  }
  \label{fig:reg_freq_factors_dark_veh}
\end{figure}

\section{Discussion}
\label{sec:discussion}

Several extensions could improve the scalability and flexibility of SSFA. A Bayesian extension of SSFA could replace the lasso penalty with shrinkage priors \citep{Srietal17}.  In Algorithm~\ref{algo}, the repeated mode-wise whitening step is the main computational bottleneck because the data must be rewhitened using updated covariance estimates for the remaining modes. One possible approximation is to prewhiten each mode once using diagonal covariance estimates obtained under the zero-loading model discussed in Section~\ref{sec:rho_grid}. This approach would avoid repeated whitening, permit the mode-specific parameters to be updated in parallel, and reduce computational cost. The imputation method developed in Section~\ref{sec:realdata_imputation_analysis} could also be incorporated into model estimation by alternating between parameter updates and conditional imputation. This extension would allow partially observed arrays to contribute directly to model fitting.

A natural alternative to SSFA replaces the separable covariance factorization with a Tucker-structured factor model \citep{2009_Kolda}. Under this formulation, each observation is represented by a single low-dimensional latent core tensor and mode-specific loading matrices. The resulting observation-specific factor scores can be used for reconstruction, visualization, and downstream regression. This model is better suited for dimension-reduction applications than for mode-specific covariance estimation.

\section*{Acknowledgment}
\label{sec:discussion}

This work was supported by Roy J. Carver Charitable Trust, National Institutes of Health grants 1DP2MH126377-01 and R01-HD121993 awarded to Rainbo Hultman, and National Science Foundation grants DMS-1854667 and DMS-2506058 awarded to Sanvesh Srivastava. Python implementations of SSFA and code for the simulation studies and real-data analysis are publicly available at \url{https://github.com/IHultman/SSFA}.

During the preparation of this manuscript, the authors used ChatGPT 5.5 to assist with grammar and the verification of mathematical proofs. The authors subsequently reviewed and revised all affected content and take full responsibility for the content of the manuscript.

\bibliographystyle{Chicago}
\bibliography{papers}

\newpage

\appendix

\begin{center}
    {\LARGE\bf Supplementary Material: Sparse Separable Factor Analysis in the Complex Domain with an Application to Local Field Potential Data}
\end{center}

\section{Complex normal distribution}

\subsection{Vector and matrix complex normal distribution}
\label{sec:c-vec-mat-norm}

For a complex vector $\yb\in\CC^c$, we write $\yb\sim\mathbb{C}\Ncal_c(\mb,\Vb)$ for the proper complex normal distribution with mean $\mb$, covariance matrix $\Vb$, and zero pseudo-covariance matrix \citep{urban2023oscillating}. Its density is
\begin{align}
  \label{eq:c-vec-pdf}
  f_{\yb}(\yb)
  =
  \pi^{-c}|\Vb|^{-1}
  \exp\left\{
    -(\yb-\mb)^\star\Vb^{-1}(\yb-\mb)
  \right\},
\end{align}
where $^\star$ denotes the complex conjugate-transpose operator \citep{And95}.

For $\Yb\in\CC^{c_1\times c_2}$, we write $\Yb \sim \mathbb{C}\Ncal_{c_1\times c_2} \left(\Mb,\Vb_1,\Vb_2\right)$, where $\Mb\in\CC^{c_1\times c_2}$ is the mean matrix and $\Vb_1\in\CC^{c_1\times c_1}$ and $\Vb_2\in\CC^{c_2\times c_2}$ are the mode-1 and mode-2 covariance matrices, respectively. This distribution is defined through $\yb=\operatorname{vec}(\Yb) \sim \mathbb{C}\Ncal_{c_1c_2}\left(\mb,\Vb_2\otimes\Vb_1 \right)$, where $\mb=\operatorname{vec}(\Mb)$ and  $\otimes$ denotes the Kronecker product \citep{And95}.
Applying \eqref{eq:c-vec-pdf} to the vectorized matrix gives
\begin{align}
  \label{eq:c-mat-pdf-orig}
  f_{\Yb}(\Yb)
  &=
  \pi^{-c_1c_2}
  |\Vb_2|^{-c_1}
  |\Vb_1|^{-c_2}
  \exp\left\{
    -\operatorname{tr}\left[
      (\Yb-\Mb)^\star
      \Vb_1^{-1}
      (\Yb-\Mb)
      \Vb_2^{-\top}
    \right]
  \right\}.
\end{align}
The identity $\operatorname{vec}(\Ab\Bb\Cb) = (\Cb^\top\otimes\Ab)\operatorname{vec}(\Bb)$ implies
\begin{align*}
  (\yb-\mb)^\star
  \left(\Vb_2^{-1}\otimes\Vb_1^{-1}\right)
  (\yb-\mb)
  =
  \operatorname{tr}\left[
    (\Yb-\Mb)^\star
    \Vb_1^{-1}
    (\Yb-\Mb)
    \Vb_2^{-\top}
  \right].
\end{align*}

\subsection{Array-valued complex normal distribution}
\label{sec:c-array-norm}

Following the real-valued array normal distribution \citep{2011_Hoff}, we define its proper complex analogue. Let $\Ycal\in\CC^{c_1\times\cdots\times c_d}$, and define $c=\prod_{\ell=1}^d c_\ell$. We say that $\Ycal$ follows a complex array normal distribution with mean $\Mcal\in\CC^{c_1\times\cdots\times c_d}$ and Hermitian positive-definite mode-specific covariance matrices $\Vb_j\in\CC^{c_j\times c_j}$, for $j=1,\ldots,d$, if
\begin{align}
  \label{eq:c-array-normal-definition}
  \yb
  =
  \operatorname{vec}(\Ycal)
  \sim
  \mathbb{C}\Ncal_c
  \left(
    \mb,
    \Vb_d\otimes\cdots\otimes\Vb_1
  \right),
  \qquad
  \mb=\operatorname{vec}(\Mcal).
\end{align}
We denote this distribution by $\Ycal \sim \mathbb{C}\Ncal_{c_1\times\cdots\times c_d} \left(\Mcal,\Vb_1,\ldots,\Vb_d \right)$.

The matricization of this distribution follows from the convention in \citet{2009_Kolda}. For mode $j$, define $c_{-j} = \prod_{\ell\neq j}c_\ell$ and $\Vb_{-j} = \Vb_d\otimes\cdots\otimes\Vb_{j+1} \otimes \Vb_{j-1}\otimes\cdots\otimes\Vb_1$. The mode-$j$ matricization of $\Ycal$ in \eqref{eq:c-array-normal-definition} satisfies
\begin{align}
  \label{eq:c-array-normal-matricization}
  \Yb_{(j)}
  \sim
  \mathbb{C}\Ncal_{c_j\times c_{-j}}
  \left(
    \Mb_{(j)},\Vb_j,\Vb_{-j}
  \right), \qquad j = 1, \ldots, d,
\end{align}
where $\Yb_{(j)}$ and $\Mb_{(j)}$ are the mode-$j$ matricizations of $\Ycal$ and $\Mcal$, respectively. Applying the complex matrix-normal density in
\eqref{eq:c-mat-pdf-orig}, the density can be written for any mode $j$ as
\begin{align}
  \label{eq:c-arr-pdf}
  f_{\Ycal}(\Ycal)
  &=
  \pi^{-c}
  |\Vb_j|^{-c_{-j}}
  |\Vb_{-j}|^{-c_j}
  \exp\left\{
    -\operatorname{tr}\left[
      (\Yb_{(j)}-\Mb_{(j)})^\star
      \Vb_j^{-1}
      (\Yb_{(j)}-\Mb_{(j)})
      \Vb_{-j}^{-\top}
    \right]
  \right\}
  \nonumber\\
  &=
  \pi^{-c}
  \prod_{\ell=1}^d
  |\Vb_\ell|^{-c_{-\ell}}
  \exp\left\{
    -\operatorname{tr}\left[
      (\Yb_{(j)}-\Mb_{(j)})^\star
      \Vb_j^{-1}
      (\Yb_{(j)}-\Mb_{(j)})
      \Vb_{-j}^{-\top}
    \right]
  \right\}.
\end{align}

\section{Factor analysis for complex-valued vectors}
\label{sec:app_fa_desc}

We first derive the PX-EM updates for the expanded parameter set $\tilde{\thetab}=(\tilde{\Lambdab},\Gammab,\Psib)$ under the working model in \eqref{eq:px-fct-mdl} of Section~\ref{sec:vec_model_derivation}. We then express the algorithm in terms of the reduced parameter set $\thetab=(\Lambdab,\Psib)$ for the factor model in \eqref{eq:fact-mdl} of Section~\ref{sec:vec_model_derivation}, where $\Lambdab=\tilde{\Lambdab}\Rb$,  $\Rb=\operatorname{chol}(\Gammab)$, and $\operatorname{chol}(\Ab)$ denotes the lower-triangular Cholesky factor satisfying $\Ab=\operatorname{chol}(\Ab)\operatorname{chol}(\Ab)^\star$. This reduced parameterization allows us to impose sparsity directly on $\Lambdab$ through a complex soft-thresholding operator.

\subsection{PX-EM objective}
\label{sec:app_PX_fa_desc}

Recall that the PX-EM working model in \eqref{eq:px-fct-mdl} and \eqref{eq:X_vec_PXEM_dataset_distn} is
\begin{align}
  \label{eq:app_pxem_working_mat_model}
  \Xb
  &=
  \tilde{\Lambdab}\tilde{\Zb}+\Eb,
  &
  \tilde{\Zb}
  &\sim
  \mathbb{C}\Ncal_{k\times n}
  \left(\mathbf{0},\Gammab,\Ib_n\right),
  &
  \Eb
  &\sim
  \mathbb{C}\Ncal_{p\times n}
  \left(\mathbf{0},\Psib,\Ib_n\right).
\end{align}
Therefore, the complete-data likelihood factors as
\begin{align*}
  f(\Xb,\tilde{\Zb};\tilde{\thetab})
  =
  f(\Xb\mid\tilde{\Zb};\tilde{\thetab})
  f(\tilde{\Zb};\tilde{\thetab}), \qquad
\Xb\mid\tilde{\Zb}
  \sim
  \mathbb{C}\Ncal_{p\times n}
  \left(
    \tilde{\Lambdab}\tilde{\Zb},
    \Psib,
    \Ib_n
  \right).
\end{align*}
Using the complex matrix-normal density in \eqref{eq:c-array-normal-matricization}, the two factors are
\begin{align*}
  f(\Xb\mid\tilde{\Zb};\tilde{\thetab})
  &=
  \pi^{-np}|\Psib|^{-n}
  \exp\left\{
    -\tr\left[
      (\Xb-\tilde{\Lambdab}\tilde{\Zb})^\star
      \Psib^{-1}
      (\Xb-\tilde{\Lambdab}\tilde{\Zb})
    \right]
  \right\},\\
  f(\tilde{\Zb};\tilde{\thetab})
  &=
  \pi^{-nk}|\Gammab|^{-n}
  \exp\left\{
    -\tr\left(
      \tilde{\Zb}^\star
      \Gammab^{-1}
      \tilde{\Zb}
    \right)
  \right\}.
\end{align*}

Treating $\Xb $ and $\tilde \Zb$ as fixed and omitting terms that do not depend on $\tilde \thetab = (\tilde{\Lambdab}, \Gammab, \Psib)$, the complete-data log-likelihood is
\begin{align*}
  \ell_{\mathrm{com}}(\tilde{\thetab})
  \propto\;&
  -n\log|\Gammab|
  -\tr\left(
    \tilde{\Zb}\tilde{\Zb}^\star\Gammab^{-1}
  \right)
  -n\log|\Psib|
  -\tr\left(
    \Xb^\star\Psib^{-1}\Xb
  \right)\\
  &-
  \tr\left(
    \tilde{\Lambdab}
    \tilde{\Zb}\tilde{\Zb}^\star
    \tilde{\Lambdab}^\star
    \Psib^{-1}
  \right)
  +
  \tr\left(
    \Xb^\star
    \Psib^{-1}
    \tilde{\Lambdab}
    \tilde{\Zb}
  \right) +
  \tr\left(
    \tilde{\Zb}^\star
    \tilde{\Lambdab}^\star
    \Psib^{-1}
    \Xb
  \right).
\end{align*}

\subsection{E-step}
At iteration $t$, if $\tilde{\thetab}^{(t)} = (\tilde{\Lambdab}^{(t)}, \Gammab^{(t)}, \Psib^{(t)})$ denotes the parameter estimates, then the E-step of the PX-EM procedure computes $-\EE \left\{ \ell_{\mathrm{com}}(\tilde{\Lambdab},\Gammab, \Psib) \mid \Xb, \tilde{\thetab}^{(t)}  \right\} $, which is proportional to
\begin{align}
  \label{eq:vec-comp-llk_app}
  \Qcal\left(
    \tilde{\Lambdab},
    \Gammab,
    \Psib
    \mid
    \tilde{\thetab}^{(t)}
  \right)
  =\;&
  n\log|\Gammab|
  +
  \tr\left(
    \widetilde{\Sbb}_{zz}^{(t)}
    \Gammab^{-1}
  \right)
  +
  n\log|\Psib|
  +
  \tr\left(
    \Xb^\star\Psib^{-1}\Xb
  \right)
  \nonumber\\
  &+
  \tr\left(
    \tilde{\Lambdab}
    \widetilde{\Sbb}_{zz}^{(t)}
    \tilde{\Lambdab}^\star
    \Psib^{-1}
  \right)
  -
  \tr\left(
    \Xb^\star
    \Psib^{-1}
    \tilde{\Lambdab}
    \tilde{\Zb}^{(t)}
  \right)
  \nonumber\\
  &-
  \tr\left(
    \tilde{\Zb}^{(t)\star}
    \tilde{\Lambdab}^\star
    \Psib^{-1}
    \Xb
  \right),
\end{align}
where the posterior expectations of the complete-data sufficient statistics are
\begin{align}
  \label{eq:sufficient_stat_defs}
  \tilde{\Zb}^{(t)}
  =
  \EE\left(
    \tilde{\Zb}
    \mid
    \Xb,\tilde{\thetab}^{(t)}
  \right),
  \qquad
  \widetilde{\Sbb}_{zz}^{(t)}
  =
  \EE\left(
    \tilde{\Zb}\tilde{\Zb}^\star
    \mid
    \Xb,\tilde{\thetab}^{(t)}
  \right).
\end{align}

\begin{proposition}[Posterior expectations of the complete-data sufficient statistics]
\label{prop:pxem_posterior_moments}

At iteration $t$, define
\begin{align*}
  \widetilde{\Sigmab}^{(t)}
  =
  \tilde{\Lambdab}^{(t)}
  \Gammab^{(t)}
  \tilde{\Lambdab}^{(t)\star}
  +
  \Psib^{(t)},\quad
  \widetilde{\Ab}^{(t)}
  =
  \Gammab^{(t)}
  \tilde{\Lambdab}^{(t)\star}
  \left(\widetilde{\Sigmab}^{(t)}\right)^{-1},\quad
  \widetilde{\Cb}^{(t)}
  =
  \Gammab^{(t)}
  -
  \widetilde{\Ab}^{(t)}
  \tilde{\Lambdab}^{(t)}
  \Gammab^{(t)}.
\end{align*}
Then, the conditional distribution of the latent factor matrix is
\begin{align*}
  \tilde{\Zb}\mid\Xb,\tilde{\thetab}^{(t)}
  \sim
  \mathbb{C}\Ncal_{k\times n}
  \left(
    \widetilde{\Ab}^{(t)}\Xb,
    \widetilde{\Cb}^{(t)},
    \Ib_n
  \right),
\end{align*}
and the posterior expectations of the complete-data sufficient statistics in \eqref{eq:sufficient_stat_defs} are
\begin{align}
  \label{eq:pxem_posterior_means}
  \tilde{\Zb}^{(t)}
  =
  \widetilde{\Ab}^{(t)}\Xb, \qquad
  \widetilde{\Sbb}_{zz}^{(t)}
  =
  n\widetilde{\Cb}^{(t)}
  +
  \tilde{\Zb}^{(t)}
  \tilde{\Zb}^{(t)\star}.
\end{align}
\end{proposition}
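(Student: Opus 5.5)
The plan is to derive the joint distribution of $(\tilde{\zb}_i,\xb_i)$ for a single column under the working model \eqref{eq:px-fct-mdl}, condition using the standard proper complex Gaussian conditioning formula, and then assemble the columns into matrix form.

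\textbf{Joint law of one column.} Fix $i$. Since $\tilde{\zb}_i\sim\CC\Ncal_k(\zero,\Gammab^{(t)})$ and $\eb_i\sim\CC\Ncal_p(\zero,\Psib^{(t)})$ are independent and proper, the stacked vector $(\tilde{\zb}_i^\top,\eb_i^\top)^\top$ is proper complex normal: its pseudo-covariance is block diagonal with zero blocks. The map $(\tilde{\zb}_i,\eb_i)\mapsto(\tilde{\zb}_i,\tilde{\Lambdab}^{(t)}\tilde{\zb}_i+\eb_i)$ is $\CC$-linear, and $\CC$-linear maps preserve both Gaussianity and zero pseudo-covariance, since $\EE[\Tb\wb\wb^\top\Tb^\top]=\Tb\,\EE[\wb\wb^\top]\Tb^\top=\zero$. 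Hence $(\tilde{\zb}_i,\xb_i)$ is jointly proper complex normal with covariance blocks
\begin{align*}
\operatorname{Cov}(\tilde{\zb}_i)=\Gammab^{(t)},\qquad
\operatorname{Cov}(\xb_i)=\widetilde{\Sigmab}^{(t)},\qquad
\EE(\tilde{\zb}_i\xb_i^\star)=\Gammab^{(t)}\tilde{\Lambdab}^{(t)\star}.
\end{align*}
The matrix $\widetilde{\Sigmab}^{(t)}$ is Hermitian positive definite because $\Psib^{(t)}$ is positive definite.

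\textbf{Conditioning.} This is the main obstacle: the conditional law must be derived for the proper complex case rather than quoted from the real case. I would use the density \eqref{eq:c-vec-pdf} and complete the square. Define $\wb=\tilde{\zb}_i-\widetilde{\Ab}^{(t)}\xb_i$. Then
\begin{align*}
\EE(\wb\xb_i^\star)=\Gammab^{(t)}\tilde{\Lambdab}^{(t)\star}-\widetilde{\Ab}^{(t)}\widetilde{\Sigmab}^{(t)}=\zero,
\end{align*}
and the pseudo-covariance $\EE(\wb\xb_i^\top)$ is also zero by properness. For jointly proper complex Gaussian vectors, a zero covariance and a zero pseudo-covariance imply independence. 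Indeed, the real BSC embedding of the pair then has zero cross-covariance, so the real components are jointly Gaussian and uncorrelated, hence independent.

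It follows that $\tilde{\zb}_i\mid\xb_i\sim\CC\Ncal_k(\widetilde{\Ab}^{(t)}\xb_i,\operatorname{Cov}(\wb))$. Expanding the covariance of $\wb$ gives
\begin{align*}
\operatorname{Cov}(\wb)=\Gammab^{(t)}-\widetilde{\Ab}^{(t)}\tilde{\Lambdab}^{(t)}\Gammab^{(t)}-\Gammab^{(t)}\tilde{\Lambdab}^{(t)\star}\widetilde{\Ab}^{(t)\star}+\widetilde{\Ab}^{(t)}\widetilde{\Sigmab}^{(t)}\widetilde{\Ab}^{(t)\star}.
\end{align*}
The last two terms cancel, because $\widetilde{\Ab}^{(t)}\widetilde{\Sigmab}^{(t)}\widetilde{\Ab}^{(t)\star}=\Gammab^{(t)}\tilde{\Lambdab}^{(t)\star}\widetilde{\Ab}^{(t)\star}$. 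This leaves $\operatorname{Cov}(\wb)=\widetilde{\Cb}^{(t)}$.

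\textbf{Assembling the matrix.} The pairs $(\xb_i,\tilde{\zb}_i)$ are independent across $i$, so conditionally on $\Xb$ the columns of $\tilde{\Zb}$ are independent. The $i$th column has mean $\widetilde{\Ab}^{(t)}\xb_i$ and covariance $\widetilde{\Cb}^{(t)}$. Therefore $\operatorname{vec}(\tilde{\Zb})\mid\Xb$ is proper normal with mean $\operatorname{vec}(\widetilde{\Ab}^{(t)}\Xb)$ and covariance $\Ib_n\otimes\widetilde{\Cb}^{(t)}$. By the definition in Appendix~\ref{sec:c-vec-mat-norm}, this is exactly $\CC\Ncal_{k\times n}(\widetilde{\Ab}^{(t)}\Xb,\widetilde{\Cb}^{(t)},\Ib_n)$.

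\textbf{The moments.} The first moment is $\tilde{\Zb}^{(t)}=\widetilde{\Ab}^{(t)}\Xb$. For the second, write
\begin{align*}
\EE(\tilde{\Zb}\tilde{\Zb}^\star\mid\Xb)=\sum_{i=1}^n\EE(\tilde{\zb}_i\tilde{\zb}_i^\star\mid\xb_i)=\sum_{i=1}^n\bigl(\widetilde{\Cb}^{(t)}+\widetilde{\Ab}^{(t)}\xb_i\xb_i^\star\widetilde{\Ab}^{(t)\star}\bigr),
\end{align*}
which equals $n\widetilde{\Cb}^{(t)}+\tilde{\Zb}^{(t)}\tilde{\Zb}^{(t)\star}$, as claimed in \eqref{eq:pxem_posterior_means}.
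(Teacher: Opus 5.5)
Your proof is correct and follows essentially the same route as the paper: form the joint proper complex Gaussian law of the latent factors and the data, condition, read off the matrix-normal conditional law, and obtain $\widetilde{\Sbb}_{zz}^{(t)}$ by summing the column-wise conditional second moments. The differences are minor. You condition column by column rather than on the Kronecker-structured $\operatorname{vec}(\Xb)$. You also derive the proper complex conditioning formula through the uncorrelated, pseudo-uncorrelated residual $\wb=\tilde{\zb}_i-\widetilde{\Ab}^{(t)}\xb_i$ instead of quoting it, which makes your version more self-contained.
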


\begin{proof}
Using  \eqref{eq:app_pxem_working_mat_model}, define
\begin{align*}
\xb=\operatorname{vec}(\Xb),
\qquad
  \tilde{\zb}=\operatorname{vec}(\tilde{\Zb}),
  \qquad
\eb=\operatorname{vec}(\Eb),
  \qquad
\widetilde{\Sigmab}
=
\tilde{\Lambdab}\Gammab\tilde{\Lambdab}^{\star}
+
\Psib.
\end{align*}
The vectorized form of the working model in \eqref{eq:app_pxem_working_mat_model} is
\begin{align}
\label{eq:2}
\xb
=
(\Ib_n\otimes\tilde{\Lambdab})\tilde{\zb}
+
\eb, \qquad
\tilde{\zb}
\sim
\mathbb{C}\Ncal_{nk}
\left(
  \mathbf{0},
  \Ib_n\otimes\Gammab
  \right),
  \qquad
\eb
  \sim
\mathbb{C}\Ncal_{np}
\left(
  \mathbf{0},
  \Ib_n\otimes \Psib
  \right),
\end{align}
and $\xb$ is marginally distributed as $\mathbb{C}\Ncal_{np} \left(\mathbf{0}, \Ib_n\otimes\widetilde{\Sigmab} \right)$. Using \eqref{eq:2},
\begin{align}
  \label{eq:joint_pxem_vec_app}
  \begin{bmatrix}
    \xb\\
    \tilde{\zb}
  \end{bmatrix}
  \sim
  \mathbb{C}\Ncal_{n(p+k)}
  \left(
    \begin{bmatrix}
      \mathbf{0}\\
      \mathbf{0}
    \end{bmatrix},
    \begin{bmatrix}
      \Ib_n\otimes\widetilde{\Sigmab}
      &
      \Ib_n\otimes\tilde{\Lambdab}\Gammab
      \\
      \Ib_n\otimes\Gammab\tilde{\Lambdab}^{\star}
      &
      \Ib_n\otimes\Gammab
    \end{bmatrix}
  \right).
\end{align}

The conditional distribution formula for a proper complex normal vector gives
\begin{align}\label{eq:cond-mn-var}
  \EE(\tilde{\zb}\mid\xb)
  =
  \left[
    \Ib_n
    \otimes
    \Gammab\tilde{\Lambdab}^{\star}
    \widetilde{\Sigmab}^{-1}
  \right]\xb,\qquad
  \operatorname{Cov}(\tilde{\zb}\mid\xb)
  =
  \Ib_n
  \otimes
  \left(
    \Gammab
    -
    \Gammab\tilde{\Lambdab}^{\star}
    \widetilde{\Sigmab}^{-1}
    \tilde{\Lambdab}\Gammab
  \right).
\end{align}
Based on the forms of the conditional mean vector and covariance matrix above, define
\begin{align}\label{eq:def1}
  \widetilde{\Ab}
  =
  \Gammab\tilde{\Lambdab}^{\star}
  \widetilde{\Sigmab}^{-1},\qquad
  \widetilde{\Cb}
  =
  \Gammab
  -
  \Gammab\tilde{\Lambdab}^{\star}
  \widetilde{\Sigmab}^{-1}
  \tilde{\Lambdab}\Gammab.
\end{align}
Using the identity $(\Ib_n\otimes\widetilde{\Ab}) \operatorname{vec}(\Xb) = \operatorname{vec}(\widetilde{\Ab}\Xb)$ and \eqref{eq:def1}, the matrix-normal conditional distribution in \eqref{eq:cond-mn-var} is
\begin{align}
  \label{eq:posterior_pxem_matrix_app}
  \tilde{\Zb}\mid\Xb
  \sim
  \mathbb{C}\Ncal_{k\times n}
  \left(
    \widetilde{\Ab}\Xb,
    \widetilde{\Cb},
    \Ib_n
  \right).
\end{align}
The Woodbury matrix identity \citep{Har97} gives the equivalent expressions
\begin{align}
  \label{eq:pxem_conditional_cov_woodbury}
  \widetilde{\Cb}
  =
  \left(
    \Gammab^{-1}
    +
    \tilde{\Lambdab}^{\star}
    \Psib^{-1}
    \tilde{\Lambdab}
  \right)^{-1},\qquad
  \widetilde{\Ab}
  =
  \widetilde{\Cb}
  \tilde{\Lambdab}^{\star}
  \Psib^{-1}.
\end{align}
Substituting the parameter estimates $\tilde{\thetab}^{(t)} = (\tilde{\Lambdab}^{(t)},\Gammab^{(t)},\Psib^{(t)})$ into \eqref{eq:posterior_pxem_matrix_app} yields $\tilde{\Zb}^{(t)} = \widetilde{\Ab}^{(t)}\Xb$, which is the posterior expectation of the first complete-data sufficient statistic.

We finish the proof by deriving the posterior expectation of the second complete-data sufficient statistic. Using \eqref{eq:posterior_pxem_matrix_app} and   \eqref{eq:pxem_conditional_cov_woodbury}, write
\begin{align}
\label{eq:3}
\tilde{\Zb}
=
\tilde{\Zb}^{(t)}
+
\Ub, \qquad \Ub \mid \Xb
\sim
\mathbb{C}\Ncal_{k\times n}
\left(
  \mathbf{0},
  \widetilde{\Cb}^{(t)},
  \Ib_n
\right).
\end{align}
If $\ub_1,\ldots,\ub_n$ denote the columns of $\Ub$, then $\EE(\ub_i\ub_i^\star\mid\Xb) = \widetilde{\Cb}^{(t)}$ for $i=1, \ldots, n$. Therefore,
\begin{align*}
  \EE(\Ub\Ub^\star\mid\Xb)
  =
  \sum_{i=1}^n
  \EE(\ub_i\ub_i^\star\mid\Xb)
  =
  n\widetilde{\Cb}^{(t)}.
\end{align*}
Because \(\EE(\Ub\mid\Xb)=\mathbf{0}\), we have that
\begin{align*}
  \widetilde{\Sbb}_{zz}^{(t)}
  =
  \EE\left(
    \tilde{\Zb}\tilde{\Zb}^{\star}
    \mid
    \Xb,\tilde{\thetab}^{(t)}
  \right)
  =
  n\widetilde{\Cb}^{(t)}
  +
  \tilde{\Zb}^{(t)}
  \tilde{\Zb}^{(t)\star}.
\end{align*}
The proposition is proved.
\end{proof}

\subsection{M-step updates for $\tilde{\Lambdab}$ and $\Gammab$}
\label{sec:app_fa_desc_m_step}

At iteration $t$, we hold $\Psib=\Psib^{(t)}$ fixed and minimize $\Qcal(\tilde{\Lambdab}, \Gammab, \Psib^{(t)} \mid \tilde{\thetab}^{(t)})$ in \eqref{eq:vec-comp-llk_app} with respect to $\tilde{\Lambdab}$ and $\Gammab$. The conditional sufficient statistics $\tilde{\Zb}^{(t)}$ and $\widetilde{\Sbb}_{zz}^{(t)}$ are given in \eqref{eq:pxem_posterior_means}. The terms involving $\Gammab$ are
\begin{align*}
  \Qcal_{\Gammab}(\Gammab)
=
n\log|\Gammab|
+
\tr\left(
  \widetilde{\Sbb}_{zz}^{(t)}\Gammab^{-1}
\right),
\end{align*}
and their differential is
\begin{align*}
  \mathrm{d}\Qcal_{\Gammab}(\Gammab)
  &=
  \tr\left[
    \left\{
      n\Gammab^{-1}
      -
      \Gammab^{-1}
      \widetilde{\Sbb}_{zz}^{(t)}
      \Gammab^{-1}
    \right\}
    \mathrm{d}\Gammab
  \right].
\end{align*}
Setting this differential equal to zero for every
$\mathrm{d}\Gammab$ gives
\begin{align}
  \label{eq:app_gamma_update}
  \Gammab^{(t+1)}
  =
  \frac{1}{n}\widetilde{\Sbb}_{zz}^{(t)}.
\end{align}

Similarly, the terms involving $\tilde{\Lambdab}$ in  $\Qcal(\tilde{\Lambdab}, \Gammab^{(t+1)}, \Psib^{(t)} \mid \tilde{\thetab}^{(t)})$ are
\begin{align*}
  \Qcal_{\tilde{\Lambdab}}(\tilde{\Lambdab})
  =
  \tr\left[
    \tilde{\Lambdab}
    \widetilde{\Sbb}_{zz}^{(t)}
    \tilde{\Lambdab}^{\star}
    \left(\Psib^{(t)}\right)^{-1}
  \right]
  -
  \tr\left[
    \Xb^\star
    \left(\Psib^{(t)}\right)^{-1}
    \tilde{\Lambdab}
    \tilde{\Zb}^{(t)}
  \right]
  -
  \tr\left[
    \tilde{\Zb}^{(t)\star}
    \tilde{\Lambdab}^{\star}
    \left(\Psib^{(t)}\right)^{-1}
    \Xb
  \right].
\end{align*}
Define
\begin{align*}
  \widetilde \Hb_{zz}^{(t)}
=
\operatorname{chol}\left(
  \widetilde{\Sbb}_{zz}^{(t)}
\right),
\qquad
\widetilde{\Sbb}_{zz}^{(t)}
=
\widetilde \Hb_{zz}^{(t)}
  \widetilde \Hb_{zz}^{(t)\star},
  \qquad
\widetilde \Hb_{zz}^{(t)-\star}
=
\left\{
  \widetilde \Hb_{zz}^{(t)-1}
\right\}^{\star}.
\end{align*}
The objective for updating  $\tilde{\Lambdab}$ is written as
\begin{align}
  \label{eq:lam_tilde_quadratic_form}
  \Qcal_{\tilde{\Lambdab}}(\tilde{\Lambdab})
  =
  \tr\left[
    \left\{
      \tilde{\Lambdab} \widetilde \Hb_{zz}^{(t)}
      -
      \Xb\tilde{\Zb}^{(t)\star}
      \widetilde \Hb_{zz}^{(t)-\star}
    \right\}^{\star}
    \left(\Psib^{(t)}\right)^{-1}
    \left\{
      \tilde{\Lambdab} \widetilde \Hb_{zz}^{(t)}
      -
      \Xb\tilde{\Zb}^{(t)\star}
      \widetilde \Hb_{zz}^{(t)-\star}
    \right\}
  \right]
  +
  C_t,
\end{align}
where $C_t$ is an additive term that does not depend on $\tilde{\Lambdab}$. Because $\Psib^{(t)}$ is positive definite, $\Qcal_{\tilde{\Lambdab}}(\tilde{\Lambdab})$ is minimized when
$\tilde{\Lambdab} \widetilde \Hb_{zz}^{(t)} = \Xb\tilde{\Zb}^{(t)\star} \widetilde \Hb_{zz}^{(t)-\star}$, implying that
\begin{align}
  \label{eq:app_tilde_lambda_update}
  \tilde{\Lambdab}^{(t+1)}
  =
  \Xb\tilde{\Zb}^{(t)\star}
  \left(
    \widetilde{\Sbb}_{zz}^{(t)}
  \right)^{-1}.
\end{align}
The parameter-reduction step constructs the loading matrix
$\Lambdab^{(t+1)}$ from $\tilde{\Lambdab}^{(t+1)}$ in   \eqref{eq:app_tilde_lambda_update} as
\begin{align*}
  \Lambdab^{(t+1)}
=
\tilde{\Lambdab}^{(t+1)}
\Rb^{(t+1)},
\qquad
\Rb^{(t+1)}
=
\operatorname{chol}\left(
  \Gammab^{(t+1)}
\right).
\end{align*}
Using \eqref{eq:app_gamma_update},
\begin{align*}
  \Rb^{(t+1)}
=
\frac{1}{\sqrt{n}} \widetilde \Hb_{zz}^{(t)},
\end{align*}
which further implies
\begin{align}
  \label{eq:lam_construction}
  \Lambdab^{(t+1)}
  &=
  \frac{1}{\sqrt{n}}
  \Xb\tilde{\Zb}^{(t)\star}
 \widetilde \Hb_{zz}^{(t)-\star}.
\end{align}

\subsection{M-step update for $\Psib$}
\label{sec:app_Psi_estimate}

We now hold $(\tilde{\Lambdab},  \Gammab)$ fixed at $(\tilde{\Lambdab}^{(t+1)},  \Gammab^{(t+1)})$ in  \eqref{eq:vec-comp-llk_app} and update the diagonal matrix of residual error variances $\Psib$. The terms in the expected complete-data negative log-likelihood that depend on $\Psib$ are
\begin{align*}
\Qcal\left(
  \tilde{\Lambdab}^{(t+1)},
  \Gammab^{(t+1)},
  \Psib
  \mid
  \tilde{\thetab}^{(t)}
\right)
\doteq
\Qcal_{\Psib}(\Psib)
=
n\log|\Psib|
+
\tr\left[
  \Psib^{-1}\Rb_x^{(t+1)}
\right],
\end{align*}
where $\Rb_x^{(t+1)} = \Xb\Xb^\star - \Xb\tilde{\Zb}^{(t)\star} \tilde{\Lambdab}^{(t+1)\star} - \tilde{\Lambdab}^{(t+1)} \tilde{\Zb}^{(t)} \Xb^\star + \tilde{\Lambdab}^{(t+1)} \widetilde{\Sbb}_{zz}^{(t)} \tilde{\Lambdab}^{(t+1)\star}$ and $\doteq$ denotes equality up to an additive term that does not
depend on $\Psib$. Because $\Psib$ is diagonal, minimizing $\Qcal_{\Psib}(\Psib)$ gives
\begin{align}
  \label{eq:psi_complete_data_update}
  \Psib^{(t+1)}
  =
  \frac{1}{n}
  \Rb_x^{(t+1)}
  \circ\Ib_p,
\end{align}
where $\circ$ is the Hadamard product.

We simplify the analytic form of $\Rb_x^{(t+1)}$ using \eqref{eq:app_gamma_update}, \eqref{eq:app_tilde_lambda_update}, and \eqref{eq:lam_construction}. Using \eqref{eq:app_tilde_lambda_update}, substitute $\tilde{\Lambdab}^{(t+1)} \widetilde{\Sbb}_{zz}^{(t)}$ for $\Xb\tilde{\Zb}^{(t)\star}$ in the definition of $\Rb_x^{(t+1)}$ to obtain that
\begin{align*}
  \Rb_x^{(t+1)}
  =
  \Xb\Xb^\star
  -
  \tilde{\Lambdab}^{(t+1)}
  \widetilde{\Sbb}_{zz}^{(t)}
  \tilde{\Lambdab}^{(t+1)\star}.
\end{align*}
Using
$\Gammab^{(t+1)}
=
n^{-1}\widetilde{\Sbb}_{zz}^{(t)}$ from
\eqref{eq:app_gamma_update}, we obtain
\begin{align}
  \label{eq:psi_tilde_update}
  \Psib^{(t+1)}
  =
  \left[
    \frac{1}{n}\Xb\Xb^\star
    -
    \tilde{\Lambdab}^{(t+1)}
    \Gammab^{(t+1)}
    \tilde{\Lambdab}^{(t+1)\star}
  \right]
  \circ\Ib_p.
\end{align}
Finally, using \eqref{eq:lam_construction}, the update in \eqref{eq:psi_tilde_update} can be written in terms of the reduced loading matrix, $\Lambdab^{(t+1)}$, as
\begin{align}
  \label{eq:psi_mle_estimate}
  \Psib^{(t+1)}
  =
  \left[
    \frac{1}{n}\Xb\Xb^\star
    -
    \Lambdab^{(t+1)}
    \Lambdab^{(t+1)\star}
  \right]
  \circ\Ib_p.
\end{align}

\subsection{Lasso regularization and sparse PX-EM updates}
\label{sec:lasso_derivation_app}

We first express the PX-EM objective in terms of the reduced parameter set $\thetab=(\Lambdab,\Psib)$. We then add an elementwise modulus penalty to the reduced loading objective and derive closed-form updates for $\Lambdab$ and $\Psib$. All expected complete-data objectives in this subsection are normalized by $n$ before the penalty is added.

\subsubsection{Reduced-form PX-EM objective}
\label{sec:reduced_form_pxem_obj}

We reformulate the PX-EM objective in \eqref{eq:vec-comp-llk_app} using only the reduced parameters.
Let $\Cb^{(t)}$, $\Zb^{(t)}$, and $\Sbb_{zz}^{(t)}$ denote the reduced counterparts of $\widetilde{\Cb}^{(t)}$, $\tilde{\Zb}^{(t)}$, and $\widetilde{\Sbb}_{zz}^{(t)}$ from
Proposition~\ref{prop:pxem_posterior_moments}, respectively, where
\begin{align}\label{eq:expected_Z_given_X_final}
  \Cb^{(t)}
  &=
  \left[
    \Ib_k
    +
    \Lambdab^{(t)\star}
    \left(\Psib^{(t)}\right)^{-1}
    \Lambdab^{(t)}
  \right]^{-1}, \qquad
  \Zb^{(t)}
  =
  \Cb^{(t)}
  \Lambdab^{(t)\star}
  \left(\Psib^{(t)}\right)^{-1}
  \Xb,\nonumber\\
  \Sbb_{zz}^{(t)}
  &=
  n\Cb^{(t)}
  +
  \Zb^{(t)}\Zb^{(t)\star}.
\end{align}
Similarly, the reduced counterparts of the terms used to update $\tilde{\Lambdab}$ in \eqref{eq:lam_tilde_quadratic_form} and \eqref{eq:lam_construction} are
\begin{align}
  \label{eq:A_updated_def}
  \Hb_{zz}^{(t)} = \operatorname{chol}\left(\Sbb_{zz}^{(t)} \right), \qquad
  \Hb_{zz}^{(t)-\star}
  = \left\{\Hb_{zz}^{(t)-1} \right\}^{\star}, \qquad
  \Ab^{(t)}
  =
  \frac{1}{\sqrt{n}}
  \Xb
  \Zb^{(t)\star}
  \Hb_{zz}^{(t)-\star}.
\end{align}

\begin{proposition}[Reduced-form PX-EM loading objective]
\label{prop:reduced_pxem_loading_objective}

After updating $\Gammab$ according to \eqref{eq:app_gamma_update} and applying the parameter-reduction map in  \eqref{eq:lam_construction}, the loading-dependent part of the normalized PX-EM objective in \eqref{eq:vec-comp-llk_app} satisfies
\begin{align}
  \label{eq:lam_quadratic_form_final_app}
  \frac{1}{n}
  \Qcal\left(
    \tilde{\Lambdab},
    \Gammab^{(t+1)},
    \Psib^{(t)}
    \mid
    \tilde{\thetab}^{(t)}
  \right)
  \doteq \Qcal_{\Lambdab}^{\mathrm{red}}
  \left(
    \Lambdab
    \mid
    \thetab^{(t)}
  \right) =
  \tr\left[
    \left(
      \Lambdab-\Ab^{(t)}
    \right)^\star
    \left(\Psib^{(t)}\right)^{-1}
    \left(
      \Lambdab-\Ab^{(t)}
    \right)
  \right],
\end{align}
where $\Lambdab = \tilde{\Lambdab} \operatorname{chol}\left(\Gammab^{(t+1)} \right)$, $\Ab^{(t)}$ is defined in \eqref{eq:lam_quadratic_form_final_app}, and $\doteq$ denotes equality up to an additive term that does not depend on $\Lambdab$. The unique unpenalized minimizer of the objective in  \eqref{eq:lam_quadratic_form_final_app} is $\Lambdab^{(t+1)} = \Ab^{(t)}$.
\end{proposition}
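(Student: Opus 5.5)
The plan is to start from the completed-square form of the $\tilde{\Lambdab}$-dependent part of $\Qcal$ already derived in \eqref{eq:lam_tilde_quadratic_form} and change variables to the reduced loading. Write $\Rb^{(t+1)}=\operatorname{chol}(\Gammab^{(t+1)})$. By \eqref{eq:app_gamma_update}, $\Rb^{(t+1)}=n^{-1/2}\widetilde\Hb_{zz}^{(t)}$, since $n^{-1/2}\widetilde\Hb_{zz}^{(t)}$ is lower triangular with positive diagonal and squares to $n^{-1}\widetilde{\Sbb}_{zz}^{(t)}$; uniqueness of the Cholesky factor then gives the identity. Hence $\Lambdab=\tilde{\Lambdab}\Rb^{(t+1)}$ satisfies
\[
\tilde{\Lambdab}\widetilde\Hb_{zz}^{(t)}=\sqrt{n}\,\Lambdab .
\]
This map is a bijection of $\CC^{p\times k}$ because $\widetilde\Hb_{zz}^{(t)}$ is invertible; $\widetilde{\Sbb}_{zz}^{(t)}\succeq n\widetilde{\Cb}^{(t)}\succ 0$ by Proposition~\ref{prop:pxem_posterior_moments}. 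Substituting into \eqref{eq:lam_tilde_quadratic_form}, the bracket becomes $\sqrt{n}\bigl(\Lambdab-n^{-1/2}\Xb\tilde{\Zb}^{(t)\star}\widetilde\Hb_{zz}^{(t)-\star}\bigr)$. Dividing by $n$ then yields $\tr[(\Lambdab-\Ab^{(t)})^\star(\Psib^{(t)})^{-1}(\Lambdab-\Ab^{(t)})]$ plus $C_t/n$. The remaining terms of $\Qcal$, namely those in $\Gammab$ and $\Psib$ alone, do not depend on $\Lambdab$, which gives the ``$\doteq$''.

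The main obstacle is the identification of the working-model quantities $\tilde{\Zb}^{(t)},\widetilde{\Sbb}_{zz}^{(t)}$ with the reduced quantities $\Zb^{(t)},\Sbb_{zz}^{(t)}$ in \eqref{eq:expected_Z_given_X_final}. The target $\Ab^{(t)}$ is written with reduced quantities, which correspond to running the E-step at $\Gammab=\Ib_k$ with $\Lambdab^{(t)}$. I would treat this as the standard PX-EM convention: the reduction at the end of iteration $t$ resets the working parameters to $(\Lambdab^{(t)},\Ib_k,\Psib^{(t)})$, so $\Gammab^{(t)}=\Ib_k$ and $\tilde{\Lambdab}^{(t)}=\Lambdab^{(t)}$.

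To justify this, I would check invariance directly from Proposition~\ref{prop:pxem_posterior_moments}. Take a working parameter with $\tilde{\Lambdab}^{(t)}\Rb=\Lambdab^{(t)}$, where $\Rb$ is the Cholesky factor of $\Gammab^{(t)}$. Then $\widetilde{\Sigmab}^{(t)}=\Lambdab^{(t)}\Lambdab^{(t)\star}+\Psib^{(t)}$, $\tilde{\Zb}^{(t)}=\Rb\Zb^{(t)}$, and $\widetilde{\Sbb}_{zz}^{(t)}=\Rb\Sbb_{zz}^{(t)}\Rb^\star$. In that case $\widetilde\Hb_{zz}^{(t)}$ need not equal $\Rb\Hb_{zz}^{(t)}$ exactly, but it equals $\Rb\Hb_{zz}^{(t)}\Qb$ for some unitary $\Qb$. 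The resulting $\Ab$ would then differ by a right unitary factor, so the clean statement needs $\Gammab^{(t)}=\Ib_k$. I would therefore state this reset explicitly, under which $\tilde{\Zb}^{(t)}=\Zb^{(t)}$ and $\widetilde{\Sbb}_{zz}^{(t)}=\Sbb_{zz}^{(t)}$ follow immediately from the Woodbury form \eqref{eq:pxem_conditional_cov_woodbury}.

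Finally, uniqueness of the minimizer follows because the objective is $\sum_r (\psi^{(t)}_{rr})^{-1}\|\lambdab_{r\cdot}-\ab^{(t)}_{r\cdot}\|^2$ with all $\psi_{rr}^{(t)}>0$. This is strictly convex and vanishes only at $\Lambdab=\Ab^{(t)}$, consistent with \eqref{eq:lam_construction}.
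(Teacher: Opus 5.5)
Your main computation is sound, and it is more direct than the paper's. You change variables in the already-completed square \eqref{eq:lam_tilde_quadratic_form} using $\tilde{\Lambdab}\widetilde{\Hb}_{zz}^{(t)}=\sqrt{n}\,\Lambdab$. The paper instead re-expands the residual cross-product $\widetilde{\Rb}_x(\tilde{\Lambdab})$ term by term and completes the square a second time in $\Lambdab$.

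The problem is in your ``main obstacle'' paragraph. You claim that, with $\Rb=\operatorname{chol}(\Gammab^{(t)})$, $\widetilde{\Hb}_{zz}^{(t)}$ may equal $\Rb\Hb_{zz}^{(t)}\Qb$ for a nontrivial unitary $\Qb$. From this you conclude that the clean statement needs the reset $\Gammab^{(t)}=\Ib_k$. That claim is false. Both $\Rb$ and $\Hb_{zz}^{(t)}$ are lower triangular with positive diagonal, so $\Rb\Hb_{zz}^{(t)}$ is as well. Moreover, $(\Rb\Hb_{zz}^{(t)})(\Rb\Hb_{zz}^{(t)})^\star=\Rb\Sbb_{zz}^{(t)}\Rb^\star=\widetilde{\Sbb}_{zz}^{(t)}$. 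Uniqueness of the Cholesky factor then forces $\widetilde{\Hb}_{zz}^{(t)}=\Rb\Hb_{zz}^{(t)}$ exactly; this is the same fact you used to get $\operatorname{chol}(\Gammab^{(t+1)})=n^{-1/2}\widetilde{\Hb}_{zz}^{(t)}$. Hence
$n^{-1/2}\Xb\tilde{\Zb}^{(t)\star}\widetilde{\Hb}_{zz}^{(t)-\star}=n^{-1/2}\Xb\Zb^{(t)\star}\Rb^\star\Rb^{-\star}\Hb_{zz}^{(t)-\star}=\Ab^{(t)}$
for every working $\Gammab^{(t)}$ with $\tilde{\Lambdab}^{(t)}=\Lambdab^{(t)}\Rb^{-1}$. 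This is exactly the key step in the paper's proof.

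The error matters. The paper's expanded-parameter bookkeeping does not reset the working parameter. It carries $\Gammab_j^{[t,j]}=n_j^{-1}\Sbb_{j,zz}^{[t]}$ and $\tilde{\Lambdab}_j^{[t,j]}=\Lambdab_j^{[t,j]}\operatorname{chol}(\Gammab_j^{[t,j]})^{-1}$ forward; see \eqref{eq:ssfa_mode_gamma_update}--\eqref{eq:ssfa_mode_tilde_lambda_update}, and likewise $\Gammab^{(t+1)}=n^{-1}\widetilde{\Sbb}_{zz}^{(t)}$ in the vector case. The expanded E-step moments are therefore computed at a non-identity $\Gammab^{(t)}$. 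The proposition is what guarantees that these reduce to the quantities $\Zb^{(t)}$ and $\Sbb_{zz}^{(t)}$ used in Algorithm~\ref{algo}.

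As written, your argument proves only the special case $\Gammab^{(t)}=\Ib_k$ and wrongly asserts that the general case does not hold cleanly. Replace the reset assumption with the one-line Cholesky-uniqueness argument above. With that change, the rest of your proof goes through, including the bijectivity of $\tilde{\Lambdab}\mapsto\Lambdab$ and the uniqueness of the minimizer.
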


\begin{proof}
Equation~\eqref{eq:3} in the proof of Proposition~\ref{prop:pxem_posterior_moments} shows that the conditional mode-$1$ covariance of $\tilde{\Zb}$ given $\Xb$ under the expanded working
model is
\begin{align*}
  \widetilde{\Cb}^{(t)}
  =
  \left[
    \Gammab^{(t)-1}
    +
    \tilde{\Lambdab}^{(t)\star}
    \left(\Psib^{(t)}\right)^{-1}
    \tilde{\Lambdab}^{(t)}
  \right]^{-1}.
\end{align*}
Furthermore, $\Gammab^{(t)} = \Rb^{(t)}\Rb^{(t)\star}$ and $\tilde{\Lambdab}^{(t)} = \Lambdab^{(t)}\Rb^{(t)-1}$. Combining these equations with \eqref{eq:pxem_conditional_cov_woodbury}, \eqref{eq:pxem_posterior_means}, and \eqref{eq:lam_quadratic_form_final_app}, we obtain that
\begin{align}
  \label{eq:expanded_reduced_conditional_covariance}
  \widetilde{\Cb}^{(t)}
  =
  \Rb^{(t)}
  \Cb^{(t)}
  \Rb^{(t)\star}, \qquad
\tilde{\Zb}^{(t)}
  =
  \Rb^{(t)}\Zb^{(t)}, \qquad
  \widetilde{\Sbb}_{zz}^{(t)}
  =
  \Rb^{(t)}
  \Sbb_{zz}^{(t)}
  \Rb^{(t)\star}.
\end{align}
Because $\Rb^{(t)}$ and $\Hb_{zz}^{(t)}$ are lower triangular with positive diagonal entries,
\begin{align}
  \label{eq:expanded_reduced_cholesky}
  \widetilde{\Sbb}_{zz}^{(t)}
  =
  \Rb^{(t)}
  \Sbb_{zz}^{(t)}
  \Rb^{(t)\star} = \left(\Rb^{(t)}\Hb_{zz}^{(t)} \right)\left( \Rb^{(t)}\Hb_{zz}^{(t)}\right)^{\star}, \qquad
  \widetilde \Hb_{zz}^{(t)} = \operatorname{chol}\left(
    \widetilde{\Sbb}_{zz}^{(t)}
  \right)
  =
  \Rb^{(t)}\Hb_{zz}^{(t)}.
\end{align}
Therefore, the parameter-reduction map in \eqref{eq:lam_construction} implies
\begin{align}
  \label{eq:tilde_lambda_reduction_identity}
 \Lambdab = \frac{1}{\sqrt{n}} \Xb \left(\Rb^{(t)} \Zb^{(t)} \right)^\star \left(\Rb^{(t)}\Hb_{zz}^{(t)} \right)^{- \star} = \frac{1}{\sqrt{n}} \Xb \Zb^{(t) \star}  \Hb_{zz}^{(t)- \star} = \Ab^{(t)},
\end{align}
where $\Ab^{(t)}$ is defined in \eqref{eq:A_updated_def}.

Recall that the terms in the PX-EM objective involving $\tilde{\Lambdab}$ in \eqref{eq:lam_tilde_quadratic_form} are
\begin{align*}
  \Qcal_{\tilde{\Lambdab}}(\tilde{\Lambdab})
  =
  \tr\left[
    \Psib^{-1}
    \tilde{\Lambdab}
    \widetilde{\Sbb}_{zz}^{(t)}
    \tilde{\Lambdab}^{\star}
  \right]
  -
  \tr\left[
    \Psib^{-1}
    \Xb
    \tilde{\Zb}^{(t)\star}
    \tilde{\Lambdab}^{\star}
  \right]
  -
  \tr\left[
    \Psib^{-1}
    \tilde{\Lambdab}
    \tilde{\Zb}^{(t)}
    \Xb^\star
  \right].
\end{align*}
Adding $\tr\{\Psib^{-1}\Xb\Xb^\star\}$, which does not depend on $\tilde{\Lambdab}$, and completing the square gives
\begin{align}\label{eq:expanded_residual_matrix}
  \Qcal_{\tilde{\Lambdab}}(\tilde{\Lambdab})
  \doteq
  \tr\left[
    \Psib^{-1}
    \widetilde{\Rb}_x(\tilde{\Lambdab})
  \right], \qquad
  \widetilde{\Rb}_x(\tilde{\Lambdab})
  =
  \Xb\Xb^\star
  -
  \Xb\tilde{\Zb}^{(t)\star}\tilde{\Lambdab}^\star
  -
  \tilde{\Lambdab}\tilde{\Zb}^{(t)}\Xb^\star
  +
  \tilde{\Lambdab}
  \widetilde{\Sbb}_{zz}^{(t)}
  \tilde{\Lambdab}^\star.
\end{align}
where the matrix $\widetilde{\Rb}_x(\tilde{\Lambdab})$ is the conditional expected residual cross-product matrix,
\begin{align*}
  \widetilde{\Rb}_x(\tilde{\Lambdab})
  =
  \EE\left[
    \left(
      \Xb-\tilde{\Lambdab}\tilde{\Zb}
    \right)
    \left(
      \Xb-\tilde{\Lambdab}\tilde{\Zb}
    \right)^\star
    \,\middle|\,
    \Xb,\tilde{\thetab}^{(t)}
  \right].
\end{align*}
Using  \eqref{eq:expanded_reduced_conditional_covariance}, \eqref{eq:expanded_reduced_cholesky}, and  \eqref{eq:tilde_lambda_reduction_identity} we obtain
\begin{align*}
  \Xb\tilde{\Zb}^{(t)\star}\tilde{\Lambdab}^\star &= \Xb  (\Rb^{(t)} \Zb^{(t)})^{\star} \left[ \Lambdab \left(n^{-1/2}\Rb^{(t)}\Hb_{zz}^{(t)}\right)^{-1} \right]^{\star}
  = n^{1/2} \Xb \Zb^{(t) \star}  \Hb_{zz}^{(t)- \star} \Lambdab^{\star}=
  n \Ab^{(t)}\Lambdab^\star,\\
  \tilde{\Lambdab}\tilde{\Zb}^{(t)}\Xb^\star
                                                  &=
 \Lambdab \left(n^{-1/2}\Rb^{(t)}\Hb_{zz}^{(t)}\right)^{-1}\Rb^{(t)} \Zb^{(t)}\Xb^\star
  = n\Lambdab\Ab^{(t)\star},\\
  \tilde{\Lambdab}
  \widetilde{\Sbb}_{zz}^{(t)}
  \tilde{\Lambdab}^\star
  &=  \Lambdab \left(n^{-1/2}\Rb^{(t)}\Hb_{zz}^{(t)}\right)^{-1} \Rb^{(t)}
  \Sbb_{zz}^{(t)}
  \Rb^{(t)\star} \left(n^{-1/2}\Rb^{(t)}\Hb_{zz}^{(t)}\right)^{-\star} \Lambdab^\star
  = n\Lambdab\Lambdab^\star.
\end{align*}

Substituting these expressions into \eqref{eq:expanded_residual_matrix}, define
\begin{align*}
  \Bb_t(\Lambdab) = \frac{1}{n}
\widetilde{\Rb}_x(\tilde{\Lambdab})
  =
  \frac{1}{n}\Xb\Xb^\star
  -
  \Ab^{(t)}\Lambdab^\star
  -
  \Lambdab\Ab^{(t)\star}
  +
  \Lambdab\Lambdab^\star,
\end{align*}
where $\Lambdab$ is a candidate reduced loading matrix. Adding and subtracting $\Ab^{(t)}\Ab^{(t)\star}$ gives
\begin{align}
  \label{eq:reduced_residual_square}
  \Bb_t(\Lambdab)
  =
    \frac{1}{n}\Xb\Xb^\star - \Ab^{(t)}\Ab^{(t)\star}
  +
  \left(
    \Lambdab-\Ab^{(t)}
  \right)
  \left(
    \Lambdab-\Ab^{(t)}
  \right)^\star.
\end{align}

Substituting the expression for $\Bb_t(\Lambdab)$ into \eqref{eq:vec-comp-llk_app} shows that the normalized expected complete-data negative log-likelihood satisfies
\begin{align}
  \label{eq:reduced_pxem_surrogate}
  \frac{1}{n}
  \Qcal\left(
    \tilde{\Lambdab},
    \Gammab^{(t+1)},
    \Psib
    \mid
    \tilde{\thetab}^{(t)}
  \right)
  \doteq
  \log|\Psib|
  +
  \tr\left[
    \Psib^{-1}\Bb_t(\Lambdab)
  \right],
\end{align}
where we have ignored additive terms that does not depend on $\Lambdab$ or $\Psib$.

Setting $\Psib=\Psib^{(t)}$ in \eqref{eq:reduced_pxem_surrogate} gives the objective for updating $\Lambdab$ as
\begin{align}
  \label{eq:lam_quadratic_form_final_tmp}
  \Qcal_{\Lambdab}^{\mathrm{red}}
  \left(
    \Lambdab
    \mid
    \thetab^{(t)}
  \right)
  =
  \tr\left[
    \left(
      \Lambdab-\Ab^{(t)}
    \right)^\star
    \left(\Psib^{(t)}\right)^{-1}
    \left(
      \Lambdab-\Ab^{(t)}
    \right)
  \right].
\end{align}
Because $\Psib^{(t)}$ is positive definite, the unique unpenalized minimizer of \eqref{eq:lam_quadratic_form_final_tmp} is $\Lambdab^{(t+1)}=\Ab^{(t)}$. The proposition is proved.
\end{proof}

\subsubsection{Weighted complex soft-thresholding}

The elementwise complex lasso norm is $  \|\Lambdab\|_{1,\CC} = \sum_{r=1}^p\sum_{c=1}^k |\lambda_{rc}|$. We impose sparsity by augmenting $\Qcal_{\Lambdab}^{\mathrm{red}} \left(\Lambdab \mid \thetab^{(t)} \right)$ in \eqref{eq:lam_quadratic_form_final_app} with $\rho\|\Lambdab\|_{1,\CC}$, where $\rho\geq0$. Before deriving the sparse update for $\Lambdab$, we establish the elementwise complex soft-thresholding operator in the following lemma.

\begin{lemma}[Weighted complex soft-thresholding]
\label{lem:complex_soft_thresholding}

Let $\Ab=(a_{rc})\in\CC^{p\times k}$ and $\Psib = \diag\left(\psi_{11},\ldots,\psi_{pp} \right)$, where the diagonal entries of $\Psib$ are positive. The optimization problem
\begin{align}
  \label{eq:general_complex_lasso_objective}
  \min_{\Lambdab\in\CC^{p\times k}}
  \left\{
    \tr\left[
      (\Lambdab-\Ab)^\star
      \Psib^{-1}
      (\Lambdab-\Ab)
    \right]
    +
    \rho\|\Lambdab\|_{1,\CC}
  \right\}
\end{align}
has a unique minimizer $\widehat{\Lambdab}=(\widehat{\lambda}_{rc})$ with entries $\widehat{\lambda}_{rc}
  = \mathcal{T}_{\rho\psi_{rr}/2}(a_{rc})$, where
\begin{align}
  \label{eq:complex_soft_threshold_operator}
  \mathcal{T}_{\tau}(z)
  =
  \begin{cases}
    \displaystyle
    \left(
      1-\frac{\tau}{|z|}
    \right)z,
    & |z|>\tau,\\[2mm]
    0,
    & |z|\leq\tau.
  \end{cases}
\end{align}
The operator $\mathcal{T}_{\tau}$ is the complex soft-thresholding operator. It preserves the phase of each entry that remains nonzero while reducing its modulus by $\tau$.
\end{lemma}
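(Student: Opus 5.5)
The plan is to show that the objective in \eqref{eq:general_complex_lasso_objective} separates into independent scalar problems, one per entry, and then to solve the scalar complex lasso problem explicitly. Because $\Psib$ is diagonal, the trace term expands as
\begin{align*}
  \tr\left[(\Lambdab-\Ab)^\star\Psib^{-1}(\Lambdab-\Ab)\right]
  =
  \sum_{r=1}^p\sum_{c=1}^k \psi_{rr}^{-1}\,|\lambda_{rc}-a_{rc}|^2 .
\end{align*}
The penalty $\rho\|\Lambdab\|_{1,\CC}$ is also a sum over entries. Hence the objective equals $\sum_{r,c} g_{rc}(\lambda_{rc})$ with
\begin{align*}
  g_{rc}(\lambda)=\psi_{rr}^{-1}|\lambda-a_{rc}|^2+\rho|\lambda|,\qquad \lambda\in\CC .
\end{align*}
Minimizing the sum is therefore equivalent to minimizing each $g_{rc}$ separately, and uniqueness of the global minimizer follows from uniqueness of each scalar minimizer. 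Multiplying $g_{rc}$ by $\psi_{rr}>0$ reduces the problem to minimizing $h(\lambda)=|\lambda-a|^2+2\tau|\lambda|$ with $\tau=\rho\psi_{rr}/2$.

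For uniqueness, identify $\CC$ with $\RR^2$. Then $|\lambda-a|^2$ is strictly convex and $|\lambda|$ is convex, so $h$ is strictly convex and coercive. It therefore has exactly one minimizer.

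To identify the minimizer, write $\lambda=se^{\mathrm{i}\phi}$ with $s\ge0$, and $a=|a|e^{\mathrm{i}\alpha}$. Then
\begin{align*}
  h(\lambda)=s^2-2s|a|\cos(\phi-\alpha)+|a|^2+2\tau s .
\end{align*}
For any fixed $s>0$, this is minimized by taking $\phi=\alpha$, which gives $s^2-2s(|a|-\tau)+|a|^2$. Minimizing over $s\ge0$ yields $s^\ast=\max(|a|-\tau,0)$. When $|a|>\tau$ this gives $\lambda=(|a|-\tau)e^{\mathrm{i}\alpha}=(1-\tau/|a|)a$; otherwise it gives $\lambda=0$. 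This is exactly $\mathcal{T}_\tau(a)$ in \eqref{eq:complex_soft_threshold_operator}.

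The case $a=0$ needs separate handling, since the phase $\alpha$ is then undefined. In that case $h(\lambda)=|\lambda|^2+2\tau|\lambda|$, which is minimized at $\lambda=0$ and agrees with the formula. The phase-preservation and modulus-reduction claims are immediate from the form $(|a|-\tau)e^{\mathrm{i}\alpha}$.

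The main obstacle is justifying that this joint minimization over $(s,\phi)$ is legitimate, because $\phi$ is not defined at $s=0$. I would handle it by comparing $h(0)=|a|^2$ directly with $\inf_{s>0}h$.

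An alternative check uses the subdifferential optimality condition $0\in 2(\lambda-a)+2\tau\,\partial|\lambda|$. Here $\partial|\lambda|=\{\lambda/|\lambda|\}$ for $\lambda\ne0$ and is the closed unit disk at $\lambda=0$. This condition recovers both cases directly.
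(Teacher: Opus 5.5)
Your proposal is correct and takes essentially the same route as the paper's proof. Both separate the objective entrywise using the diagonal $\Psib$, align the phase with $\arg(a_{rc})$ in polar form, and soft-threshold the modulus to $(|a_{rc}|-\rho\psi_{rr}/2)_+$; your uniqueness argument via strict convexity of $h$ on $\CC\cong\RR^2$, with the $a=0$ case handled explicitly, is slightly more complete than the paper's, which invokes only strict convexity of the one-dimensional problem in the modulus $s$.
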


\begin{proof}
Because $\Psib$ is diagonal, the objective in
\eqref{eq:general_complex_lasso_objective} separates across the entries
of $\Lambdab$. For each $(r,c)$, it is sufficient to minimize
\begin{align*}
  g_{rc}(\lambda)
  =
  \frac{1}{\psi_{rr}}
  |\lambda-a_{rc}|^2
  +
  \rho|\lambda|.
\end{align*}
Using the polar representations $\lambda
= s\exp(\mathrm{i}\omega)$ and $a_{rc} = q\exp(\mathrm{i}\theta)$, where $s,q\geq0$ and $\theta, \omega \in [0, 2\pi)$, we obtain
\begin{align*}
  g_{rc}(\lambda)
  =
  \frac{1}{\psi_{rr}}
  \left[
    s^2+q^2-2sq\cos(\omega-\theta)
  \right]
  +
  \rho s.
\end{align*}
For fixed $s>0$, this expression is minimized over $\omega$ when $\omega=\theta$ modulo $2\pi$. When $s=0$, the phase is irrelevant. Therefore, minimizing $  g_{rc}(\lambda)$ reduces to
\begin{align*}
  \widehat s = \argmin_{s\geq0}
  \left\{
    \frac{1}{\psi_{rr}}(s-q)^2+\rho s
  \right\}, \qquad \widehat s = \left(q-\frac{\rho\psi_{rr}}{2} \right)_+,
\end{align*}
where $(x)_+=\max(0,x)$. The objective is strictly convex, so $\widehat s$ is its unique minimizer. Combining \eqref{eq:complex_soft_threshold_operator} with the optimal modulus and phase gives $\widehat \lambda_{rc} = \widehat s \exp(\mathrm{i}\mathrm{arg}(a_{rc})) = \mathcal{T}_{\rho\psi_{rr}/2}(a_{rc})$ for every $r$ and $c$. The lemma is proved.
\end{proof}

\subsubsection{Closed-form sparse PX-EM updates}
\label{sec:sparse_vec_model_derivation_app}

Combining Proposition~\ref{prop:reduced_pxem_loading_objective} and Lemma~\ref{lem:complex_soft_thresholding} gives the CM-step in the sparse PX-EM update for $\Lambdab$. We first restate Corollary \ref{cor:sparse_pxem_loading_update} from Section \ref{sec:sparse_vec_model_derivation} of the main manuscript.

\begin{corollary}[Closed-form sparse PX-EM updates]
\label{cor:sparse_pxem_loading_update_app}

Let $\Ab^{(t)}=(a_{rc}^{(t)})$ be defined in \eqref{eq:A_updated_def}, and let $\psi_{rr}^{(t)}$ denote the $r$th diagonal entry of $\Psib^{(t)}$. The unique minimizer over
$\Lambdab\in\CC^{p\times k}$ of
\begin{align}
  \label{eq:complex_lasso_loading_objective_app}
  \Qcal_{\Lambdab}^{\mathrm{red}}
  \left(
    \Lambdab
    \mid
    \thetab^{(t)}
  \right)
  +
  \rho\|\Lambdab\|_{1,\CC} =  \tr\left[
    \left(
      \Lambdab-\Ab^{(t)}
    \right)^\star
    \left(\Psib^{(t)}\right)^{-1}
    \left(
      \Lambdab-\Ab^{(t)}
    \right)
  \right] +
  \rho\|\Lambdab\|_{1,\CC},
\end{align}
is $\Lambdab^{(t+1)}$, whose entries are
\begin{align}
  \label{eq:sparse_lam_mod_update_app}
  \lambda_{rc}^{(t+1)}
  =
  \mathcal{T}_{\rho\psi_{rr}^{(t)}/2}
  \left(
    a_{rc}^{(t)}
  \right),
  \qquad
  r=1,\ldots,p,\quad c=1,\ldots,k,
\end{align}
where $\mathcal{T}_{\tau}$ is the complex soft-thresholding operator
defined in \eqref{eq:complex_soft_threshold_operator} as
\begin{align}
  \label{eq:sparse_lam_piecewise_update_app}
  \lambda_{rc}^{(t+1)}
  =
  \begin{cases}
    \displaystyle
    \left(
      1-
      \frac{\rho\psi_{rr}^{(t)}}
      {2|a_{rc}^{(t)}|}
    \right)
    a_{rc}^{(t)},
    &
    |a_{rc}^{(t)}|
    >
    \dfrac{\rho\psi_{rr}^{(t)}}{2},\\[3mm]
    0,
    &
    |a_{rc}^{(t)}|
    \leq
    \dfrac{\rho\psi_{rr}^{(t)}}{2}.
  \end{cases}
\end{align}
All entries of $\Lambdab^{(t+1)}$ can be updated simultaneously.
\end{corollary}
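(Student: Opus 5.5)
The corollary follows by chaining Proposition~\ref{prop:reduced_pxem_loading_objective} with Lemma~\ref{lem:complex_soft_thresholding}.

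First, I will use Proposition~\ref{prop:reduced_pxem_loading_objective} to identify the objective. After the $\Gammab$-update and the parameter-reduction map, the loading-dependent part of the normalized PX-EM objective equals $\Qcal_{\Lambdab}^{\mathrm{red}}(\Lambdab\mid\thetab^{(t)})$, up to a term free of $\Lambdab$. Here $\Ab^{(t)}$ is as in \eqref{eq:A_updated_def}. Adding the penalty $\rho\|\Lambdab\|_{1,\CC}$ gives exactly the problem in \eqref{eq:complex_lasso_loading_objective_app}. Because the constant does not depend on $\Lambdab$, it affects neither the existence nor the identity of the minimizer. One point needs care: the reduction map $\tilde{\Lambdab}\mapsto\tilde{\Lambdab}\operatorname{chol}(\Gammab^{(t+1)})$ is a bijection of $\CC^{p\times k}$, since the Cholesky factor is invertible. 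Minimizing over $\Lambdab\in\CC^{p\times k}$ therefore loses nothing.

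Second, I will apply Lemma~\ref{lem:complex_soft_thresholding} with $\Ab=\Ab^{(t)}$ and $\Psib=\Psib^{(t)}$. The lemma's hypotheses are that $\Psib^{(t)}$ is diagonal with positive diagonal entries and that $\rho\ge0$. Positivity must hold at every iterate. I will take it as part of the standing assumptions on the parameter space, since the plug-in update \eqref{eq:sparse-psi-update} is presumed to produce positive entries. The lemma then gives a unique minimizer with entries $\mathcal{T}_{\rho\psi^{(t)}_{rr}/2}(a^{(t)}_{rc})$. Expanding the operator definition \eqref{eq:complex_soft_threshold_operator} with $\tau=\rho\psi^{(t)}_{rr}/2$ yields the piecewise form \eqref{eq:sparse_lam_piecewise_update_app}.

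Finally, the simultaneous-update claim follows from separability. The objective splits as $\sum_{r,c}\bigl\{\psi_{rr}^{(t)-1}|\lambda_{rc}-a^{(t)}_{rc}|^2+\rho|\lambda_{rc}|\bigr\}$, so each entry depends only on $a^{(t)}_{rc}$ and $\psi^{(t)}_{rr}$, and none depends on other entries of $\Lambdab$.

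There is no real obstacle. The only delicate point is confirming that the objective in the corollary matches the lemma's objective exactly, including the factor $1/2$ in the threshold, which comes from the unscaled quadratic term $\psi^{-1}|\lambda-a|^2$.
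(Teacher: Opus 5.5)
Your proposal is correct and follows the paper's argument: the paper's proof simply applies Lemma~\ref{lem:complex_soft_thresholding} with $\Ab=\Ab^{(t)}$ and $\Psib=\Psib^{(t)}$, and the surrounding text uses Proposition~\ref{prop:reduced_pxem_loading_objective} to supply the objective, just as you do. Your additional remarks on the invertibility of the Cholesky factor, the positivity of $\Psib^{(t)}$, and entrywise separability are sound; they make explicit what the paper leaves implicit but do not change the route.
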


\begin{proof}
The form of $\Lambdab^{(t+1)}$ in \eqref{eq:sparse_lam_mod_update_app} follows directly from
Lemma~\ref{lem:complex_soft_thresholding} after setting $\Ab=\Ab^{(t)}$ and $\Psib=\Psib^{(t)}$. The corollary is proved.
\end{proof}

The update for the diagonal residual covariance matrix holds $\Lambdab=\Lambdab^{(t+1)}$ fixed. The unpenalized PX-EM procedure for $\Psib$ in \eqref{eq:psi_mle_estimate} depends on the loading matrix only through $\Lambdab\Lambdab^\star$ and is invariant to unitary rotations of $\Lambdab$. After parameter reduction and complex soft-thresholding, we replace the unpenalized reduced loading estimate in \eqref{eq:psi_mle_estimate} with the sparse estimate $\Lambdab^{(t+1)}$ and update $\Psib$ as
\begin{align}
  \Psib^{(t+1)}
  &=
  \left[
    \frac{1}{n}\Xb\Xb^\star
    -
    \Lambdab^{(t+1)}
    \Lambdab^{(t+1)\star}
  \right]
  \circ\Ib_p.
  \label{eq:sparse-psi-update-app}
\end{align}
Therefore, the sparsity structure of $\Lambdab^{(t+1)}$ is retained, and each diagonal entry of $\Psib^{(t+1)}$ represents the empirical marginal variance not explained by the retained sparse factor structure. We refer to the resulting procedure as the sparse PX-EM procedure.

\section{Factor analysis for complex-valued arrays}
\label{sec:app_ssfa_desc}

The separable array-valued extension of the complex sparse factor model posits
\begin{align}
  \label{eq:SSFA_model_app}
  \Xcal_i
  \sim
  \CC\Ncal_{p_1\times\cdots\times p_d}
  \left(
    \mathbf{0},
    \Sigmab_1,\ldots,\Sigmab_d
  \right),
  \qquad
  \Sigmab_j
  =
  \Lambdab_j\Lambdab_j^\star+\Psib_j,
\end{align}
where
$\Lambdab_j\in\CC^{p_j\times k_j}$ is the mode-$j$ loading matrix and
$\Psib_j$ is a positive definite diagonal matrix for $j=1, \ldots, d$. The mode-$j$ parameters and the model parameters are
\begin{align}
\label{eq:sfa-mdl-par}
  \thetab_j=(\Lambdab_j,\Psib_j),
  \qquad
  \thetab=(\thetab_1,\ldots,\thetab_d).
\end{align}
Let $\Xcal\in\CC^{p_1\times\cdots\times p_d\times n}$ denote the array obtained by stacking the observations along mode $d+1$. Then,
\begin{align*}
  \Xcal
  \sim
  \CC\Ncal_{p_1\times\cdots\times p_d\times n}
  \left(
    \mathbf{0},
    \Sigmab_1,\ldots,\Sigmab_d,\Ib_n
  \right).
\end{align*}

\subsection{Mode-wise whitening}
\label{sec:app_ssfa_whitening}

One SSFA PX-EM iteration consists of $d$ mode-specific update steps. During the update from $\thetab^{(t)}$ to $\thetab^{(t+1)}$, define $  \thetab^{[t,j-1]} = \thetab^{(t+(j-1)/d)}$ and $\thetab^{[t,j]} = \thetab^{(t+j/d)}$ as the parameter values immediately before and after the mode-$j$ update, respectively. In particular, $\thetab^{[t,0]}=\thetab^{(t)}$ at the beginning of the $(t+1)$th iteration, and $\thetab^{[t,d]}=\thetab^{(t+1)}$ at its end. At the beginning of the mode-$j$ update, define
\begin{align*}
  \Sigmab_\ell^{[t,j-1]}
  =
  \Lambdab_\ell^{[t,j-1]}
  \Lambdab_\ell^{[t,j-1]\star}
  +
  \Psib_\ell^{[t,j-1]}, \qquad
  \Lb_\ell^{[t,j-1]}
  =
  \operatorname{chol}
  \left(
    \Sigmab_\ell^{[t,j-1]}
  \right),
  \qquad
  \ell=1,\ldots,d.
\end{align*}

The mode-$j$ update is performed after whitening every mode except modes $j$ and $d+1$. For $\ell=1,\ldots,d$, define the whitening factors
\begin{align*}
  \Mb_{\ell\mid j}^{[t,j-1]}
  =
  \begin{cases}
    \Ib_{p_j},
    & \ell=j,\\[1mm]
    \left(
      \Lb_\ell^{[t,j-1]}
    \right)^{-1},
    & \ell\neq j.
  \end{cases}
\end{align*}
The resulting whitened array is
\begin{align}
  \label{eq:app_array_whitening}
  \breve{\Xcal}_{(j)}^{[t]}
  =
  \Xcal
  \times_1\Mb_{1\mid j}^{[t,j-1]}
  \times_2\cdots
  \times_d\Mb_{d\mid j}^{[t,j-1]}
  \times_{d+1}\Ib_n.
\end{align}
With the parameters of the remaining modes held fixed at their values in $\thetab^{[t,j-1]}$, the mode-$j$ conditional likelihood is equivalent to the likelihood implied by
\begin{align*}
  \breve{\Xcal}_{(j)}^{[t]}
  \sim
  \CC\Ncal_{p_1\times\cdots\times p_d\times n}
  \left(
    \mathbf{0},
    \Ib_{p_1},\ldots,\Ib_{p_{j-1}},
    \Sigmab_j,
    \Ib_{p_{j+1}},\ldots,\Ib_{p_d},
    \Ib_n
  \right).
\end{align*}

We represent $\breve{\Xcal}_{(j)}^{[t]}$ using a vector-valued complex factor model in \eqref{eq:fact-mdl} from Section \ref{sec:vec_model_derivation} of the main manuscript. Let $\breve{\Xb}_{(j)}^{[t]}$ denote the mode-$j$ matricization of $\breve{\Xcal}_{(j)}^{[t]}$, and define $n_j = n\prod_{\ell\neq j}p_\ell$. Then,
\begin{align}
  \label{eq:app_array_whitened_matrix_model}
  \breve{\Xb}_{(j)}^{[t]}
  \sim
  \CC\Ncal_{p_j\times n_j}
  \left(
    \mathbf{0},
    \Sigmab_j,
    \Ib_{n_j}
  \right).
\end{align}
Using $\Sigmab_j=\Lambdab_j\Lambdab_j^\star+\Psib_j$, the model in \eqref{eq:app_array_whitened_matrix_model} has the vector-valued factor model representation
\begin{align}
  \label{eq:app_array_whitened_factor_model}
  \breve{\Xb}_{(j)}^{[t]}
  &=
  \Lambdab_j\Zb_{(j)}+\Eb_{(j)},\qquad
  \Zb_{(j)}
  &\sim
  \CC\Ncal_{k_j\times n_j}
  \left(
    \mathbf{0},
    \Ib_{k_j},
    \Ib_{n_j}
  \right), \qquad
  \Eb_{(j)}
  &\sim
  \CC\Ncal_{p_j\times n_j}
  \left(
    \mathbf{0},
    \Psib_j,
    \Ib_{n_j}
  \right),
\end{align}
where $\Zb_{(j)}$ and $\Eb_{(j)}$ are independent. Therefore, after whitening and matricization, the mode-$j$ conditional model in \eqref{eq:app_array_whitened_factor_model} has the same form as the vector-valued factor model in Section~\ref{sec:vec_model_derivation} of the main manuscript, after making the substituting $(\breve{\Xb}_{(j)}^{[t]}, n_j, p_j, k_j, \Lambdab_j, \Psib_j)$ for $(\Xb, n, p, k, \Lambdab, \Psib)$.

\subsection{Mode-wise sparse PX-EM update}
\label{sec:app_ssfa_mode_update}

We first introduce the notation required to state the lasso-regularized parameter updates for the whitened and matricized mode-$j$ factor model in \eqref{eq:app_array_whitened_factor_model}. Fix $j\in\{1,\ldots,d\}$, and let $\rho \geq0$ denote the mode-$j$ penalty parameter. The following quantities are the mode-$j$ counterparts of those defined in \eqref{eq:expected_Z_given_X_final} and \eqref{eq:A_updated_def}. They are computed using $\breve{\Xb}_{(j)}^{[t]}$ and the current mode-$j$ parameters $\thetab_j^{[t,j-1]}$:
\begin{align}
  \label{eq:ssfa_mode_posterior_quantities}
  \Cb_j^{[t]}
  &=
  \left[
    \Ib_{k_j}
    +
    \Lambdab_j^{[t,j-1]\star}
    \left(\Psib_j^{[t,j-1]}\right)^{-1}
    \Lambdab_j^{[t,j-1]}
  \right]^{-1}, \qquad
  \Zb_{(j)}^{[t]}
  =
  \Cb_j^{[t]}
  \Lambdab_j^{[t,j-1]\star}
  \left(\Psib_j^{[t,j-1]}\right)^{-1}
  \breve{\Xb}_{(j)}^{[t]},
  \nonumber\\
  \Sbb_{j,zz}^{[t]}
  &=
  n_j\Cb_j^{[t]}
  +
  \Zb_{(j)}^{[t]}
  \Zb_{(j)}^{[t]\star},
  \qquad
  \Hb_{j,zz}^{[t]}
  =
  \operatorname{chol}
  \left(
    \Sbb_{j,zz}^{[t]}
  \right),
  \qquad
  \Ab_j^{[t]}
  =
  \frac{1}{\sqrt{n_j}}
  \breve{\Xb}_{(j)}^{[t]}
  \Zb_{(j)}^{[t]\star}
  \Hb_{j,zz}^{[t]-\star}.
\end{align}

Applying the results of Section~\ref{sec:lasso_derivation_app} to the quantities in \eqref{eq:ssfa_mode_posterior_quantities} gives the mode-$j$ parameter updates. Let
$\Ab_j^{[t]}=(a_{j,rc}^{[t]})$ and $\psi_{j,rr}^{[t,j-1]}$ denote the $r$th diagonal entry of $\Psib_j^{[t,j-1]}$. Corollary~\ref{cor:sparse_pxem_loading_update_app} in Section \ref{sec:sparse_vec_model_derivation_app} gives the mode-$j$ sparse loading update
\begin{align}
  \label{eq:ssfa_mode_loading_update}
  \lambda_{j,rc}^{[t,j]}
  =
  \mathcal{T}_{\rho \psi_{j,rr}^{[t,j-1]}/2}
  \left(
    a_{j,rc}^{[t]}
  \right),
  \qquad
  r=1,\ldots,p_j,\quad
  c=1,\ldots,k_j,
\end{align}
where $\mathcal{T}_{\tau}$ is the complex soft-thresholding operator defined in \eqref{eq:sparse_lam_piecewise_update} of Section~\ref{sec:sparse_vec_model_derivation} in the main manuscript. The update for the mode-$j$ diagonal residual error covariance matrix follows from \eqref{eq:sparse-psi-update-app}:
\begin{align}
  \label{eq:ssfa_mode_psi_update}
  \Psib_j^{[t,j]}
  =
  \Bigg[
    \frac{1}{n_j}
    \breve{\Xb}_{(j)}^{[t]}
    \breve{\Xb}_{(j)}^{[t]\star}
    -
    \Lambdab_j^{[t,j]} \Lambdab_j^{[t,j] \star}
  \Bigg]
  \circ\Ib_{p_j}.
\end{align}
This update retains the sparsity structure of $\Lambdab_j^{[t,j]}$ and estimates $\Psib_j^{[t,j]}$ as the diagonal component of the empirical covariance of the whitened mode-$j$ matricization $\breve{\Xb}_{(j)}^{[t]}$ after removing the covariance explained by the retained sparse mode-$j$ factor structure.
The parameters of the remaining modes are held fixed:
\begin{align}
  \label{eq:ssfa_other_modes_fixed}
  \thetab_\ell^{[t,j]}
  =
  \thetab_\ell^{[t,j-1]},
  \qquad
  \ell\neq j.
\end{align}

Starting from $\thetab^{[t,0]}=\thetab^{(t)}$, apply the whitening and
mode-specific update steps in
\eqref{eq:app_array_whitening}--\eqref{eq:ssfa_other_modes_fixed}
successively for $j=1,\ldots,d$. The resulting parameter value $\thetab^{(t+1)} = \thetab^{[t,d]}$ completes one iteration of the SSFA PX-EM procedure.

The expanded PX-EM parameters can also be recovered from the mode-specific reduced updates. Applying the results of Section~\ref{sec:app_fa_desc_m_step}, the expanded factor covariance matrix for mode $j$ is
\begin{align}
  \label{eq:ssfa_mode_gamma_update}
  \Gammab_j^{[t,j]}
  =
  \frac{1}{n_j}
  \Sbb_{j,zz}^{[t]}.
\end{align}
The corresponding expanded loading matrix and reduced loading matrix are
\begin{align}
  \label{eq:ssfa_mode_tilde_lambda_update}
  \tilde{\Lambdab}_j^{[t,j]}
  =
  \Lambdab_j^{[t,j]}
  \operatorname{chol}
  \left(
    \Gammab_j^{[t,j]}
  \right)^{-1}, \qquad   \Lambdab_j^{[t,j]}
  =
  \tilde{\Lambdab}_j^{[t,j]}
  \operatorname{chol}
  \left(
    \Gammab_j^{[t,j]}
  \right),
\end{align}
respectively. The expanded parameters reduce to the mode-$j$ loading matrix obtained in \eqref{eq:ssfa_mode_loading_update}.

Starting from $\tilde{\thetab}^{[t,0]}=\tilde{\thetab}^{(t)}$, apply \eqref{eq:ssfa_mode_gamma_update} and \eqref{eq:ssfa_mode_tilde_lambda_update} successively for $j=1,\ldots,d$. At the mode-$j$ step, the expanded parameters of the remaining modes are held fixed:
\begin{align*}
  \tilde{\thetab}_\ell^{[t,j]}
  =
  \tilde{\thetab}_\ell^{[t,j-1]},
  \qquad
  \ell\neq j.
\end{align*}
The resulting expanded parameter value $  \tilde{\thetab}^{(t+1)} = \tilde{\thetab}^{[t,d]}$ corresponds to the reduced parameter $\thetab^{(t+1)}=\thetab^{[t,d]}$.

\section{Identifiability of the mode-specific covariance matrices}
\label{app:identifiability_balancing}

Under the SSFA model in \eqref{eq:SSFA_model_intro}, if $\Xcal \sim \CC \Ncal_{p_1 \times \cdots \times p_d}\left(\mathbf{0}, \Sigmab_1, \ldots, \Sigmab_d\right)$, then $\operatorname{Cov}\left\{\operatorname{vec}(\Xcal) \right\} = \Sigmab_d\otimes\cdots\otimes\Sigmab_1$. This Kronecker product is identifiable, but its mode-specific factors are identifiable only up to multiplicative rescaling. In particular, for any $c_1,\ldots,c_d>0$ satisfying $\prod_{j=1}^d c_j=1$,
\begin{align*}
  (c_d\Sigmab_d)\otimes\cdots\otimes(c_1\Sigmab_1)
  =
  \left(
    \prod_{j=1}^d c_j
  \right)
  \left(
    \Sigmab_d\otimes\cdots\otimes\Sigmab_1
  \right)
  =
  \Sigmab_d\otimes\cdots\otimes\Sigmab_1.
\end{align*}

Among all admissible multiplicative rescalings, we impose a balancing constraint to select a unique representative. We first establish the result for diagonal matrices with positive diagonal entries. The following proposition shows that equating their minimum diagonal entries uniquely determines the rescaling vector.

\begin{proposition}
\label{prop:balancing_identifiability}

Let $\Psib_1,\ldots,\Psib_d$ be diagonal matrices with positive diagonal entries. There exists a unique vector $(c_1,\ldots,c_d)$ satisfying $c_j>0$, $j=1,\ldots,d$, and $\prod_{j=1}^d c_j=1$ such that
\begin{align}
  \min\left\{\diag(c_1\Psib_1)\right\}
  =
  \cdots
  =
  \min\left\{\diag(c_d\Psib_d)\right\}.
  \label{eq:min-diag}
\end{align}
The unique rescaling constants are
\begin{align}
  c_j
  &=
  \frac{\overline{\psi}_{\min}}{\psi_{j,\min}},
  \qquad
  \psi_{j,\min}
  =
  \min\left\{\diag(\Psib_j)\right\},
  \qquad
  \overline{\psi}_{\min}
  =
  \left(
    \prod_{\ell=1}^d\psi_{\ell,\min}
  \right)^{1/d}.
  \label{eq:iden-c}
\end{align}
The tuple $(c_1\Psib_1,\ldots,c_d\Psib_d)$ is the unique admissible rescaling satisfying \eqref{eq:min-diag}.
\end{proposition}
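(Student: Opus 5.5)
The plan is to reduce the condition \eqref{eq:min-diag} to a linear system in log-coordinates. The first step is the observation that for $c_j>0$ the minimum diagonal entry scales linearly, $\min\{\diag(c_j\Psib_j)\} = c_j\psi_{j,\min}$, because multiplying every diagonal entry by the positive constant $c_j$ preserves their ordering. The constraint \eqref{eq:min-diag} is therefore equivalent to the existence of a common value $m>0$ with $c_j\psi_{j,\min}=m$ for all $j$, that is, $c_j=m/\psi_{j,\min}$. Here $\psi_{j,\min}>0$ because the diagonal entries of $\Psib_j$ are positive.

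For existence, I would verify directly that the constants in \eqref{eq:iden-c} work. Each $c_j=\overline{\psi}_{\min}/\psi_{j,\min}$ is positive. The product is
\begin{align*}
\prod_{j=1}^d c_j=\overline{\psi}_{\min}^{\,d}\Big/\prod_{j=1}^d\psi_{j,\min}=1,
\end{align*}
and $c_j\psi_{j,\min}=\overline{\psi}_{\min}$ for every $j$, so \eqref{eq:min-diag} holds with common value $m=\overline{\psi}_{\min}$.

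For uniqueness, take any admissible $(c_1,\ldots,c_d)$ satisfying \eqref{eq:min-diag}, with common value $m$. By the first step, $c_j=m/\psi_{j,\min}$ for each $j$. Imposing $\prod_j c_j=1$ gives $m^d=\prod_j\psi_{j,\min}$, and since $m>0$ the positive $d$th root is forced, so $m=\overline{\psi}_{\min}$. Hence each $c_j$ coincides with \eqref{eq:iden-c}, and the rescaled tuple $(c_1\Psib_1,\ldots,c_d\Psib_d)$ is unique as well.

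There is no substantial obstacle. The only point requiring care is the scaling identity $\min\{\diag(c\Psib)\}=c\min\{\diag(\Psib)\}$, which uses $c>0$, together with the fact that positivity of $m$ selects the unique positive root. Once Proposition~\ref{prop:balancing_identifiability} is in hand, Proposition~\ref{prop:balancing_factor_identifiability_main} follows by applying it to $\Psib_1,\ldots,\Psib_d$. One then checks that $\widetilde{\Sigmab}_j=c_j\Sigmab_j$ and that the Kronecker product is multiplied by $\prod_j c_j=1$, as shown at the start of this appendix.
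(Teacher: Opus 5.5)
Your proof is correct and follows essentially the same route as the paper: you verify the explicit constants $c_j=\overline{\psi}_{\min}/\psi_{j,\min}$ directly for existence, then show that any admissible vector must satisfy $c_j=m/\psi_{j,\min}$, with the product constraint and $m>0$ forcing $m=\overline{\psi}_{\min}$. Your explicit remark that $\min\{\diag(c\Psib)\}=c\min\{\diag(\Psib)\}$ for $c>0$ spells out a step the paper uses without comment, and the ``log-coordinates'' framing is never actually needed in your argument.
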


\begin{proof}
Define
\begin{align*}
  c_j
  &=
  \frac{\overline{\psi}_{\min}}{\psi_{j,\min}},
  \qquad
  j=1,\ldots,d.
\end{align*}
Because $\psi_{j,\min}>0$, each $c_j$ is positive. Furthermore, $(c_1, \ldots, c_d)$ satisfies
\begin{align*}
  \prod_{j=1}^d c_j
  &=
  \frac{\overline{\psi}_{\min}^{\,d}}
       {\prod_{j=1}^d\psi_{j,\min}}
  =
  1,
\end{align*}
so $(c_1,\ldots,c_d)$ is an admissible rescaling vector. For $j=1,\ldots,d$, let $\widetilde{\Psib}_j=c_j\Psib_j$. Then,
\begin{align*}
  \min\left\{
    \diag(\widetilde{\Psib}_j)
  \right\}
  &=
  c_j\psi_{j,\min}
  =
  \overline{\psi}_{\min},
  \qquad
  j=1,\ldots,d.
\end{align*}
Therefore, the rescaled matrices obtained using $(c_1,\ldots,c_d)$  satisfy \eqref{eq:min-diag}.

To establish uniqueness of the rescaling vector, let $(h_1,\ldots,h_d)$ be any admissible rescaling vector satisfying \eqref{eq:min-diag}. There exists a constant $a>0$ such that
\begin{align*}
  h_j\psi_{j,\min}
  &=
  a,
  \qquad
  j=1,\ldots,d.
\end{align*}
Therefore,
\begin{align*}
  h_j
  &=
  \frac{a}{\psi_{j,\min}},
  \qquad
  j=1,\ldots,d.
\end{align*}
Using $\prod_{j=1}^d h_j=1$ gives
\begin{align*}
  1
  &=
  \prod_{j=1}^d
  \frac{a}{\psi_{j,\min}}
  =
  \frac{a^d}
       {\prod_{j=1}^d\psi_{j,\min}},
\end{align*}
and
\begin{align*}
  a
  &=
  \left(
    \prod_{j=1}^d\psi_{j,\min}
  \right)^{1/d}
  =
  \overline{\psi}_{\min}.
\end{align*}
It follows that $h_j=\overline{\psi}_{\min}/\psi_{j,\min}=c_j$ for every $j$. Therefore, the admissible rescaling vector satisfying \eqref{eq:min-diag} is unique. The proposition is proved.
\end{proof}

The following corollary extends Proposition~\ref{prop:balancing_identifiability} to the mode-specific covariance matrices in \eqref{eq:SSFA_model_intro}.

\begin{corollary}
\label{cor:balancing_factor_identifiability}

For $j=1,\ldots,d$, suppose that $\Sigmab_j\in\CC^{p_j\times p_j}$ is Hermitian positive definite with factor-analytic representation
\begin{align}
  \Sigmab_j
  &=
  \Lambdab_j\Lambdab_j^\star+\Psib_j,
  \qquad
  \Lambdab_j\in\CC^{p_j\times k_j},
  \label{eq:app-cor}
\end{align}
where $\Psib_j$ is a real-valued diagonal matrix with positive diagonal entries. Let $(c_1,\ldots,c_d)$ be the rescaling constants in \eqref{eq:iden-c}, and define
\begin{align*}
  \widetilde{\Sigmab}_j
  &=
  c_j\Sigmab_j,
  \qquad
  j=1,\ldots,d.
\end{align*}
Each $\widetilde{\Sigmab}_j$ admits the factor-analytic representation
\begin{align}
  \widetilde{\Sigmab}_j
  &=
  \Lb_j\Lb_j^\star+\Db_j,
  \qquad
  \Lb_j
  =
  c_j^{1/2}\Lambdab_j,
  \qquad
  \Db_j
  =
  c_j\Psib_j,
  \qquad
  j=1,\ldots,d,
  \label{eq:app-cor-2}
\end{align}
and the rescaled diagonal matrices satisfy
\begin{align}
  \min\left\{\diag(\Db_1)\right\}
  &=
  \cdots
  =
  \min\left\{\diag(\Db_d)\right\}.
  \label{eq:app-cor-3}
\end{align}
Furthermore, $\widetilde{\Sigmab}_d \otimes\cdots\otimes \widetilde{\Sigmab}_1 = \Sigmab_d \otimes\cdots\otimes \Sigmab_1$. The tuple $(\widetilde{\Sigmab}_1,\ldots,\widetilde{\Sigmab}_d)$ is the unique admissible multiplicative rescaling of $(\Sigmab_1,\ldots,\Sigmab_d)$ for which the corresponding diagonal matrices $(\Db_1,\ldots,\Db_d)$ satisfy \eqref{eq:app-cor-3}.
\end{corollary}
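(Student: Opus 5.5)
The corollary follows from Proposition~\ref{prop:balancing_identifiability} applied to $\Psib_1,\ldots,\Psib_d$, together with elementary facts about scalar multiples and Kronecker products. I would take the pieces in the order they appear in the statement.

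\textbf{Factor-analytic form.} Since $c_j>0$, $c_j^{1/2}$ is a real positive scalar. Hence
\begin{align*}
  (c_j^{1/2}\Lambdab_j)(c_j^{1/2}\Lambdab_j)^\star = c_j\Lambdab_j\Lambdab_j^\star .
\end{align*}
Adding $c_j\Psib_j$ gives $c_j\Sigmab_j=\widetilde{\Sigmab}_j$, which is \eqref{eq:app-cor-2}. Moreover, $\Db_j=c_j\Psib_j$ is still real, diagonal, with positive diagonal entries, so the representation is again of factor-analytic type.

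\textbf{Balancing condition.} Scaling by a positive constant commutes with taking the minimum of the diagonal, so $\min\{\diag(\Db_j)\}=c_j\psi_{j,\min}=\overline{\psi}_{\min}$ for every $j$. This gives \eqref{eq:app-cor-3}, exactly as in the existence part of Proposition~\ref{prop:balancing_identifiability}.

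\textbf{Kronecker product.} By multilinearity of the Kronecker product,
\begin{align*}
  (c_d\Sigmab_d)\otimes\cdots\otimes(c_1\Sigmab_1) = \Bigl(\prod_{j=1}^d c_j\Bigr)\,\Sigmab_d\otimes\cdots\otimes\Sigmab_1 .
\end{align*}
The product of the $c_j$ equals one by Proposition~\ref{prop:balancing_identifiability}, so the Kronecker covariance is unchanged.

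\textbf{Uniqueness.} I expect this to be the only step needing care, because it requires fixing what ``the corresponding diagonal matrices'' means for an arbitrary rescaling. I would define an admissible rescaling as $(h_1\Sigmab_1,\ldots,h_d\Sigmab_d)$ with $h_j>0$ and $\prod_j h_j=1$. Its associated factor-analytic representation is $(h_j^{1/2}\Lambdab_j,\,h_j\Psib_j)$, obtained by the same scaling as in the first step. The diagonal parts are then $h_j\Psib_j$. If these satisfy \eqref{eq:app-cor-3}, the uniqueness part of Proposition~\ref{prop:balancing_identifiability} forces $h_j=c_j$ for all $j$. Therefore $h_j\Sigmab_j=\widetilde{\Sigmab}_j$, and the tuple is unique.

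I would also note that the rescaled matrices $h_j\Sigmab_j$ determine the vector $(h_j)$, since each $\Sigmab_j\neq\zero$. So uniqueness of the scalars is equivalent to uniqueness of the tuple of covariance matrices.
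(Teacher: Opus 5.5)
Your proposal is correct and follows the paper's proof essentially step by step. You obtain the factor-analytic form by real positive scaling, the balancing condition from the existence part of Proposition~\ref{prop:balancing_identifiability}, and Kronecker invariance from multilinearity with $\prod_j c_j=1$; uniqueness comes from the uniqueness part of that proposition applied to $h_j\Psib_j$. Your added remark that $h_j\Sigmab_j$ determines $h_j$ (because $\Sigmab_j\neq\zero$) is a small well-definedness point that the paper leaves implicit.
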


\begin{proof}
  By Proposition~\ref{prop:balancing_identifiability}, there exists a unique admissible rescaling vector $(c_1,\ldots,c_d)$ such that the minimum diagonal entries of $c_1\Psib_1,\ldots,c_d\Psib_d$ are equal. Define
  \begin{align*}
      \widetilde{\Sigmab}_j=c_j\Sigmab_j,
  \qquad
  \Lb_j=c_j^{1/2}\Lambdab_j,
  \qquad
  \Db_j=c_j\Psib_j,
  \qquad
  j=1,\ldots,d.
  \end{align*}
Using the factor-analytic representation of $\Sigmab_j$,
\begin{align}
  \widetilde{\Sigmab}_j
  &=
  c_j\Sigmab_j
  =
  c_j
  \left(
    \Lambdab_j\Lambdab_j^\star+\Psib_j
  \right)
  =
  \left(
    c_j^{1/2}\Lambdab_j
  \right)
  \left(
    c_j^{1/2}\Lambdab_j
  \right)^\star
  +
  c_j\Psib_j
  =
  \Lb_j\Lb_j^\star+\Db_j.
  \label{eq:id-map}
\end{align}
This proves \eqref{eq:app-cor-2}. Proposition~\ref{prop:balancing_identifiability}
also gives
\begin{align*}
  \min\left\{\diag(\Db_1)\right\}
  =
  \cdots
  =
  \min\left\{\diag(\Db_d)\right\},
\end{align*}
which proves \eqref{eq:app-cor-3}. Because $\prod_{j=1}^d c_j=1$,
\begin{align*}
  \widetilde{\Sigmab}_d
  \otimes\cdots\otimes
  \widetilde{\Sigmab}_1
  =
  (c_d\Sigmab_d)
  \otimes\cdots\otimes
  (c_1\Sigmab_1)
  =
  \left(
    \prod_{j=1}^d c_j
  \right)
  \left(
    \Sigmab_d\otimes\cdots\otimes\Sigmab_1
  \right)
  =
  \Sigmab_d\otimes\cdots\otimes\Sigmab_1.
\end{align*}

To establish uniqueness, let $(b_1,\ldots,b_d)$ be another admissible rescaling vector such that $b_1\Psib_1,\ldots,b_d\Psib_d$ satisfy \eqref{eq:app-cor-3}. Proposition~\ref{prop:balancing_identifiability} implies that $b_j=c_j$ for every $j$, so $  b_j\Sigmab_j = c_j\Sigmab_j = \widetilde{\Sigmab}_j$ for $j=1,\ldots,d$. Therefore, $(\widetilde{\Sigmab}_1,\ldots,\widetilde{\Sigmab}_d)$ is the unique admissible multiplicative rescaling whose corresponding diagonal matrices satisfy \eqref{eq:app-cor-3}.
\end{proof}

\section{Model selection using EBIC}
\label{sec:ebic_description}

After fitting SSFA  \eqref{eq:SSFA_model_intro} over a grid of tuning parameters constructed as described in Section~\ref{sec:rho_grid}, we select $\rho$ using the extended Bayesian information criterion (EBIC) \citep{chen2008extended}. For a fixed $\rho$, the EBIC is
\begin{align*}
  \mathrm{EBIC}(\rho)
  =
  -2 \log \Lcal(\widehat{\thetab}_{\rho})
  +
  h_{\rho}\log(np)
  +
  2\gamma \log \binom{m}{h_{\rho}}.
\end{align*}
where $\widehat {\thetab}_{\rho}$ is the parameter estimate when the tuning parameter equals $\rho$,
$\Lcal(\widehat{\thetab}_{\rho})$ denotes the likelihood evaluated at $\widehat{\thetab}_{\rho}$, $h_{\rho}$ denotes the number of nonzero entries in $\widehat {\thetab}_{\rho}$, $n$ is the number of observations, and $p=\prod_{j=1}^d p_j$ is the total number of variables in a vectorized observation. The quantity $m$ denotes the total number of candidate parameters subject to selection, and $\gamma \in [0,1]$ controls the additional complexity penalty of the EBIC. In our implementation, we set $\gamma = 1$.

\section{Loading matrix estimation accuracy}
\label{sec:load-err}

\subsection{Null-space projection error and principal angles}
\label{sec:supp-null-proj-true-proj}

The following proposition justifies the null-space projection error as a measure of accuracy for the loading spaces estimated by SSFA.

\begin{proposition}
\label{prop:null-proj-true-proj}
For $j=1,\ldots,d$, let $\Lambdab_j\in\CC^{p_j\times k_j}$ denote the true loading matrix and $\widehat{\Lambdab}_j$ denote its estimate. Define the orthogonal projectors onto their column spaces by
\begin{align*}
  \Pb_j
  &=
  \Lambdab_j
  \left(
    \Lambdab_j^\star\Lambdab_j
  \right)^+
  \Lambdab_j^\star,
  \qquad
  \widehat{\Pb}_j
  =
  \widehat{\Lambdab}_j
  \left(
    \widehat{\Lambdab}_j^\star
    \widehat{\Lambdab}_j
  \right)^+
  \widehat{\Lambdab}_j^\star,
\end{align*}
where $+$ denotes the Moore--Penrose pseudoinverse. The null-space projection error is
\begin{align}
  \operatorname{err}_{\Pb_j}^2
  &=
  \left\|
    \left(
      \Ib_{p_j}-\widehat{\Pb}_j
    \right)
    \Pb_j
  \right\|_F^2.
  \label{eq:appendix-null-err-proj}
\end{align}
Let
\begin{align*}
  r_j
  &=
  \operatorname{rank}(\Lambdab_j),
  \qquad
  \widehat{r}_j
  =
  \operatorname{rank}(\widehat{\Lambdab}_j),
  \qquad
  r_{\min,j}
  =
  \min(r_j,\widehat{r}_j).
\end{align*}
Denote the principal angles between the column spaces of $\Lambdab_j$ and $\widehat{\Lambdab}_j$ by
\begin{align*}
    0
  \leq
  \theta_{(j)1}
  \leq
  \cdots
  \leq
  \theta_{(j)r_{\min,j}}
  \leq
  \frac{\pi}{2}.
\end{align*}
Then,
\begin{align}
  \operatorname{err}_{\Pb_j}^2
  &=
  r_j-r_{\min,j}
  +
  \sum_{i=1}^{r_{\min,j}}
  \sin^2\left(\theta_{(j)i}\right),
  \label{eq:appendix-null-err-proj-angle}
\end{align}
where the sum \eqref{eq:appendix-null-err-proj-angle} is interpreted as zero when $r_{\min,j}=0$.
\end{proposition}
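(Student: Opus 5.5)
We would prove \eqref{eq:appendix-null-err-proj-angle} by rewriting the Frobenius norm as a trace and then choosing orthonormal bases adapted to the principal angles. Since $\Ib_{p_j}-\widehat{\Pb}_j$ and $\Pb_j$ are orthogonal projectors (Hermitian and idempotent), the first step is
\begin{align*}
  \operatorname{err}_{\Pb_j}^2
  =
  \tr\left[\Pb_j(\Ib_{p_j}-\widehat{\Pb}_j)\Pb_j\right]
  =
  \tr(\Pb_j)-\tr(\Pb_j\widehat{\Pb}_j\Pb_j)
  =
  r_j-\left\|\widehat{\Pb}_j\Pb_j\right\|_F^2,
\end{align*}
using $\tr(\Pb_j)=r_j$. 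The Moore--Penrose formula guarantees that $\Pb_j$ and $\widehat{\Pb}_j$ are the orthogonal projectors onto the column spaces even when $\Lambdab_j$ or $\widehat{\Lambdab}_j$ is rank deficient, which matters because sparse estimates may contain zero columns.

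For the second step, take $\Ub\in\CC^{p_j\times r_j}$ and $\widehat{\Ub}\in\CC^{p_j\times\widehat r_j}$ with orthonormal columns spanning the two column spaces. Then $\Pb_j=\Ub\Ub^\star$ and $\widehat{\Pb}_j=\widehat{\Ub}\widehat{\Ub}^\star$. Unitary invariance of the Frobenius norm gives $\|\widehat{\Pb}_j\Pb_j\|_F^2=\|\widehat{\Ub}^\star\Ub\|_F^2$. By the complex definition of principal angles (Björck--Golub), the singular values of $\widehat{\Ub}^\star\Ub\in\CC^{\widehat r_j\times r_j}$ are $\cos\theta_{(j)1},\ldots,\cos\theta_{(j)r_{\min,j}}$. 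There are exactly $r_{\min,j}$ of them, since the matrix has $\min(r_j,\widehat r_j)$ singular values. Hence $\|\widehat{\Ub}^\star\Ub\|_F^2=\sum_{i=1}^{r_{\min,j}}\cos^2\theta_{(j)i}$.

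Substituting and writing $r_j=(r_j-r_{\min,j})+r_{\min,j}$ gives
\begin{align*}
  \operatorname{err}_{\Pb_j}^2
  =
  r_j-r_{\min,j}+\sum_{i=1}^{r_{\min,j}}\left(1-\cos^2\theta_{(j)i}\right),
\end{align*}
which is \eqref{eq:appendix-null-err-proj-angle}. The degenerate case $r_{\min,j}=0$ needs a separate check. If $\widehat{\Lambdab}_j=\zero$, then $\widehat{\Pb}_j=\zero$ and the error equals $r_j$. If $\Lambdab_j=\zero$, then $\Pb_j=\zero$ and the error equals $0=r_j$. Both agree with the convention that the empty sum is zero.

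The main obstacle is the second step: matching the paper's notion of principal angles with the singular values of $\widehat{\Ub}^\star\Ub$ in the complex setting. We would settle this by adopting the variational definition $\cos\theta_i=\max|\ub^\star\vb|$ over successively orthogonal unit vectors. The cosines are then the singular values, and taking the modulus handles the complex inner product. We would also note that the resulting quantities do not depend on which orthonormal bases are chosen.
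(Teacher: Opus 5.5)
Your proposal is correct and follows essentially the same route as the paper: you rewrite the squared Frobenius norm as $r_j-\operatorname{tr}(\Pb_j\widehat{\Pb}_j\Pb_j)=r_j-\|\widehat{\Qb}_j^\star\Qb_j\|_F^2$ using orthonormal bases, and then identify the singular values of $\widehat{\Qb}_j^\star\Qb_j$ with the cosines of the principal angles. Your explicit check of the degenerate case $r_{\min,j}=0$ is a minor addition that the paper leaves implicit.
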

The term $r_j-r_{\min,j}$ measures the loss from underestimating the dimension of the true loading space, whereas the sum measures angular disagreement over the dimensions shared by the two spaces.

\begin{proof}
Let $\Qb_j\in\CC^{p_j\times r_j}$ and $\widehat{\Qb}_j\in\CC^{p_j\times\widehat r_j}$ have orthonormal columns spanning the column spaces of $\Lambdab_j$ and $\widehat{\Lambdab}_j$, respectively. Then,
\begin{align*}
  \Pb_j
  &=
  \Qb_j\Qb_j^\star,
  \qquad
  \widehat{\Pb}_j
  =
  \widehat{\Qb}_j\widehat{\Qb}_j^\star.
\end{align*}
Because $\Pb_j$ and $\widehat{\Pb}_j$ are Hermitian and idempotent,
\begin{align*}
  \operatorname{err}_{\Pb_j}^2
  &=
  \operatorname{tr}
  \left[
    \Pb_j
    \left(
      \Ib_{p_j}-\widehat{\Pb}_j
    \right)^2
    \Pb_j
  \right]
  =
  \operatorname{tr}
  \left[
    \Pb_j
    \left(
      \Ib_{p_j}-\widehat{\Pb}_j
    \right)
    \Pb_j
  \right]
  =
  \operatorname{tr}(\Pb_j)
  -
  \operatorname{tr}
  \left(
    \Pb_j\widehat{\Pb}_j\Pb_j
  \right)
  \\
  &=
  r_j
  -
  \operatorname{tr}
  \left(
    \Qb_j^\star
    \widehat{\Qb}_j
    \widehat{\Qb}_j^\star
    \Qb_j
  \right)
  =
  r_j
  -
  \left\|
    \widehat{\Qb}_j^\star\Qb_j
  \right\|_F^2.
\end{align*}
By the definition of principal angles, the singular values of $\widehat{\Qb}_j^\star\Qb_j$ are $\cos(\theta_{(j)1}),\ldots, \cos(\theta_{(j)r_{\min,j}})$. Therefore,
\begin{align*}
  \left\|
    \widehat{\Qb}_j^\star\Qb_j
  \right\|_F^2
  &=
  \sum_{i=1}^{r_{\min,j}}
  \cos^2\left(\theta_{(j)i}\right).
\end{align*}
Substituting this identity into the preceding expression gives
\begin{align*}
  \operatorname{err}_{\Pb_j}^2
  &=
  r_j
  -
  \sum_{i=1}^{r_{\min,j}}
  \cos^2\left(\theta_{(j)i}\right)
  =
  r_j-r_{\min,j}
  +
  \sum_{i=1}^{r_{\min,j}}
  \left[
    1-\cos^2\left(\theta_{(j)i}\right)
  \right]
  \\
  &=
  r_j-r_{\min,j}
  +
  \sum_{i=1}^{r_{\min,j}}
  \sin^2\left(\theta_{(j)i}\right),
\end{align*}
which proves the result.
\end{proof}

\subsection{Weighted null-space projection error}
\label{sec:supp-null-proj-lambda}

The following proposition shows that the weighted null-space projection error is a weighted average of the squared sines of the angles between the left singular vectors of the true loading matrix and the estimated loading space, with weights proportional to the squared singular values.

\begin{proposition}
\label{prop:null-proj-lambda}

For $j=1,\ldots,d$, let $\Lambdab_j\in\CC^{p_j\times k_j}$ be a nonzero true loading matrix, and let $\widehat{\Lambdab}_j$ denote its estimate. Define the orthogonal projector onto the column space of $\widehat{\Lambdab}_j$ by
\begin{align*}
  \widehat{\Pb}_j
  &=
  \widehat{\Lambdab}_j
  \left(
    \widehat{\Lambdab}_j^\star
    \widehat{\Lambdab}_j
  \right)^+
  \widehat{\Lambdab}_j^\star,
\end{align*}
where $+$ denotes the Moore--Penrose pseudoinverse. The weighted null-space projection error is
\begin{align}
  \operatorname{err}_{\Lambdab_j}^2
  &=
  \frac{
    \left\|
      \left(
        \Ib_{p_j}-\widehat{\Pb}_j
      \right)
      \Lambdab_j
    \right\|_F^2
  }{
    \|\Lambdab_j\|_F^2
  }.
  \label{eq:appendix-null-err}
\end{align}

Let $  \Lambdab_j = \Ub_j\Deltab_j\Vb_j^\star$ be a ``compact'' singular value decomposition of $\Lambdab_j$, where
$r_j=\operatorname{rank}(\Lambdab_j)$. For $i=1,\ldots,r_j$, let $\ub_{(j)i}$ denote the $i$th column of $\Ub_j$, and let $d_{(j)i}>0$ be the corresponding singular value. Define $\omega_{(j)i}\in[0,\pi/2]$ by
\begin{align}
  \cos\left(\omega_{(j)i}\right)
  &=
  \left\|
    \widehat{\Qb}_j^\star\ub_{(j)i}
  \right\|_2,
  \label{eq:loading_vector_subspace_angle}
\end{align}
where the columns of $\widehat{\Qb}_j$ form an orthonormal basis for the column space of $\widehat{\Lambdab}_j$. The
angle between $\ub_{(j)i}$ and the estimated loading space is $\omega_{(j)i}$. Then,
\begin{align}
  \operatorname{err}_{\Lambdab_j}^2
  &=
  \sum_{i=1}^{r_j}
  v_{(j)i}
  \sin^2\left(\omega_{(j)i}\right),
  \qquad
  v_{(j)i}
  =
  \frac{
    d_{(j)i}^2
  }{
    \displaystyle
    \sum_{\ell=1}^{r_j}d_{(j)\ell}^2
  }.
  \label{eq:appendix-null-err-angle}
\end{align}
In particular, $\operatorname{err}_{\Lambdab_j}^2\in[0,1]$.
\end{proposition}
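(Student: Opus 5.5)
The plan is to expand the numerator of \eqref{eq:appendix-null-err} using the compact singular value decomposition $\Lambdab_j=\Ub_j\Deltab_j\Vb_j^\star$. Write $\widehat{\Pb}_j=\widehat{\Qb}_j\widehat{\Qb}_j^\star$, so that $\Ib_{p_j}-\widehat{\Pb}_j$ is Hermitian and idempotent; this is the same observation used in the proof of Proposition~\ref{prop:null-proj-true-proj}. Because $\Vb_j$ has orthonormal columns, right multiplication by $\Vb_j^\star$ preserves the Frobenius norm. Hence
\begin{align*}
  \left\|(\Ib_{p_j}-\widehat{\Pb}_j)\Lambdab_j\right\|_F^2
  =
  \left\|(\Ib_{p_j}-\widehat{\Pb}_j)\Ub_j\Deltab_j\right\|_F^2
  =
  \sum_{i=1}^{r_j} d_{(j)i}^2
  \left\|(\Ib_{p_j}-\widehat{\Pb}_j)\ub_{(j)i}\right\|_2^2 .
\end{align*}
The second equality holds because $\Deltab_j$ is diagonal, so the Frobenius norm splits column by column.

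Next I would evaluate each column term. Since $\Ib_{p_j}-\widehat{\Pb}_j$ is an orthogonal projector and $\|\ub_{(j)i}\|_2=1$, we have
\begin{align*}
  \left\|(\Ib_{p_j}-\widehat{\Pb}_j)\ub_{(j)i}\right\|_2^2
  = 1-\ub_{(j)i}^\star\widehat{\Pb}_j\ub_{(j)i}
  = 1-\left\|\widehat{\Qb}_j^\star\ub_{(j)i}\right\|_2^2
  = \sin^2(\omega_{(j)i}),
\end{align*}
using the definition \eqref{eq:loading_vector_subspace_angle}. The angle $\omega_{(j)i}$ is well defined in $[0,\pi/2]$ because $\|\widehat{\Qb}_j^\star\ub_{(j)i}\|_2\le\|\ub_{(j)i}\|_2=1$. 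If $\widehat{\Lambdab}_j=\zero$, then $\widehat{\Qb}_j$ is empty, $\widehat{\Pb}_j=\zero$, the cosine is $0$, and the identity still holds.

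For the denominator, $\|\Lambdab_j\|_F^2=\sum_{\ell=1}^{r_j}d_{(j)\ell}^2$, which is positive because $\Lambdab_j\neq\zero$. Dividing gives \eqref{eq:appendix-null-err-angle}. The weights $v_{(j)i}$ are nonnegative and sum to one, and each $\sin^2(\omega_{(j)i})$ lies in $[0,1]$. The error is therefore a convex combination of numbers in $[0,1]$, so $\operatorname{err}_{\Lambdab_j}^2\in[0,1]$.

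No step is a real obstacle. The only points that need care are the invariance of the Frobenius norm under right multiplication by the co-isometry $\Vb_j^\star$, and the degenerate case $\widehat{\Lambdab}_j=\zero$; both are handled above.
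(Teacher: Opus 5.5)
Your proof is correct and follows the paper's argument essentially step for step. You drop $\Vb_j^\star$ by orthonormality, split the Frobenius norm over the columns of $\Ub_j\Deltab_j$, identify each column term as $\sin^2(\omega_{(j)i})$, and divide by $\sum_{\ell} d_{(j)\ell}^2$ to obtain a convex combination; your explicit handling of the degenerate case $\widehat{\Lambdab}_j=\zero$ is a small addition that the paper leaves implicit.
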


\begin{proof}
Using the compact singular value decomposition of $\Lambdab_j$,
\begin{align*}
  \left\|
    \left(
      \Ib_{p_j}-\widehat{\Pb}_j
    \right)
    \Lambdab_j
  \right\|_F^2
  &=
  \left\|
    \left(
      \Ib_{p_j}-\widehat{\Pb}_j
    \right)
    \Ub_j\Deltab_j\Vb_j^\star
  \right\|_F^2
  =
  \left\|
    \left(
      \Ib_{p_j}-\widehat{\Pb}_j
    \right)
    \Ub_j\Deltab_j
  \right\|_F^2\\
  &=
  \sum_{i=1}^{r_j}
  d_{(j)i}^2
  \left\|
    \left(
      \Ib_{p_j}-\widehat{\Pb}_j
    \right)
    \ub_{(j)i}
  \right\|_2^2.
\end{align*}
The second equality follows because $\Vb_j$ has orthonormal columns. Because $\|\ub_{(j)i}\|_2=1$ and $\widehat{\Pb}_j=\widehat{\Qb}_j\widehat{\Qb}_j^\star$ is an orthogonal projector,
\begin{align*}
  \left\|
    \left(
      \Ib_{p_j}-\widehat{\Pb}_j
    \right)
    \ub_{(j)i}
  \right\|_2^2
  &=
  1-
  \ub_{(j)i}^\star
  \widehat{\Pb}_j
  \ub_{(j)i}
  =
  1-
  \left\|
    \widehat{\Qb}_j^\star
    \ub_{(j)i}
  \right\|_2^2
  =
  1-\cos^2\left(\omega_{(j)i}\right)
  =
  \sin^2\left(\omega_{(j)i}\right).
\end{align*}
Therefore,
\begin{align}
  \left\|
    \left(
      \Ib_{p_j}-\widehat{\Pb}_j
    \right)
    \Lambdab_j
  \right\|_F^2
  &=
  \sum_{i=1}^{r_j}
  d_{(j)i}^2
  \sin^2\left(\omega_{(j)i}\right).
  \label{eq:appendix-null-err-reform1}
\end{align}
Furthermore,
\begin{align*}
  \|\Lambdab_j\|_F^2
  &=
  \|\Deltab_j\|_F^2
  =
  \sum_{i=1}^{r_j}d_{(j)i}^2.
\end{align*}
Dividing \eqref{eq:appendix-null-err-reform1} by $\|\Lambdab_j\|_F^2$ gives
\begin{align*}
  \operatorname{err}_{\Lambdab_j}^2
  &=
  \frac{
    \displaystyle
    \sum_{i=1}^{r_j}
    d_{(j)i}^2
    \sin^2\left(\omega_{(j)i}\right)
  }{
    \displaystyle
    \sum_{i=1}^{r_j}d_{(j)i}^2
  }
  =
  \sum_{i=1}^{r_j}
  v_{(j)i}
  \sin^2\left(\omega_{(j)i}\right).
\end{align*}
Because $v_{(j)i}\geq0$, $\sum_{i=1}^{r_j}v_{(j)i}=1$, and $\sin^2(\omega_{(j)i})\in[0,1]$, it follows that $\operatorname{err}_{\Lambdab_j}^2\in[0,1]$.
\end{proof}

\end{document}